\definecolor{my-green}{cmyk}{0.2, 0.04, 0.1, 0.04, 0.8}
\DeclarePairedDelimiter{\ceil}{\lceil}{\rceil}
\DeclarePairedDelimiter{\floor}{\lfloor}{\rfloor}
\newcommand{\pl}{Polyak-\L{}ojasiewicz}
\newcommand{\remove}[1]{}
\newtheorem{theorem}{Theorem}
\newtheorem{lemma}{Lemma}
\newtheorem{remark}{Remark}
\newtheorem{assumption}{Assumption}
\newtheorem{corollary}{Corollary}
\title{Local SGD with  Periodic Averaging: \\Tighter Analysis  and Adaptive Synchronization
}
\author{
  Farzin Haddadpour \\ 
  Penn State\\
  \texttt{fxh18@psu.edu} \\
   \And 
   Mohammad~Mahdi Kamani \\
  Penn State\\
  \texttt{mqk5591@psu.edu}
   \AND 
   Mehrdad Mahdavi \\
  Penn State \\
    \texttt{mzm616@psu.edu}
   \And
   Viveck~R. Cadambe \\
  Penn State\\
    \texttt{vxc12@psu.edu}
}
\begin{document}

\maketitle
\begin{abstract}
Communication overhead is one of the key challenges that hinders the scalability of distributed optimization algorithms. In this paper, we study local distributed SGD, where data is partitioned among computation nodes, and the computation nodes perform local updates with periodically exchanging the model among the workers to perform averaging. While local SGD is empirically shown to provide promising results, a theoretical understanding of its performance remains open. We strengthen  convergence analysis for local SGD, and show that local SGD can be far less expensive and applied far more generally than current theory suggests. Specifically, we show that for loss functions that satisfy the \pl~condition, $O((pT)^{1/3})$ rounds of communication suffice to achieve a linear speed up, that is, an error of $O(1/pT)$, where $T$ is the total number of model updates at each worker. This is in contrast with previous work which required higher number of communication rounds, as well as was limited to strongly convex loss functions, for a similar asymptotic performance. We also develop an adaptive synchronization scheme that provides a general condition for linear speed up. Finally, we validate the theory with experimental results, running over AWS EC2 clouds and an internal GPU cluster.\vspace{-0.25cm}
\end{abstract}
\section{Introduction}
We consider the problem of distributed empirical risk minimization, where a set of $p$ machines, each with access to a different local shard of training examples $\mathcal{D}_i, i=1,2,, \ldots, p$, attempt to jointly solve the following  optimization problem over entire data set $\mathcal{D} = \mathcal{D}_1 \cup \ldots \cup \mathcal{D}_p$ in parallel:
\begin{equation}
\min_{\mathbf{x}\in\mathbb{R}^d}F(\mathbf{x})\triangleq\frac{1}{p}\sum_{i=1}^p f(\mathbf{x};{\mathcal{D}}_i), \label{eq:main problem}
\end{equation}
where $f(\cdot; \mathcal{D}_i)$ is the training loss over the data shard $\mathcal{D}_i$. The predominant optimization methodology to solve  the above optimization problem is stochastic gradient descent (SGD), where the model parameters are iteratively updated by 
\begin{align}
    \mathbf{x}^{(t+1)}=\mathbf{x}^{(t)}-\eta\tilde{\mathbf{g}}^{(t)},\label{eq:gd-update}
\end{align}
where $\mathbf{x}^{(t)}$ and $\mathbf{x}^{(t+1)}$  are solutions at the $t$th and $(t+1)$th iterations, respectively, and  $\tilde{\mathbf{g}}^{(t)}$ is a stochastic gradient of the cost function evaluated on a small mini-batch of all data. 

In  this  paper~\footnote{This version fixes a mistake in proof of Theorem 1 in earlier version and  provides the analysis of full-batch GD in one-node setting in Appendix D to give insights on achievability of the obtained rates.},  we  are  particularly  interested  in  synchronous distributed stochastic  gradient descent  algorithms for non-convex optimization problems mainly due to their recent successes and popularity in deep learning models~\cite{paszke2017automatic,Seide2016CNTKMO,zhang2019Rec,pmlr-v80-zhang18f}. Parallelizing the updating rule in Eq.~(\ref{eq:gd-update}) can be done simply by replacing $\tilde{\mathbf{g}}^{(t)}$ 
with the average of partial gradients of each worker
over a random mini-batch sample of its own data shard. In fully synchronous SGD, the computation nodes, after evaluation of gradients over the sampled mini-batch, exchange their updates in every iteration to ensure that all nodes have the same updated model. Despite its ease of implementation, updating the model in fully synchronous SGD incurs significant amount of communication in terms of \textit{number of rounds} and \textit{amount of data exchanged} per communication round. The communication cost is, in fact, among the primary obstacles towards scaling distributed SGD to large scale deep learning applications~ \cite{seide20141,alistarh2017qsgd,zhang2017zipml,lin2017deep}.
A central idea that has emerged recently to reduce the communication overhead of vanilla distributed SGD, while preserving the linear speedup, is \emph{local SGD}, which is the focus of our work. In local SGD, the idea is to perform \emph{local} updates with periodic averaging, wherein machines update their own local models, and the models of the different nodes are averaged periodically \cite{yu2019parallel,wang2018cooperative,zinkevich2010parallelized,mcdonald2010distributed,zhang2016parallel,zhou2018convergence, stich2019local}. Because of local updates, the model averaging approach reduces the number of communication rounds in training and can, therefore, be much faster in practice. However, as the model for every iteration is not updated based on the entire data,  it suffers from a residual error with respect to fully synchronous SGD; but it can be shown that if the averaging period is chosen properly the residual error can be compensated. For instance, in~\cite{stich2019local} it has been shown that for strongly convex loss functions, with a fixed mini-batch size with local updates and periodic averaging, when $T$ model update iterations are performed at each node, the linear speedup of the parallel SGD is attainable only with $O\left(\sqrt{pT}\right)$ rounds of communication, with each node performing $\tau = O(\sqrt{T/p})$ local updates for every round. If $p < T$, this is a significant improvement than the naive parallel SGD which requires $T$ rounds of communication. This motivates us to study the following key question:~\emph{Can we reduce the number of communication rounds even more, and yet achieve linear speedup?}


 \begin{table}[t]
\centering
\caption{Comparison of different local-SGD with periodic averaging based algorithms.}\label{table:1}
\resizebox{1\linewidth}{!}{
\begin{tabular}{|ccccc|}
  \hline 
  Strategy &  Convergence Rate  & Communication Rounds $(T/\tau)$ & Extra Assumption & Setting \\
  \hline \hline
 \cite{yu2019parallel} &$O\left(\frac{G^2}{\sqrt{pT}}\right)$ &$O\left({p^{\frac{3}{4}}T^{\frac{3}{4}}}\right)$ & Bounded Gradients & Non-convex \\
 \hline
   \cite{wang2018cooperative} &  $O\left(\frac{1}{\sqrt{pT}}\right)$ & $O\left(p^{\frac{3}{2}}{T^{\frac{1}{2}}}\right)$ & No  & Non-convex\\
  \hline
   \cite{stich2019local} & {$O\left(\frac{G^2}{pT}\right)$}&$O\left(p^{\frac{1}{2}}T^{\frac{1}{2}}\right)$ & Bounded Gradients & Strongly Convex\\
  \hline

  \textbf{This Paper} & \boldmath{${O\left(\frac{1}{pT}\right)}$} & \boldmath{$O\left(p^{\frac{1}{3}}T^{\frac{1}{3}}\right)$} & \textbf{No} &\textbf{Non-convex under PL Condition}
  \\
   \hline
\end{tabular}}
\vspace{-0.5cm}
\end{table}

In this paper, we give an affirmative answer to this question by providing a tighter analysis of local SGD via model averaging  ~\cite{yu2019parallel,wang2018cooperative,zinkevich2010parallelized,mcdonald2010distributed,zhang2016parallel,zhou2018convergence}. By focusing on possibly non-convex loss functions that satisfy smoothness and the \pl~ condition \cite{karimi2016linear}, and performing a careful convergence analysis, we demonstrate that $O((pT)^{1/3})$ rounds of communication suffice to achieve linear speed up for local SGD.  To the best of our knowledge, this is the first work that presents bounds better than $O\left(\sqrt{pT}\right)$ on the communication complexity of local SGD with fixed minibatch sizes - our results are summarized in Table \ref{table:1}.  

The convergence analysis of periodic averaging, where the models are averaged across nodes after every $\tau$ local updates was shown at \cite{zhou2018convergence}, but it did not prove a linear speed up. For non-convex optimization~\cite{yu2019parallel} shows that by choosing the number of local updates $\tau=O\left({T^{\frac{1}{4}}}/{p^{\frac{3}{4}}}\right)$, model averaging achieves linear speedup. As a further improvement,~\cite{wang2018cooperative} shows that even by removing bounded gradient assumption and the choice of $\tau=O\left({{T}^{\frac{1}{2}}}/{p^{\frac{3}{2}}}\right)$,  linear speedup can be achieved for non-convex optimization. ~\cite{stich2019local} shows that by setting $\tau=O\left(T^{\frac{1}{2}}/p^{\frac{1}{2}}\right)$ linear speedup can be achieved \textcolor{black}{by $O\left(\sqrt{pT}\right)$ rounds of communication}.  The present work can be considered as a tightening of the aforementioned known results. In summary, the main contributions of this paper are highlighted as follows:
\begin{itemize}
\item We improve the upper bound over the number of local updates in~\cite{stich2019local} by establishing a linear speedup $O\left(1/pT \right)$ for  \textbf{non-convex optimization problems} under \textbf{\pl} condition  with $\tau=O\left({T^{\frac{2}{3}}}/{p^{\frac{1}{3}}}\right)$. Therefore, we show that $O\left(p^{\frac{1}{3}}T^{\frac{1}{3}}\right)$ communication rounds are sufficient, in contrast to previous work that showed a sufficiency of $O\left(\sqrt{pT}\right)$.  Importantly, our  analysis does not require  boundedness assumption for stochastic gradients unlike \cite{stich2019local}.  
\item We introduce an adaptive scheme for choosing the communication frequency and elaborate on conditions that  linear speedup can be achieved. We also  empirically verify that the adaptive scheme outperforms fix periodic averaging scheme.
\item  Finally, we complement our theoretical results with experimental results on Amazon  EC2 cluster and an internal GPU cluster.
\end{itemize}
 
\section{Other Related Work}
\textbf{Asynchronous parallel SGD.~} For large scale machine learning optimization problems parallel mini-batch SGD suffers from synchronization delay due to a few slow machines, slowing down entire computation. To mitigate synchronization delay, \emph{asynchronous} SGD method are studied in \cite{recht2011hogwild,de2015taming,lian2015asynchronous}. These methods, though faster than synchronized methods, lead to convergence error issues due to stale gradients. \cite{agarwal2011distributed}
shows that limited amount of delay can be tolerated while preserving linear speedup for convex optimization problems. {\color{black}Furthermore, \cite{zhou2018distributed} indicates that even  polynomially growing delays can be tolerated by utilizing a quasilinear step-size sequence, but without achieving linear speedup}.

\textbf{Gradient compression based schemes.~} A popular approach to {reduce the communication cost} is to  decrease the number of transmitted bits at each iteration via gradient compression. Limiting the number of bits in the floating point representation is studied at \cite{de2015taming,gupta2015deep, na2017chip}. In \cite{alistarh2017qsgd,wen2017terngrad,zhang2017zipml}, random quantization schemes are studied. Gradient vector sparsification is another approach analyzed in \cite{alistarh2017qsgd,wen2017terngrad,wangni2018gradient,bernstein2018signsgd, seide20141,strom2015scalable,dryden2016communication,aji2017sparse,sun2017meprop,lin2017deep,stich2018sparsified}.

 \textbf{Periodic model averaging.~} The \emph{one shot} averaging, which can be seen as an extreme case of model averaging, was introduced in \cite{zinkevich2010parallelized,mcdonald2010distributed}. In these works, it is shown empirically that one-shot averaging works well in a number of optimization problems. However, it is still an open problem whether the one-shot averaging can achieve the linear speed-up with respect to the number of workers. In fact, \cite{zhang2016parallel} shows that one-shot averaging can yield inaccurate solutions for certain non-convex optimization problems. As a potential solution, \cite{zhang2016parallel} suggests that more frequent averaging in the beginning can improve the performance. \cite{zhang2012communication,shamir2014distributed,godichon2017rates,jain2018parallelizing} represent statistical convergence analysis with only one-pass over the training data which usually is not enough for the training error to converge. Advantages of model averaging have been studied from an empirical point of view in \cite{povey2014parallel,chen2016scalable,mcmahan2017communication,su2015experiments,kamp2018efficient,lin2018don}. Specifically, they show that model averaging performs well empirically in terms of reducing communication cost for a given accuracy. Furthermore, for the case of $T=\tau$ the work \cite{jain2018parallelizing} provides speedup with respect to bias and variance for the quadratic square optimization problems. There is another line of research which aims to reduce communication cost by adding data redundancy. For instance, reference \cite{haddadpour2018cross} shows that by adding a controlled amount of redundancy through coding theoretic means, linear regression can be solved through one round of communication. Additionally, \cite{haddadpour2019trading} shows an interesting trade-off between the amount of data redundancy and the accuracy of local SGD for general non-convex optimization.

\begin{wrapfigure}{r}{0.5\textwidth}
\vspace{-0.7cm}
  \begin{center}
    \includegraphics[width=0.45\textwidth]{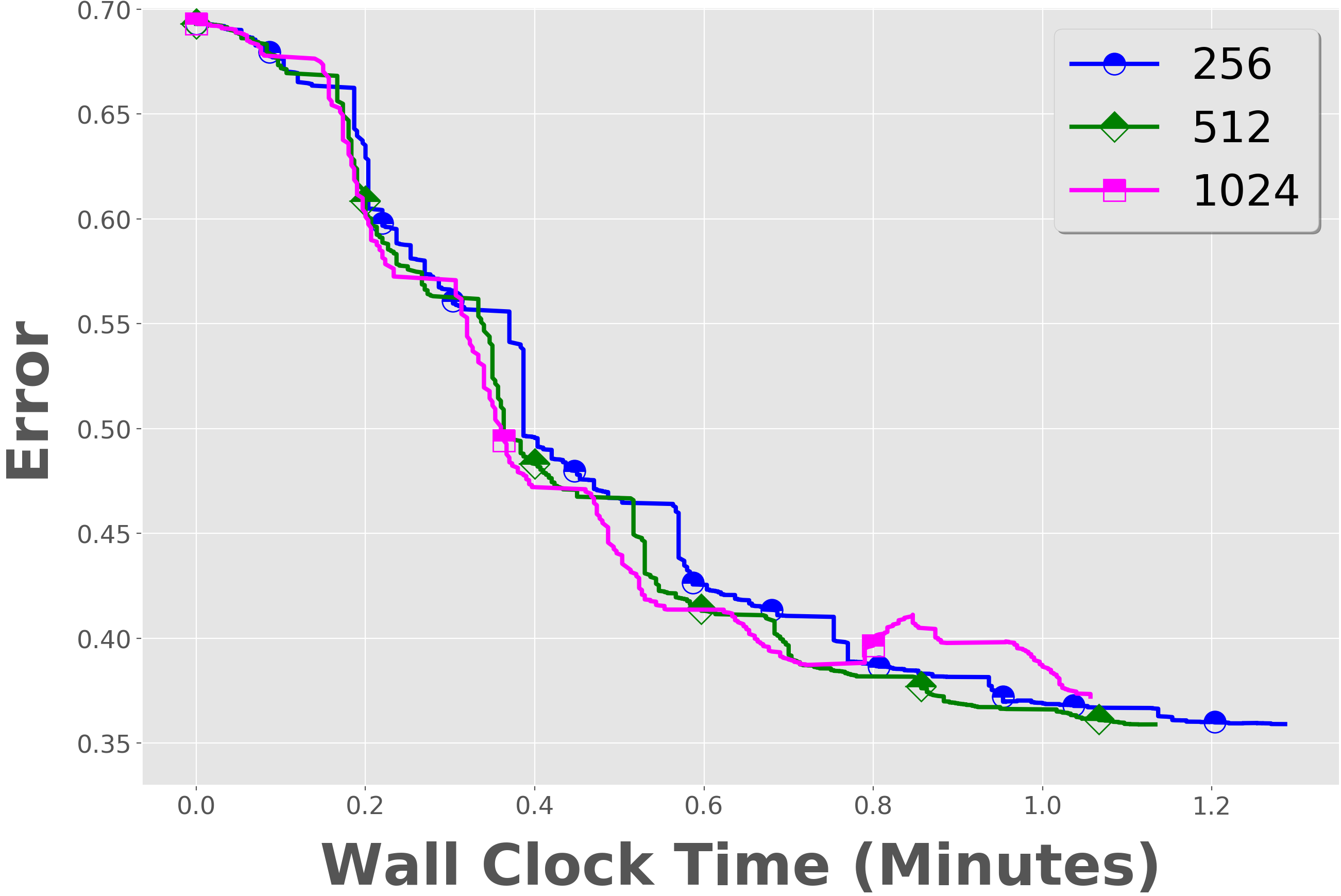}
  \end{center}
  \caption{Running SyncSGD for different number of mini-batches on Epsilon dataset with logistic regression. Increasing mini-batches can result in divergence as it is the case here for mini-batch size of $1024$ {comparing to mini-batch size of $512$}. For experiment setup please refer to Section~\ref{sec:exp}. A similar observation can be found in \cite{lin2018don}.}
  \label{fig:minibatch}\vspace{-0.5cm}
\end{wrapfigure}
\textbf{Parallel SGD with varying minbatch sizes.}
References \cite{friedlander2011hybrid,bottou2018optimization} show, for strongly convex stochastic minimization, that SGD with exponentially increasing batch sizes can achieve linear convergence rate on a single machine.
Recently, \cite{yu2019computation} has shown that remarkably, with exponentially growing mini-batch size it is possible to achieve linear speed up (i.e., error of $O(1/pT)$) with only $\log T$ iterations of the algorithm, and thereby, when implemented in a distributed setting, this corresponds to $\log T$ rounds of communication. The result of \cite{yu2019computation} implies that SGD with exponentially increasing batch sizes has a similar convergence behavior as the full-fledged (non-stochastic) gradient descent. While the algorithm of \cite{yu2019computation} provides a different way of reducing communication in distributed setting, for a large number of iterations, their algorithm will require large minibatches, and washes away the computational benefits of the \emph{stochastic} gradient descent algorithm  over its deterministic counter part. Furthermore certain real-world data sets, it is well known that larger minibatches also lead to poor generalization and gradient saturation that lead to significant performance gaps between the ideal and practical speed up \cite{golmant2018computational,ma2017power,yin2017gradient,lin2018don}. Our own experiments also reveal this (See Fig. \ref{fig:minibatch} that illustrates this for a logistic regression and a fixed learning rate). Our work is complementary to the approach of \cite{yu2019computation}, as we focus on approaches that use local updates with a fixed minibatch size, which in our experiments, is a hyperparameter that is tuned to the data set. 

\section{Local SGD with Periodic Averaging}\label{sec:fplg}
In this section, we introduce the local SGD with model averaging algorithm and state  the main assumptions we make to derive the convergence rates. 

\noindent\textbf{{SGD with Local Updates and Periodic Averaging.}}~Consider a setting with training data as $\mathcal{D},$ loss functions $f_{i}:\mathbb{R}^{d} \rightarrow \mathbb{R}$ for each data point indexed as $i \in 1,2,\ldots,|\mathcal{D}|$, and $p$ distributed \emph{machines} . Without loss of generality, it will be notationally convenient to assume $\mathcal{D}=\{1,2,\ldots,|\mathcal{D}|\}$ in the sequel. For any subset $\mathcal{S}\subseteq \mathcal{D}$, we denote $f(\mathbf{x},\mathcal{S}) = \sum_{i \in \mathcal{S}} f_i(\mathbf{x})$ and $F(\mathbf{x})=\frac{1}{p}f(\mathbf{x},\mathcal{D})$. Let $\xi$ denote a $2^{|\mathcal{D}|}\times 1$ random vector that encodes a subset of $\mathcal{D}$ of cardinality $B$, or equivalently, $\xi$ is a random vector of Hamming weight $B$. In our \emph{local updates with periodic averaging} SGD algorithm, denoted by LUPA-SGD$(\tau)$ where $\tau$ represents the number of local updates, at iteration $t$ the $j$th machine  samples mini-batches $\xi^{(t)}_{j},$ where $\xi^{(t)}_{j},j=1,2,\ldots,p, t=1,2,\ldots, \tau$ are independent realizations of $\xi$.  The samples are then used to calculate stochastic gradient as follows:
\begin{align}
        \tilde{\mathbf{g}}_{j}^{(t)} &\triangleq \frac{1}{B}\nabla f(\mathbf{x}_j^{(t)},\xi_{j}^{(t)})\label{eq:sg_workerj}
\end{align} 
Next, each machine, updates its own local version of the model $\mathbf{x}^{(t)}_{j}$ using:
\begin{align}
    \mathbf{x}^{(t+1)}_{j}=\mathbf{x}^{(t)}_j-\eta_t~ \tilde{\mathbf{g}}_{j}^{(t)}\label{eq:risgd-up}
\end{align}
After every $\tau$ iterations, we do the model averaging, where we average   local versions of the model in all $p$ machines. The pseudocode of the algorithm is shown in Algorithm~\ref{Alg:one-shot-using data samoples}. The algorithm proceeds for $T$ iterations alternating between $\tau$ local updates followed by a communication round where the local solutions of all $p$ machines are aggregated to update the global parameters. We note that unlike parallel SGD that the machines are always in sync through frequent communication, in local SGD the local solutions are aggregated every $\tau$ iterations.  

\begin{algorithm}[t]
\caption{LUPA-SGD($\tau$): Local updates with periodic averaging.  
}\label{Alg:one-shot-using data samoples}
\begin{algorithmic}[1]
\State $\quad$\textbf{Inputs:} $\mathbf{x}^{(0)}$ as an initial global model and $\tau$ as averaging period.
\State $\quad$\textbf{for $t=1,2,\ldots,T$ do}
\State $\qquad$\textbf{parallel for $j=1, 2, \ldots, p$ do}
\State $\qquad j$-th machine uniformly and independently samples a mini-batch  ${{\xi}}^{(t)}_j\subset\mathcal{D}$ at iteration $t$.
\State $\qquad$ Evaluates stochastic gradient over a mini-batch, $\tilde{\mathbf{g}}_{j}^{(t)}$ as in ~(\ref{eq:sg_workerj})
\State $\qquad $ \textbf{if} $t$ divides $\tau$ \textbf{do} 
\State $\qquad\quad$ $\mathbf{x}^{(t+1)}_{j}=\frac{1}{p}\sum_{j=1}^p\Big[\mathbf{x}^{(t)}_j-\eta_t ~\tilde{\mathbf{g}}_{j}^{(t)}\Big]$
\State $\qquad $ \textbf{else do}
\State $\qquad\quad$ $\mathbf{x}^{(t+1)}_{j}=\mathbf{x}^{(t)}_j-\eta_t~ \tilde{\mathbf{g}}_{j}^{(t)}$\label{eq:update-rule-alg}
\State $\qquad$ \textbf{end if}
\State $\qquad$\textbf{end parallel for}
\State \textbf{end}
\State \textbf{Output:} $\bar{\mathbf{x}}^{(T)}=\frac{1}{p}\sum_{j=1}^p\mathbf{x}^{(T)}_j$
\vspace{- 0.1cm}
\end{algorithmic}
\end{algorithm}
 
\noindent\textbf{Assumptions.}~
Our convergence analysis is based on the following standard assumptions. We use the notations $\mathbf{g}(\mathbf{x})\triangleq \nabla{F}(\mathbf{x},\mathcal{D})$ and $\tilde{\mathbf{g}}(\mathbf{x})\triangleq \frac{1}{B}\nabla f(\mathbf{x},\xi)$ below. We drop the dependence of these functions on $\mathbf{x}$ when it is clear from context.

\begin{assumption}[Unbiased estimation]\label{Ass:1}
The stochastic gradient evaluated on a mini-batch $\xi\subset\mathcal{D}$ and at any point $\mathbf{x}$ is an unbiased estimator of the partial full gradient, \textit{i.e.} $\mathbb{E}\left[\tilde{\mathbf{g}}(\mathbf{x})\right]=\mathbf{g}(\mathbf{x})$ for all $\mathbf{x}.$
\end{assumption}

\begin{assumption}[Bounded variance \cite{bottou2018optimization}]\label{Ass:2}
The variance of stochastic gradients evaluated on a mini-batch of size $B$ from $\mathcal{D}$ is bounded as 
\begin{align}
\mathbb{E}\left[\|\tilde{\mathbf{g}}-{\mathbf{g}}\|^2\right]\leq C \|{\mathbf{g}}\|^2+\frac{\sigma^2}{B}   
\end{align}
where $C$ and $\sigma$ are non-negative constants.

\end{assumption}
Note that the bounded variance assumption {(see \cite{bottou2018optimization})} is a stronger form of the above with $C=0$.

\begin{assumption}[{$L$-smoothness, $\mu$-\pl~(PL)}]\label{Ass:3}
The objective function $F(\mathbf{x})$ is differentiable and $L$-smooth: $\|\nabla F(\mathbf{x})-\nabla F(\mathbf{y})\|\leq L\|\mathbf{x}-\mathbf{y}\|,\: \forall \mathbf{x},\mathbf{y}\in\mathbb{R}^d $,  and it satisfies the \pl~ condition with constant $\mu$: $\frac{1}{2}\|\nabla F(\mathbf{x})\|_2^2\geq \mu\big(F(\mathbf{x})-F(\mathbf{x}^*)\big),\: \forall \mathbf{x}\in\mathbb{R}^d $ with $\mathbf{x}^*$ is an optimal solution, that is, ${F}(\mathbf{x}) \geq {F}(\mathbf{x}^{*}), \forall \mathbf{x}$. 
\end{assumption}

\begin{remark}
Note that the PL condition does not require convexity. For instance, simple functions such as $f(x) = \frac{1}{4}x
^2 + \sin^2(2x)$
are not convex, but are $\mu$-PL. The PL condition is a generalization of strong convexity, and the property of $\mu$-strong convexity implies $\mu$-\pl ~(PL), e.g., see \cite{karimi2016linear} for more details. Therefore, any result based on $\mu$-PL assumption also applies assuming $\mu$-strong convexity. It is  noteworthy that while many popular convex optimization problems such as logistic regression and least-squares are often not strongly convex, but satisfy $\mu$-PL condition~\cite{karimi2016linear}.
\end{remark}

\section{Convergence Analysis}\label{Sec:convergence}
In this section, we present the convergence analysis of the LUPA-SGD$(\tau)$ algorithm.  All the proofs are deferred to the appendix. We define an auxiliary variable $
    \bar{\mathbf{x}}^{(t)}=\frac{1}{p}\sum_{j=1}^p\mathbf{x}_j^{(t)}$, which is
 the average model across $p$ different machines at iteration $t$. Using the definition of $\bar{\mathbf{x}}^{(t)}$, the update rule in Algorithm \ref{Alg:one-shot-using data samoples}, can be written as: 
\begin{align}
    \bar{\mathbf{x}}^{(t+1)}=\bar{\mathbf{x}}^{(t)}-\eta \Big[\frac{1}{p}\sum_{j=1}^{p}{\tilde{\mathbf{g}}^{(t)}}_j\Big],\label{eq:equivalent-update-rule}
\end{align}
which is equivalent to  
$$\bar{\mathbf{x}}^{(t+1)}=\bar{\mathbf{x}}^{(t)}-\eta \nabla F(\bar{\mathbf{x}}^{(t)})+\eta\Big[\nabla F(\bar{\mathbf{x}}^{(t)})-\frac{1}{p}\sum_{j=1}^{p}{\tilde{\mathbf{g}}^{(t)}}_j\Big],$$ 
 thus establishing a connection between our algorithm and the perturbed SGD with deviation $\big(\nabla F(\bar{\mathbf{x}}^{(t)})-\frac{1}{p}\sum_{j=1}^{p}{\tilde{\mathbf{g}}^{(t)}}_j\big)$. We show that by 
i.i.d. assumption and averaging with properly chosen number of local updates, we can reduce the variance of unbiased gradients to obtain the desired convergence rates with linear speed up. The convergence rate of LUPA-SGD$(\tau)$ algorithm as stated below:
\begin{theorem}\label{thm:one-shot} For LUPA-SGD$(\tau)$ with $\tau$ local updates, under Assumptions \ref{Ass:1} - \ref{Ass:3}, if we choose the learning rate  as  $\eta_t=\frac{4}{\mu(t+a)}$ where $a=\alpha\tau+4$ with $\alpha$  being constant satisfying 
\begin{align}
    \alpha&\geq \max\left(\frac{4}{\ln\left[ \left(\frac{p\tau}{32(p+1)\kappa^2\left(C+\tau\right)}\right)D\left(\frac{4(L(\frac{C}{p} + 1)-\mu)}{\mu \tau}+D\right)\right] },\frac{4(L(\frac{C}{p} + 1)-\mu)}{\mu \tau}+D\right)\nonumber\\
    &=O\left(\max\left( \frac{1}{\ln\left(\frac{D}{\kappa^2}\left(\frac{\kappa}{\tau}+D\right)\right)},\frac{\kappa}{\tau}+D\right)\right),
\end{align}
where $D$ is an arbitrary positive constant, after $T$ iterations we have:
\small{\begin{align}
   \mathbb{E}\big[F(\bar{\mathbf{x}}^{(t)})-F^*\big]&\leq \frac{a^3}{(T+a)^3}\mathbb{E}\big[F(\bar{\mathbf{x}}^{(0)})-F^*\big]+\frac{4\kappa\sigma^2T(T+2a)}{\mu pB(T+a)^3}\nonumber\\
   &\qquad\qquad+\frac{64\kappa^2{\sigma^2}(p+1)(\tau-1)}{\mu p^2 B(T+a)^3}\left[\frac{ \tau(\tau-1)(2\tau-1)}{6a^2}+4\left(T-1\right)-3\tau\right], \label{eq:thm1-result} 
\end{align}}
\textcolor{black}{where $F^*$ is the global minimum and $\kappa = L/\mu$ is the condition number. }
\end{theorem}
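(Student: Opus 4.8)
## Proof Proposal for Theorem~\ref{thm:one-shot}

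The plan is to follow the ``perturbed SGD'' viewpoint already set up before the theorem statement: track the one-step evolution of $\mathbb{E}[F(\bar{\mathbf{x}}^{(t)})-F^*]$ under the update $\bar{\mathbf{x}}^{(t+1)}=\bar{\mathbf{x}}^{(t)}-\eta_t\,\frac{1}{p}\sum_{j}\tilde{\mathbf{g}}_j^{(t)}$, and then unroll the resulting recursion with the specific learning rate $\eta_t=\frac{4}{\mu(t+a)}$. First I would apply $L$-smoothness to $F$ at $\bar{\mathbf{x}}^{(t+1)}$ versus $\bar{\mathbf{x}}^{(t)}$, take conditional expectation over the fresh mini-batches $\xi_j^{(t)}$, and use Assumption~\ref{Ass:1} so that $\mathbb{E}[\frac{1}{p}\sum_j\tilde{\mathbf{g}}_j^{(t)}] = \frac{1}{p}\sum_j\mathbf{g}(\mathbf{x}_j^{(t)})$. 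The cross term $\langle \nabla F(\bar{\mathbf{x}}^{(t)}),\frac{1}{p}\sum_j \mathbf{g}(\mathbf{x}_j^{(t)})\rangle$ is not quite $\|\nabla F(\bar{\mathbf{x}}^{(t)})\|^2$ because the local models have drifted apart, so I would split it using $2\langle a,b\rangle = \|a\|^2+\|b\|^2-\|a-b\|^2$ and control $\|\nabla F(\bar{\mathbf{x}}^{(t)}) - \frac{1}{p}\sum_j\mathbf{g}(\mathbf{x}_j^{(t)})\|^2 \le \frac{L^2}{p}\sum_j\|\bar{\mathbf{x}}^{(t)}-\mathbf{x}_j^{(t)}\|^2$ via $L$-smoothness and Jensen. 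For the squared-norm (variance) term I would use Assumption~\ref{Ass:2} together with the i.i.d.\ independence of the $p$ mini-batches to get a $\frac{1}{p}$ reduction: $\mathbb{E}\|\frac{1}{p}\sum_j(\tilde{\mathbf{g}}_j^{(t)}-\mathbf{g}(\mathbf{x}_j^{(t)}))\|^2 \le \frac{C}{p}\cdot\frac{1}{p}\sum_j\|\mathbf{g}(\mathbf{x}_j^{(t)})\|^2 + \frac{\sigma^2}{pB}$. Combining and applying the PL inequality $\frac12\|\nabla F(\bar{\mathbf{x}}^{(t)})\|^2\ge \mu(F(\bar{\mathbf{x}}^{(t)})-F^*)$ should yield a recursion of the form
\begin{align}
\mathbb{E}[F(\bar{\mathbf{x}}^{(t+1)})-F^*] \le (1-\mu\eta_t)\,\mathbb{E}[F(\bar{\mathbf{x}}^{(t)})-F^*] + \eta_t^2\,\frac{L\sigma^2}{2pB} + (\text{residual drift term involving } \textstyle\sum_j\|\bar{\mathbf{x}}^{(t)}-\mathbf{x}_j^{(t)}\|^2),\nonumber
\end{align}
modulo lower-order terms, provided $\eta_t$ is small enough that the $\|\mathbf{g}(\mathbf{x}_j^{(t)})\|^2$ coefficients stay non-positive; this smallness requirement on $\eta_t$ (equivalently on $a$, hence on $\alpha$) is exactly where the first branch of the $\max$ defining $\alpha$ comes from.

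The crux is bounding the network-drift term $\frac{1}{p}\sum_j\mathbb{E}\|\bar{\mathbf{x}}^{(t)}-\mathbf{x}_j^{(t)}\|^2$. I would fix the current synchronization block, say iterations $t_0<t\le t_0+\tau$ with $\bar{\mathbf{x}}^{(t_0)}=\mathbf{x}_j^{(t_0)}$ for all $j$, write $\bar{\mathbf{x}}^{(t)}-\mathbf{x}_j^{(t)} = -\sum_{s=t_0}^{t-1}\eta_s(\frac{1}{p}\sum_k\tilde{\mathbf{g}}_k^{(s)} - \tilde{\mathbf{g}}_j^{(s)})$, and bound its squared norm by a Cauchy--Schwarz expansion over the at most $\tau$ terms, then take expectations. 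Again the independence across machines gives the variance a factor that scales favorably in $p$, and the ``mean'' part is controlled by $\sum_s\eta_s^2\|\mathbf{g}\|^2$-type terms which must be absorbed back into the main recursion. The arithmetic here produces the $\sum \tau(\tau-1)(2\tau-1)/6$ and the $(\tau-1)$ prefactors visible in the final bound~\eqref{eq:thm1-result}; keeping the constants honest while ensuring that the feedback of the drift term into the descent recursion does not destabilize it is the main obstacle, and it is the reason the theorem needs the combined condition on $\alpha$ (the second branch $\frac{\kappa}{\tau}+D$ plus the logarithmic first branch guarantee the geometric-series factors from the drift stay bounded by a constant).

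Finally, with the clean recursion $u_{t+1}\le(1-\mu\eta_t)u_t + \eta_t^2 A + (\text{drift})_t$ and $\eta_t=\frac{4}{\mu(t+a)}$, note $1-\mu\eta_t = 1-\frac{4}{t+a} = \frac{t+a-4}{t+a}$, so multiplying through by $(t+a)(t+a-1)(t+a-2)$ (or $(t+a)^3$ up to lower-order corrections) telescopes the homogeneous part into the $\frac{a^3}{(T+a)^3}$ decay of the initial error. Summing the inhomogeneous contributions: the $\eta_t^2\cdot\frac{L\sigma^2}{2pB}$ terms, weighted by the telescoping factor, sum to the $\frac{4\kappa\sigma^2 T(T+2a)}{\mu pB(T+a)^3}$ term; the drift terms, which carry an extra $\frac{1}{p^2}$ and an explicit $\tau$-dependence, sum to the last bracketed expression. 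I would carry out this summation by bounding $\sum_{t}(t+a)^k$ by the obvious integrals/polynomials and being careful to retain the $\tau$-dependence exactly rather than in $O(\cdot)$ form. The step I expect to be genuinely delicate is not this final summation but the self-consistent treatment of the drift term — one has to choose the constants (and the range of $\alpha$) so that the drift bound, which itself depends on the step sizes inside the block, closes the loop with a clean multiplicative constant; everything else is bookkeeping.
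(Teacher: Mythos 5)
Your proposal is correct and follows essentially the same route as the paper's proof: the smoothness descent step combined with the polarization identity for the cross term, the $1/p$ variance reduction from independent mini-batches, the block-wise bound on the drift $\sum_j\mathbb{E}\|\bar{\mathbf{x}}^{(t)}-\mathbf{x}_j^{(t)}\|^2$ fed back into the recursion, the condition on $\alpha$ arising from forcing the $\|\nabla F(\mathbf{x}_j^{(t)})\|^2$ coefficients to be non-positive, and the final telescoping via weights proportional to $(t+a)^3$. The only detail you leave implicit --- the recursive bookkeeping showing that the drift's gradient-norm terms from earlier iterations in the block can all be absorbed (the paper's Lemma~\ref{lemm:tassbih} and Lemma~\ref{lemma:choice-of-learning-rate}) --- is exactly the ``self-consistent treatment'' you flag as delicate, and your plan for it matches the paper's.
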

An immediate result of above theorem is the following: 
\begin{corollary}
In Theorem \ref{thm:one-shot} choosing $\tau=O\left(\frac{T^{\frac{2}{3}}}{p^{\frac{1}{3}}B ^{\frac{1}{3}}}\right)$ leads to the following error bound:
\begin{align*}
  \mathbb{E}\left[F(\bar{\mathbf{x}}^{(T)})-F^*\right]
  \leq O\left(\frac{Bp{(\alpha\tau+4)}^3+T^2}{B p (T+a)^3}\right) = O\left(\frac{1}{pB T}\right),
\end{align*}
\label{cor:main}
\end{corollary}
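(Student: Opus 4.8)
The plan is to substitute the proposed choice $\tau = O\!\left(T^{2/3}/(p^{1/3}B^{1/3})\right)$ into the bound of Theorem~\ref{thm:one-shot} and track which of the three terms on the right-hand side of \eqref{eq:thm1-result} dominates as $T$ grows, treating $p$, $B$, $\mu$, $L$ (hence $\kappa$), and the free constant $D$ as $O(1)$ for the purpose of the asymptotic statement. First I would examine the parameter $a = \alpha\tau + 4$. Under this choice of $\tau$, the term $\kappa/\tau + D$ inside the $\max$ defining $\alpha$ is $O(1)$ (since $\tau \to \infty$), and the logarithmic term is likewise $O(1)$ once $\tau$ is large enough that its argument exceeds $1$; hence $\alpha = O(1)$ and consequently $a = O(\tau) = O\!\left(T^{2/3}/(p^{1/3}B^{1/3})\right)$. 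This is the key bookkeeping fact: $a$ scales like $\tau$, so $a^3 = O(\tau^3) = O(T^2/(pB))$.

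Next I would bound each of the three terms. The first term is $\frac{a^3}{(T+a)^3}\,\mathbb{E}[F(\bar{\mathbf{x}}^{(0)})-F^*]$; since $a = o(T)$ we have $(T+a)^3 = \Theta(T^3)$, so this term is $O(a^3/T^3) = O\!\left(\frac{T^2/(pB)}{T^3}\right) = O\!\left(\frac{1}{pBT}\right)$. The second term $\frac{4\kappa\sigma^2 T(T+2a)}{\mu pB(T+a)^3}$ has numerator $\Theta(T^2)$ and denominator $\Theta(pBT^3)$, giving $O\!\left(\frac{1}{pBT}\right)$ directly. The third term is the delicate one: the bracket $\left[\frac{\tau(\tau-1)(2\tau-1)}{6a^2} + 4(T-1) - 3\tau\right]$. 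Since $a = \Theta(\tau)$, the first piece $\frac{\tau(\tau-1)(2\tau-1)}{6a^2}$ is $\Theta(\tau)$, which is dominated by the $4(T-1)$ piece because $\tau = O(T^{2/3}) = o(T)$; thus the whole bracket is $O(T)$. Multiplying by the prefactor $\frac{64\kappa^2\sigma^2(p+1)(\tau-1)}{\mu p^2 B (T+a)^3} = O\!\left(\frac{\tau}{p B T^3}\right)$ yields $O\!\left(\frac{\tau T}{p B T^3}\right) = O\!\left(\frac{\tau}{p B T^2}\right) = O\!\left(\frac{T^{2/3}/(p^{1/3}B^{1/3})}{pBT^2}\right)$, which is $O\!\left(\frac{1}{p^{4/3} B^{4/3} T^{4/3}}\right)$ and therefore also $O\!\left(\frac{1}{pBT}\right)$.

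Combining the three bounds gives the stated $O(1/(pBT))$. The intermediate expression $O\!\left(\frac{Bp(\alpha\tau+4)^3 + T^2}{Bp(T+a)^3}\right)$ in the corollary statement simply records that the first term contributes $Bp\,a^3$ in the numerator while the second contributes $T^2$, and the third is lower-order; dividing by $Bp(T+a)^3 = \Theta(BpT^3)$ and using $a^3 = O(T^2/(pB))$ so that $Bp\,a^3 = O(T^2)$ recovers $O(T^2/(BpT^3)) = O(1/(BpT))$. I expect the main obstacle to be purely a matter of care rather than depth: one must verify that $\alpha = O(1)$ under the given $\tau$ (so that $a$ does not secretly grow faster than $\tau$), and one must confirm that the cubic-in-$\tau$ piece of the bracket is genuinely killed by division by $a^2 = \Theta(\tau^2)$ and is then dominated by the linear-in-$T$ piece — i.e. that the exponent arithmetic $\tau^3/a^2 = \Theta(\tau) = o(T)$ holds, which is exactly where the choice of the exponent $2/3$ on $T$ (as opposed to something larger) is being used.
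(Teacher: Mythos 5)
Your proposal is correct and follows exactly the route the paper intends: the paper gives no separate proof of the corollary (it is stated as an immediate consequence of Theorem~\ref{thm:one-shot}), and your argument is precisely the direct substitution of $\tau=O\bigl(T^{2/3}/(pB)^{1/3}\bigr)$ into~(\ref{eq:thm1-result}) together with the bookkeeping that $\alpha=O(1)$, $a=\Theta(\tau)$, and the third term is lower order. Your explicit check that the cubic piece $\tau^3/a^2=\Theta(\tau)=o(T)$ is dominated, and that the third term contributes only $O\bigl(\tau/(pBT^2)\bigr)$, is exactly the verification the paper leaves implicit.
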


Therefore, for large number of iterations $T$ the convergence rate  becomes $O\left(\frac{1}{pBT}\right)$, thus achieving a linear speed up with respect to the mini-batch size $B$ and the number of machines $p$. A direct implication of~Theorem~ \ref{thm:one-shot} is  that by proper choice of $\tau$, i.e., $O\left(T^{\frac{2}{3}}/p^{\frac{1}{3}}\right)$, and periodically averaging the local models it is possible to reduce the variance of  stochastic gradients as discussed before. Furthermore, as $\mu$-strong convexity implies $\mu$-PL condition \cite{karimi2016linear}, Theorem \ref{thm:one-shot}  holds for $\mu$-strongly convex cost functions as well.
 {
 \begin{remark}
We would like to highlight the fact that for the case of $\sigma^2=0$ and $C=0$, LUPA-SGD reduces to full batch GD algorithm. In this case, the convergence error in Theorem~\ref{thm:one-shot}, for constant $\tau$, becomes $O\left(\frac{1}{T^3}\right)$  which matches with the convergence analysis of full-batch GD with $p=1$ for cost functions satisfying Assumption~\ref{Ass:3} with the similar choice of learning rate $\eta_t=\frac{a}{\mu(t+b)}$ as in Theorem~\ref{thm:one-shot} (we note that unlike time decaying learning rate, for fixed learning rate linear convergence is achievable for smooth functions that satisfy PL condition~\cite{karimi2016linear}). We provide the corresponding convergence result for full-batch GD in Appendix~\ref{appendix:d}.    
 \end{remark}}
\subsection{Comparison with existing algorithms}\label{sec:comparission}
Noting that the number of communication rounds is ${T}/{\tau}$, for general non-convex optimization, \cite{wang2018cooperative} improves the number of communication rounds in \cite{yu2019parallel} from $O(p^{\frac{3}{4}}T^{\frac{3}{4}})$ to $O(p^{\frac{3}{2}}T^{\frac{1}{2}})$. In~\cite{stich2019local}, by exploiting bounded variance   and bounded gradient  assumptions, it has been shown that for strongly convex functions with $\tau=O(\sqrt{T/p})$, or equivalently $T/\tau=O(\sqrt{pT})$ communication rounds, linear speed up can be achieved. In comparison to \cite{stich2019local}, we show that using the weaker Assumption \ref{Ass:3}, for non-convex cost functions under~PL condition with $\tau=O\left({T^{\frac{2}{3}}/p^{\frac{1}{3}}}\right)$ or equivalently $T/\tau=O\left({(pT)}^{\frac{1}{3}}\right)$ communication rounds, linear speed up can be achieved.   All these results are summarized in Table \ref{table:1}. 

The detailed proof of Theorem~\ref{thm:one-shot} will be provided in appendix, but here we discuss how a tighter convergence rate  compared to ~\cite{stich2019local} is obtainable. In particular, the main reason behind improvement of the LUPA-SGD over~\cite{stich2019local} is due to the difference in Assumption~\ref{Ass:3} and a novel technique introduced to prove the convergence rate. The convergence rate analysis of \cite{stich2019local} is based on the uniformly bounded gradient assumption, $\mathbb{E}\big[\|\tilde{\mathbf{g}}_j\|_2^2\big]\leq G^2$, and bounded variance, $\mathbb{E}\big[\|\tilde{\mathbf{g}}_j-{\mathbf{g}}_j\|_2^2\big]\leq \frac{\sigma^2}{B}$, which leads to the following bound on the difference between local solutions and their average at $t$th iteration: \begin{align}\frac{1}{p}\sum_{j=1}^p\mathbb{E}\big[\|\bar{\mathbf{x}}^{(t)}-{\mathbf{x}}^{(t)}_j\|_2^2\big]\leq 4\eta_t^2G^2\tau^2.\label{eq:sebas}\end{align} In \cite{stich2019local} it is shown that weighted averaging over the term (\ref{eq:sebas}) results in the term $O\left(\frac{\kappa \tau^2}{\mu {T}^2}\right)$ in their convergence bound which determines the maximum allowable size of the local updates without hurting optimal convergence rate. However, our analysis based on the assumption $\mathbb{E}_{\xi_{j}}\left[\|\tilde{\mathbf{g}}_{j}-{\mathbf{g}}_{j}\|^2\right]\leq C \|{\mathbf{g}}_{j}\|^2+\frac{\sigma^2}{B}$, implies the following bound (see Lemma \ref{lemma:dif-b-u-x} in appendix with $t_c\triangleq\floor{\frac{t}{\tau}}\tau$): \begin{align}\sum_{j=1}^p\mathbb{E}\|\bar{\mathbf{x}}^{(t)}-\mathbf{x}_j^{(t)}\|^2{\leq} 2\left(\frac{p+1}{p}\right)[C+\tau]\sum_{k=t_c}^{t-1}\eta_{k}^2\sum_{j=1}^p\|\
\nabla{F}(\mathbf{x}_j^{(k)})\|^2+2\left(\frac{p+1}{p}\right)\tau\eta_{t_c}^2\frac{\sigma^2}{B}. \label{eq:fzn}\end{align}
Note that we obtain (\ref{eq:fzn}) using the non-increasing property of $\eta_t$ from Lemma~\ref{lemma:dif-b-u-x} by {careful analysis} of the effect of first term in~(\ref{eq:fzn}) and the weighted averaging. In particular,  in our analysis we show that the second term in~(\ref{eq:fzn}) can be reduced to $\frac{256\kappa^2\sigma^2T(\tau-1)}{\mu p B (T+a)^3}$ in Theorem \ref{thm:one-shot}; hence resulting in improved upper bound over the number of local updates.


\section{Adaptive LUPA-SGD}\label{sec:adalupa}
The convergence results  discussed so far are indicated based on a fixed number of local updates, $\tau$. Recently,  \cite{zhang2016parallel} and \cite{lin2018don} have shown empirically that more frequent communication in the beginning leads to improved performance over fixed communication period.

The main idea behind adaptive variant of LUPA-SGD stems from the following observation. Let us consider the convergence error of LUPA-SGD algorithm as stated in~(\ref{eq:thm1-result}). A careful investigation of the obtained rate $O\left(\frac{1}{pT}\right)$   reveals that we need to have $a^{3}\mathbb{E}\Big[F(\bar{\mathbf{x}}^{(0)})-F^*\Big]=O\left(T^2\right)$ for $a=\alpha\tau+4$ where   $\alpha$ being a constant, or equivalently  $\tau=O\left(T^{\frac{2}{3}}/{p^{\frac{1}{3}} \Big[F(\bar{\mathbf{x}}^{(0)})-F^*\Big]^{\frac{1}{3}}}\right)$. Therefore,  the number of local updates $\tau$ can be chosen proportional to the distance of objective at initial model, $\bar{\mathbf{x}}^{(0)}$, to the objective at optimal solution, $\mathbf{x}^*$. Inspired by this observation, we can think of the $i$th communication period as if machines restarting training at a new initial point \textcolor{black}{$\bar{\mathbf{x}}^{(i\tau_0)}$}, where $\tau_0$ is the number of initial local updates, and  propose the following strategy to adaptively decide the number of local updates before averaging the models:
\begin{align}
    \tau_i=\ceil{\Big({\frac{F(\bar{\mathbf{x}}^{(0)})}{F(\bar{\mathbf{x}}^{(i \tau_0)})-F^*}\Big)}^{\frac{1}{3}}}{\tau_0}\stackrel{\text{\ding{192}}}{\rightarrow}    \tau_i=\ceil{\Big({\frac{F(\bar{\mathbf{x}}^{(0)})}{F(\bar{\mathbf{x}}^{(i \tau_0)})}\Big)}^{\frac{1}{3}}}{\tau_0} , \label{eq:ADAtau}
\end{align}
where $\sum_{i=1}^E\tau_i=T$ and $E$ is the total number of synchronizations, and \ding{192} comes from the fact that  $F(\mathbf{x}^{(t)})\geq F^*$ and as a result   we can simply drop the unknown global minimum  value  $F^*$ from the denominator of~(\ref{eq:ADAtau}). Note that (\ref{eq:ADAtau}) generates increasing sequence of number of local updates. 
A variation of this choice to decide on $\tau_i$ is discussed in Section~\ref{sec:exp}.
 We denote the adaptive algorithm by ADA-LUPA-SGD$(\tau_1,\ldots,\tau_{E})$ for an arbitrary (not necessarily increasing) sequence of positive integers. 
Following theorem analyzes the convergence rate of adaptive algorithm, ADA-LUPA-SGD $(\tau_1,\ldots,\tau_{E})$). 

\begin{theorem}\label{thm:ADALUPA} For ADA-LUPA-SGD $(\tau_1,\ldots,\tau_{E})$ with local updates, under Assumptions \ref{Ass:1} to \ref{Ass:3}, if we choose the learning rate  as  $\eta_t=\frac{4}{\mu(t+c)}$ where $c\triangleq\alpha\max_{1\leq i\leq E}\tau_i+4$, and all local model parameters are initialized at the same point, with any positive constant $D>0$,  for $\tau_i$, $1\leq i\leq E$, \begin{align}
      \alpha&\geq \max_{\tau_i}\left[ \max\left(\frac{4}{\ln\left[ \left(\frac{p\tau_i}{32(p+1)\kappa^2\left(C+\tau_i\right)}\right)D\left(\frac{4(L(\frac{C}{p} + 1)-\mu)}{\mu \tau_i}+D\right)\right] },\frac{4(L(\frac{C}{p} + 1)-\mu)}{\mu \tau_i}+D\right)\right]\nonumber,
\end{align}
then after $T=\sum_{i=1}^E\tau_i$ iterations we have:
\small{\begin{align}
    \mathbb{E}\left[F(\bar{\mathbf{x}}^{(T)})-F^*\right]&\leq \frac{\textcolor{black}{c}^3}{(T+c)^3}\mathbb{E}\left[F(\bar{\mathbf{x}}^{(0)})-F^*\right]+ \frac{4\kappa \sigma^2T(T+2c)}{\mu Bp(T+c)^3}\nonumber\\
    &\qquad\qquad+\frac{64\sigma^2(p+1)\kappa^2}{\mu p^2(T+c)^3}\sum_{i=1}^E(\tau_i-1)\left[\frac{ \tau_i(\tau_i-1)(2\tau_i-1)}{6c^2}+\tau_i\right]. \label{eq:thm2-result}
\end{align}}
  where $F^*$ is the global minimum and $\kappa = L/\mu$ is the condition number. 
\end{theorem}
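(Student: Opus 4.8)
\noindent\textbf{Proof strategy for Theorem~\ref{thm:ADALUPA}.} The plan is to re-run the argument behind Theorem~\ref{thm:one-shot}, but to ``restart'' the bookkeeping at every synchronization point so that inside the $i$-th communication period the effective number of local steps is $\tau_i$ rather than a global $\tau$. Write $T_0=0$ and $T_i=\sum_{j=1}^{i}\tau_j$, so that the $i$-th period is $\{T_{i-1},\dots,T_i-1\}$ and, for $t$ in this period, the index of the last averaging step is $t_c=T_{i-1}$. Throughout we keep $\eta_t=\frac{4}{\mu(t+c)}$ with $c=\alpha\max_i\tau_i+4$, which in particular guarantees $c\ge\alpha\tau_i$ for every $i$; this uniform choice is the only place where the $\max_i$ in the definition of $c$ (and in the lower bound on $\alpha$) is used.

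First I would establish the one-step descent recursion for the averaged iterate $\bar{\mathbf{x}}^{(t)}$. Using the equivalent update \eqref{eq:equivalent-update-rule}, viewed as perturbed SGD, applying the $L$-smoothness and PL parts of Assumption~\ref{Ass:3}, Assumptions~\ref{Ass:1}--\ref{Ass:2}, and crucially the independence of the mini-batches $\xi_j^{(t)}$ across $j$ to gain the $1/p$ variance reduction, one obtains an inequality of the form
\begin{align}
\mathbb{E}\big[F(\bar{\mathbf{x}}^{(t+1)})-F^*\big] &\le\Big(1-\tfrac{3}{4}\mu\eta_t\Big)\mathbb{E}\big[F(\bar{\mathbf{x}}^{(t)})-F^*\big]-c_1\eta_t\,\mathbb{E}\|\nabla F(\bar{\mathbf{x}}^{(t)})\|^2\nonumber\\
&\quad+\tfrac{L^2\eta_t}{2p}\sum_{j=1}^{p}\mathbb{E}\|\bar{\mathbf{x}}^{(t)}-\mathbf{x}_j^{(t)}\|^2+\tfrac{L\eta_t^2\sigma^2}{2pB},\nonumber
\end{align}
where $c_1>0$ comes from combining the PL inequality with the $\eta_t^2$-term of the smoothness expansion; the split between the $(1-\tfrac34\mu\eta_t)$ contraction and the residual $-c_1\eta_t\|\nabla F\|^2$ is chosen so that the latter can later absorb the consensus error. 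This step is essentially identical to the non-adaptive case and nothing about the period lengths enters yet. Next I would control the consensus error $\sum_j\mathbb{E}\|\bar{\mathbf{x}}^{(t)}-\mathbf{x}_j^{(t)}\|^2$: since all machines agree at the start of the period and have taken at most $\tau_i-1$ local steps since, Lemma~\ref{lemma:dif-b-u-x} applied with period length $\tau_i$ gives precisely \eqref{eq:fzn} with $\tau\leftarrow\tau_i$ and $t_c=T_{i-1}$. The decisive point is that this bound is local to the period: its coupling term only involves gradients at iterates of the same period, and since $\eta_t$ is non-increasing and $c\ge\alpha\tau_i$, the within-period ratio $\eta_{T_{i-1}}/\eta_t=(t+c)/(T_{i-1}+c)\le 1+\tau_i/c\le 1+1/\alpha=O(1)$ is uniformly bounded.

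Finally I would combine the two by the weighted-telescoping device of \cite{stich2019local}: multiply the one-step recursion by weights $w_t\propto(t+c)^3$ (equivalently, iterate the contraction $\prod_t(1-\tfrac34\mu\eta_t)=\prod_t\frac{t+c-3}{t+c}$, which telescopes to $\frac{c^3}{(T+c)^3}$ up to lower-order factors and yields the first term of \eqref{eq:thm2-result}), sum over $t=0,\dots,T-1$, and substitute the period-wise consensus bound. Re-summing the coupling term $\sum_k\eta_k^2\sum_j\|\nabla F(\mathbf{x}_j^{(k)})\|^2$ against the weights and swapping the order of summation, the $\|\nabla F(\mathbf{x}_j^{(k)})\|^2$ factors are paired with the $-c_1\eta_k\|\nabla F\|^2$ residual from the first step (after converting $\|\nabla F(\mathbf{x}_j^{(k)})\|^2$ to $\|\nabla F(\bar{\mathbf{x}}^{(k)})\|^2$ plus a consensus term via $L$-smoothness and folding that back); the lower bound on $\alpha$ imposed for every $\tau_i$ is exactly what makes the net coefficient of each $\|\nabla F\|^2$ nonnegative so it can be dropped. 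What survives near the start of each period is the $\frac{\tau_i(\tau_i-1)(2\tau_i-1)}{6c^2}$ contribution, while the variance part of \eqref{eq:fzn} re-summed against the weights contributes the $\tau_i$ term; collecting these over $i=1,\dots,E$ gives the last sum of \eqref{eq:thm2-result}, and the $\eta_t^2\sigma^2/(pB)$ term of the first step re-summed gives $\frac{4\kappa\sigma^2T(T+2c)}{\mu Bp(T+c)^3}$.

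I expect the main obstacle to be this last step: with heterogeneous $\tau_i$ one can no longer factor a single $\tau$ out of the double sum, so the absorption of the cross-iteration coupling term must be done period by period while checking (i) that the within-period step-size ratios stay $O(1)$ — which is why $c$ and the lower bound on $\alpha$ are taken with a $\max$ over all $\tau_i$ — and (ii) that the leftover $\tau_i$-dependent residuals assemble into the clean form $\sum_i(\tau_i-1)\big[\frac{\tau_i(\tau_i-1)(2\tau_i-1)}{6c^2}+\tau_i\big]$ without picking up an extra factor growing with the number of synchronizations $E$. A second delicate point is verifying that the contraction product still yields the cubic factor $c^3/(T+c)^3$ under the relaxed, period-dependent step-size conditions; everything else is a rerun of the proof of Theorem~\ref{thm:one-shot}.
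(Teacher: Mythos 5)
Your proposal follows essentially the same route as the paper: rerun the one-step descent recursion of Theorem~\ref{thm:one-shot}, apply the consensus bound of Lemma~\ref{lemma:dif-b-u-x} period by period with $\tau\leftarrow\tau_i$ and $t_c$ the last synchronization index, use the uniform step size $\eta_t=\frac{4}{\mu(t+c)}$ with $c=\alpha\max_i\tau_i+4$ and the per-$\tau_i$ condition on $\alpha$ to drop the gradient-norm coupling terms in every period, and re-sum the weighted variance contribution period by period to obtain $\sum_{i=1}^E(\tau_i-1)\big[\frac{\tau_i(\tau_i-1)(2\tau_i-1)}{6c^2}+\tau_i\big]$. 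This matches the paper's proof, which is exactly this extension of Theorem~\ref{thm:one-shot} via Lemmas~\ref{lemma:dif-b-u-x-tau} and \ref{lemma:choice-of-learning-rate-adaptive} plus the period-wise re-summation of the last term.
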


We emphasize that Algorithm~\ref{Alg:one-shot-using data samoples} with sequence of local updates $\tau_1,\ldots,\tau_E$, preserves linear speed up as long as the following three conditions are satisfied: i) $\sum_{i=1}^E\tau_i=T$, ii) $\sum_{i=1}^E\tau_i(\tau_i-1)=O(T^2)$, iii) $\left(\max_{1\leq i\leq E} \tau_i\right)^3=O\left(\frac{T^2}{p B}\right)$. Note that exponentially increasing $\tau_i$ that results in a total of $O(\log T)$ communication rounds, does not satisfy these three conditions. Thus our result sheds some theoretical insight of ADA-LUPA algorithm on how big we can choose $\tau_i$- under our setup and convergence techniques while preserving linear speed up - although, we note that impossibility results need to be derived in future work to cement this insight. 

Additionally, the result of \cite{wangadaptive} is based on minimizing convergence error with respect to the wall-clock time using an adaptive synchronization scheme, while our focus is on reducing the number of communication rounds for a fixed number of model updates. Given a model for wall clock time, our analysis can be readily extended to further fine-tune the communication-computation complexity of \cite{wangadaptive}. 

\section{Experiments}\label{sec:exp}
To validate the proposed algorithm compared to existing work and algorithms, we conduct experiments on Epsilon dataset\footnote{\url{https://www.csie.ntu.edu.tw/~cjlin/libsvmtools/datasets/binary.html}}, using logistic regression model, which satisfies PL condition. Epsilon dataset, a popular large scale binary dataset, consists of $400,000$ training samples and $100,000$ test samples with feature dimension of $2000$. 

\textbf{Experiment setting.} We run our experiments on two different settings implemented with different libraries to show its efficacy on different platforms. Most of the experiments will be run on Amazon EC2 cluster with 5 p2.xlarge instances. In this environment we use PyTorch~\cite{paszke2017automatic} to implement LUPA-SGD as well as the baseline SyncSGD. We also use an internal high performance computing (HPC) cluster equipped with NVIDIA Tesla V100 GPUs. In this environment we use Tensorflow~\cite{abadi2016tensorflow} to implement both SyncSGD and LUPA-SGD. The performance on both settings shows the superiority of the algorithm in both time and convergence\footnote{The implementation code is available at \url{https://github.com/mmkamani7/LUPA-SGD}.}.

\textbf{Implementations and setups.} To run our algorithm, as we stated, we will use logistic regression. The learning rate and regularization parameter are $0.01$ and $1\times10^{-4}$, respectively, and the mini-batch size is $128$ unless otherwise stated. We use \texttt{mpi4py} library from \texttt{OpenMPI}\footnote{\url{https://www.open-mpi.org/}} as the MPI library for distributed training. 

\begin{figure}[t!]
	\centering
	\begin{subfigure}[b]{0.325\textwidth}
		\centering
		\includegraphics[width=\textwidth]{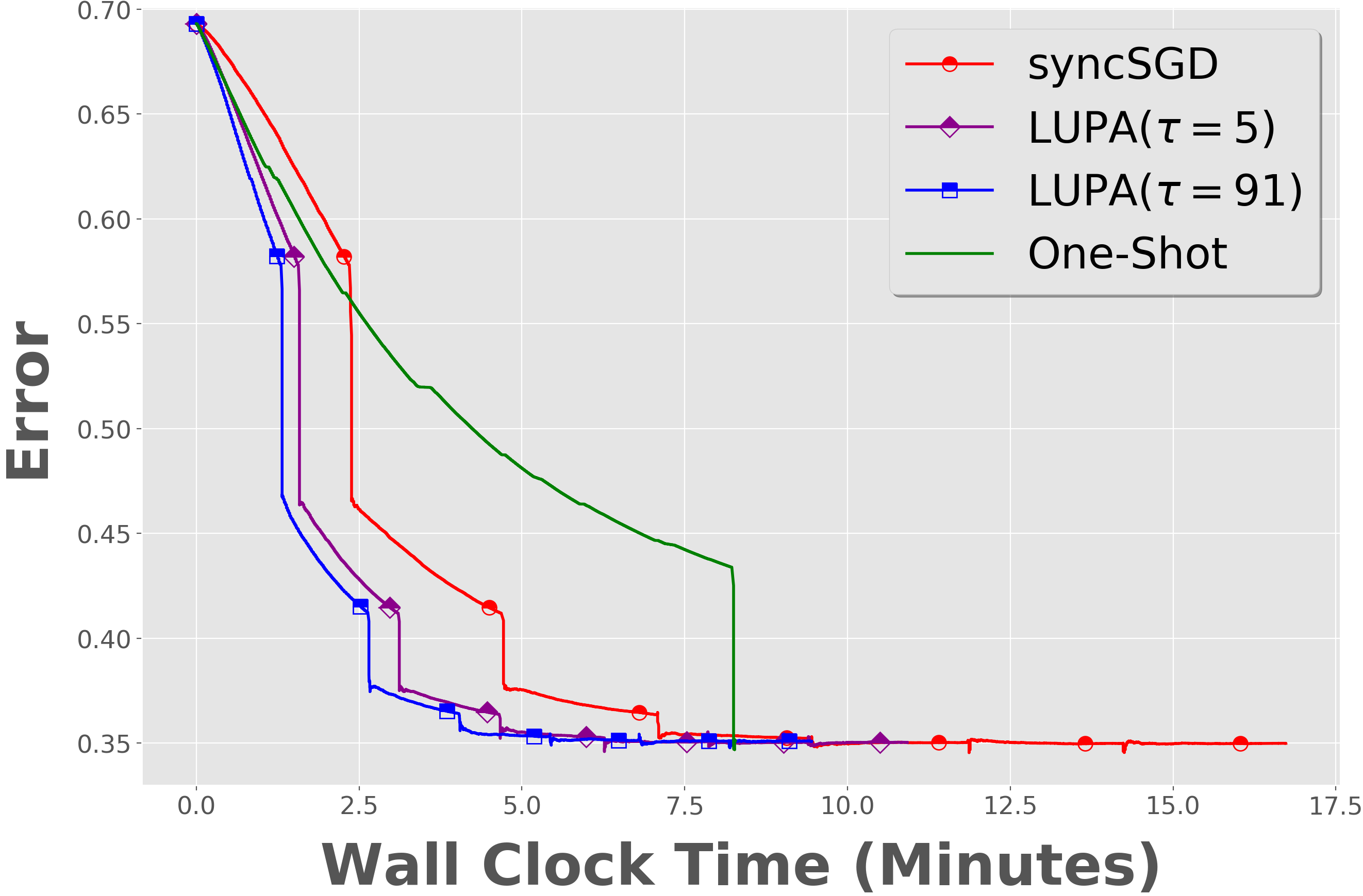}
		\label{fig:normal-err-time}
		\end{subfigure}
		\hfill
		\begin{subfigure}[b]{0.325\textwidth}  
			\centering 
			\includegraphics[width=\textwidth]{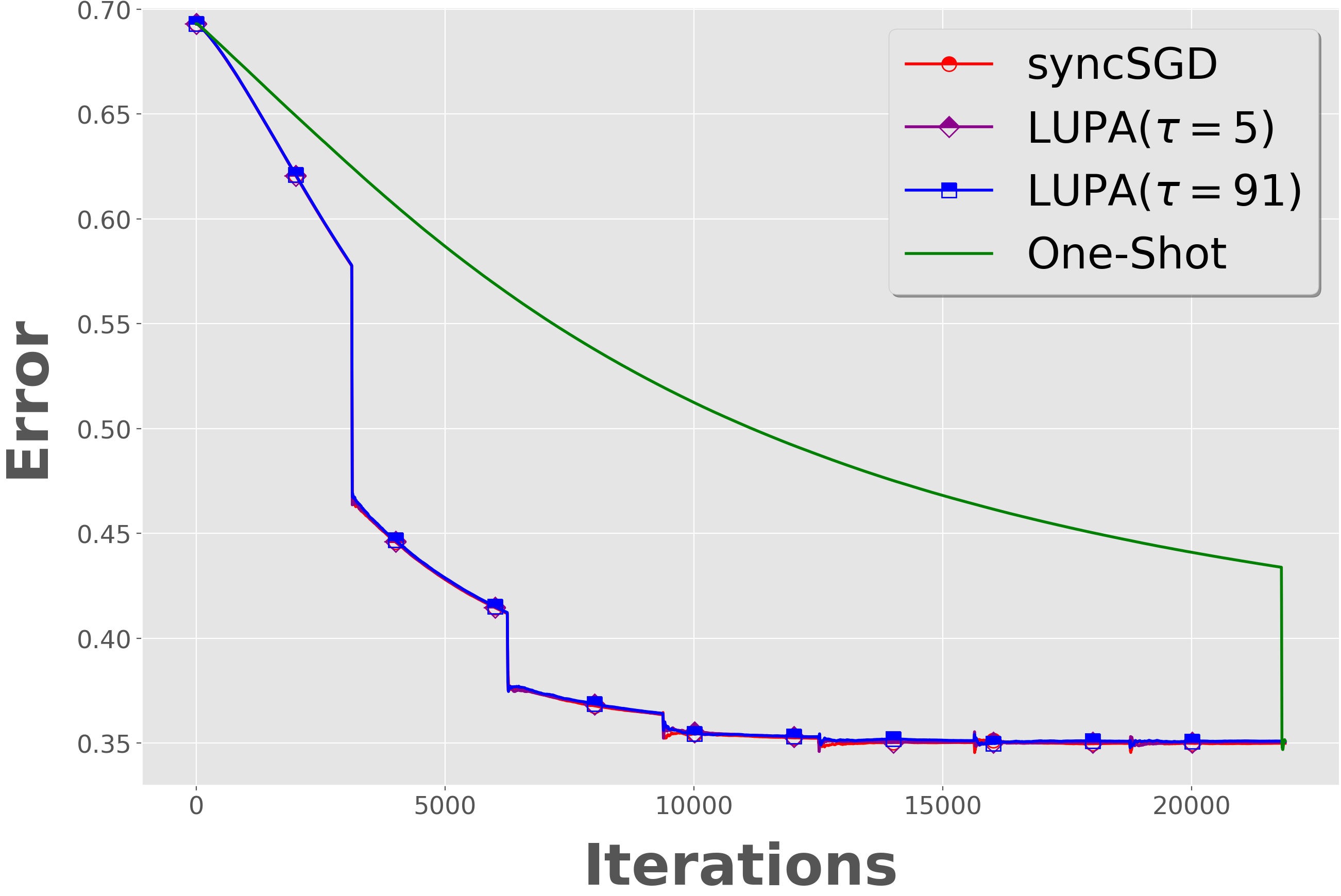}
			\label{fig:normal-err-epoch}
			\end{subfigure}
			\hfill
	\begin{subfigure}[b]{0.325\textwidth}   
		\centering 
		\includegraphics[width=\textwidth]{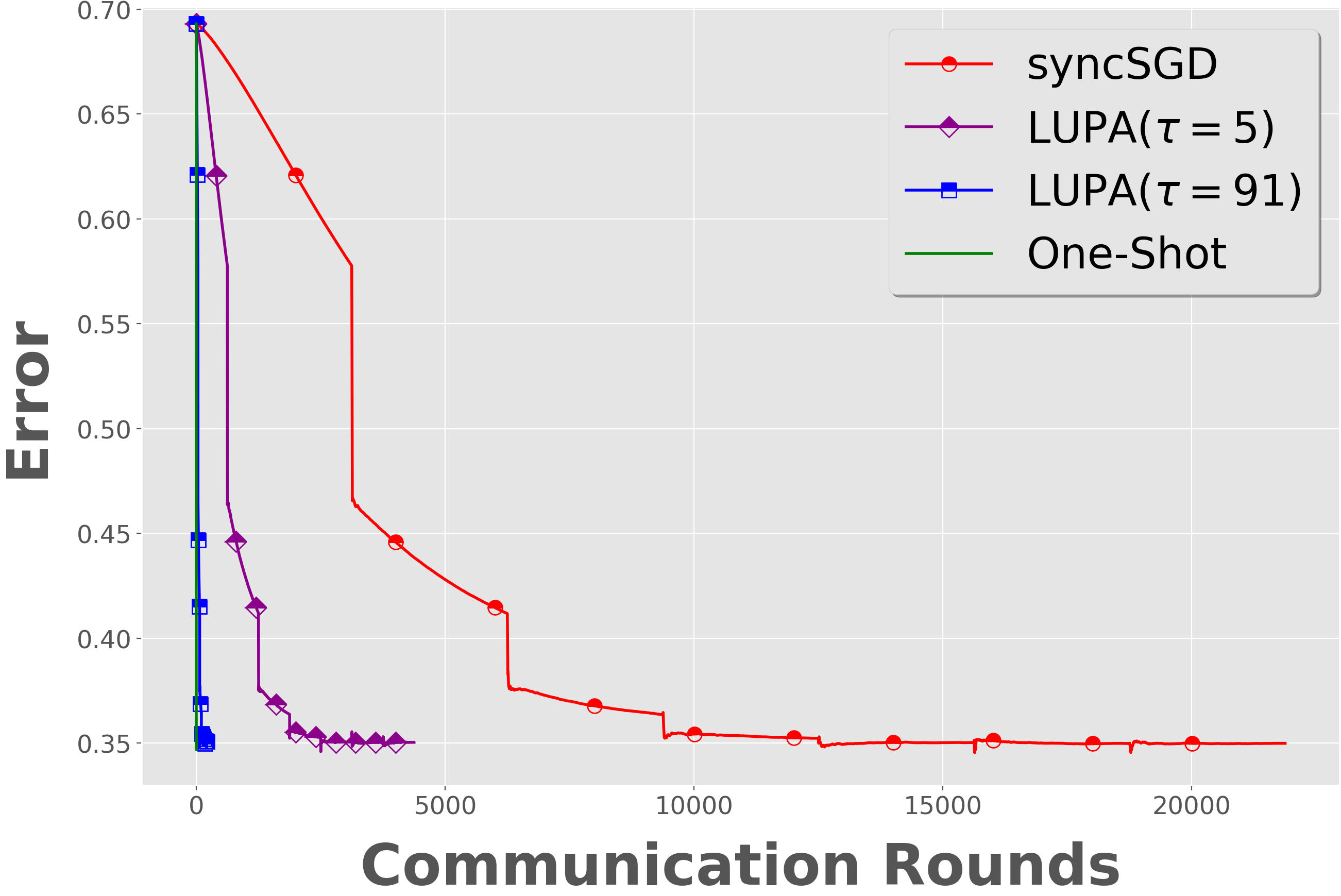}
		\label{fig:normal-err-comm}
		\end{subfigure}
    \caption{Comparison of the convergence rate of SyncSGD with LUPA-SGD with $\tau=5$~\cite{stich2019local}, $\tau=91$ (ours) and one-shot (with only one communication round).}
    \label{fig:normal}
\end{figure}
\textbf{Normal training.}~The first experiment is to do a normal training on epsilon dataset. As it was stated, epsilon dataset has $400,000$ training samples, and if we want to run the experiment for $7$ epochs on $5$ machines with mini-batch size of $128$ ($T=21875$), based on Table~\ref{table:1}, we can calculate the given value for $\tau$  which for our LUPA-SGD is $T^{\frac{2}{3}}/(pb)^{\frac{1}{3}}\approx91$. If we follow the $\tau$ in~\cite{stich2019local} we would have to set $\tau$ as $\sqrt{T/pb}\approx5$ for this experiment. 
\begin{wrapfigure}{r}{0.5\textwidth}
\vspace{-0.3cm}
\begin{center}
    \includegraphics[width=0.4\textwidth]{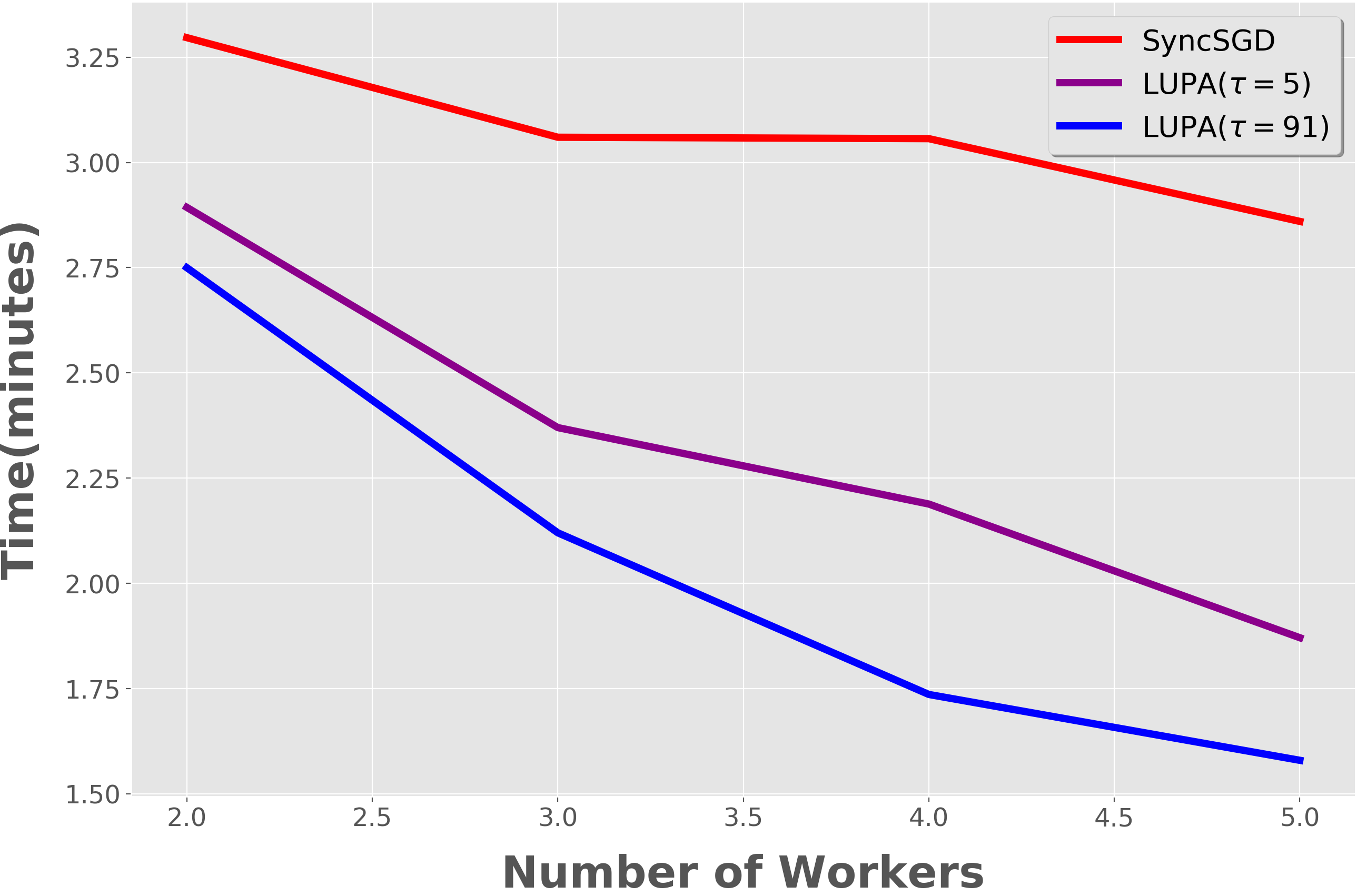}
 \end{center}
    \caption{Changing the number of machines and calculate time to reach certain level of error rate ($\epsilon=0.35$). It indicates that LUPA-SGD with $\tau=91$ can benefit from linear speedup by increasing the number of machines. The experiment is repeated 5 times and the average is reported.}
    \label{fig:speedup}
    \vspace{-0.7 cm}
\end{wrapfigure}
We also include the results for one-shot learning, which is local SGD with only having one round of communication at the end. The results are depicted in Figure~\ref{fig:normal}, shows that LUPA-SGD with higher $\tau$, can indeed, converges to the same level as SyncSGD with faster rate in terms of wall clock time.

\textbf{Speedup.}~To show that LUPA-SGD with greater number of local updates can still benefits from linear speedup with increasing the number of machines, we run our experiment on different number of machines. Then, we report the time that each of them reaches to a certain error level, say $\epsilon=0.35$. The results are the average of $5$ repeats. 

\textbf{Adaptive LUPA SGD.}~To show how ADA LUPA-SGD works, we run two experiments, first with constant $\tau=91$ and the other with increasing number of local updates starting with $\tau_0=91$ and $\tau_i=(1+i\alpha)\tau_0$, with $\alpha\geq0$. We set $\alpha$ in a way to have certain number of communications. This experiment has been run on Tensorflow setting described before.

{We note that having access to the function $F(\mathbf{x}^{(t)})$ is only for theoretical analysis purposes and is not necessary in practice as long as the choice of $\tau_i$ satisfies the conditions in the statement of the theorem. In fact as explained in our experiments, we do NOT use the function value oracle and increase $\tau_i$ within each communication period linearly (please see Figure 4) which demonstrates improvement over keeping $\tau_i$ constant.}

\begin{figure}[t!]
	\centering
	\begin{subfigure}[b]{0.45\textwidth}
		\centering
		\includegraphics[width=\textwidth]{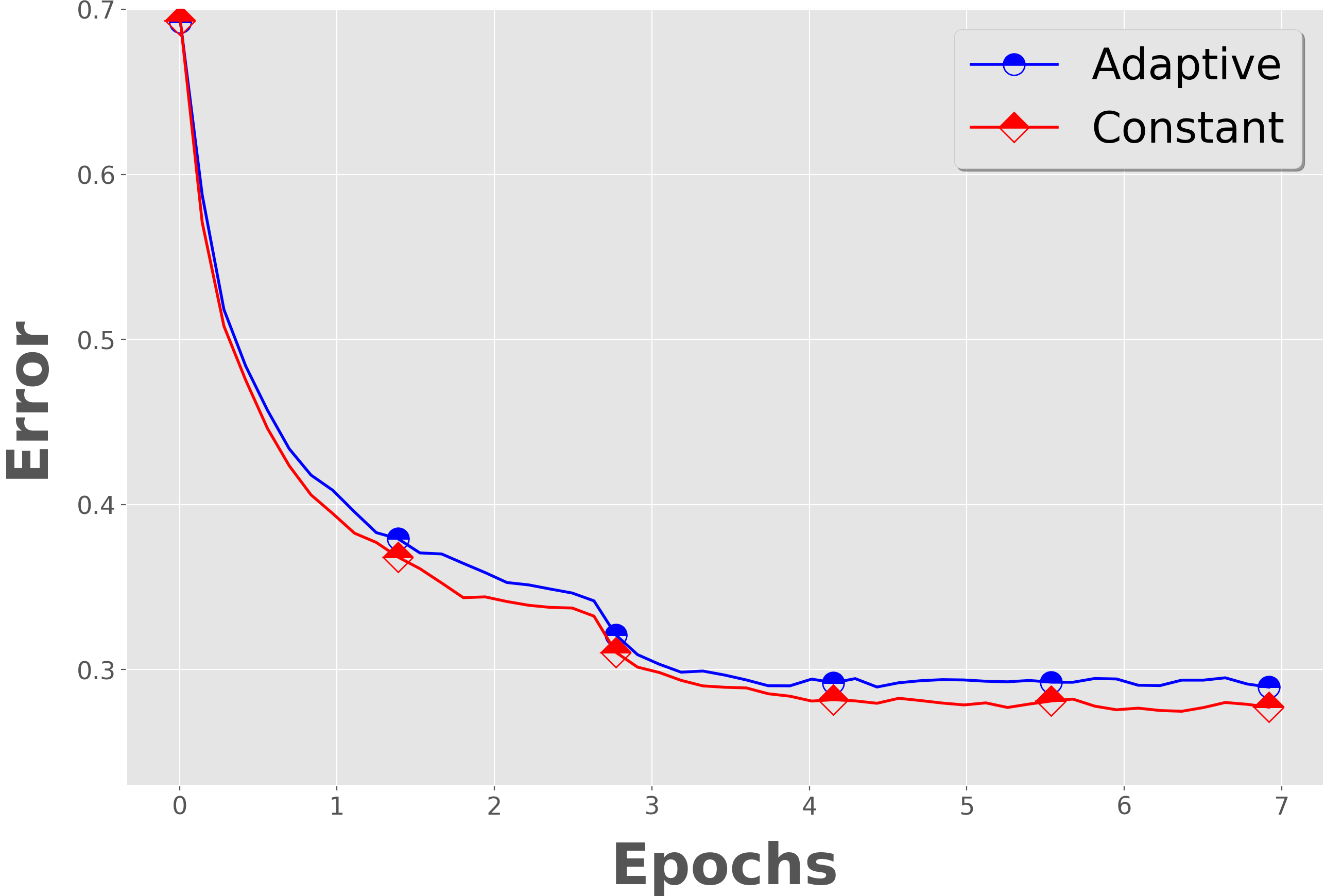}
		\label{fig:adaptive-err-epoch}
		\end{subfigure}
		\begin{subfigure}[b]{0.45\textwidth}  
			\centering 
			\includegraphics[width=\textwidth]{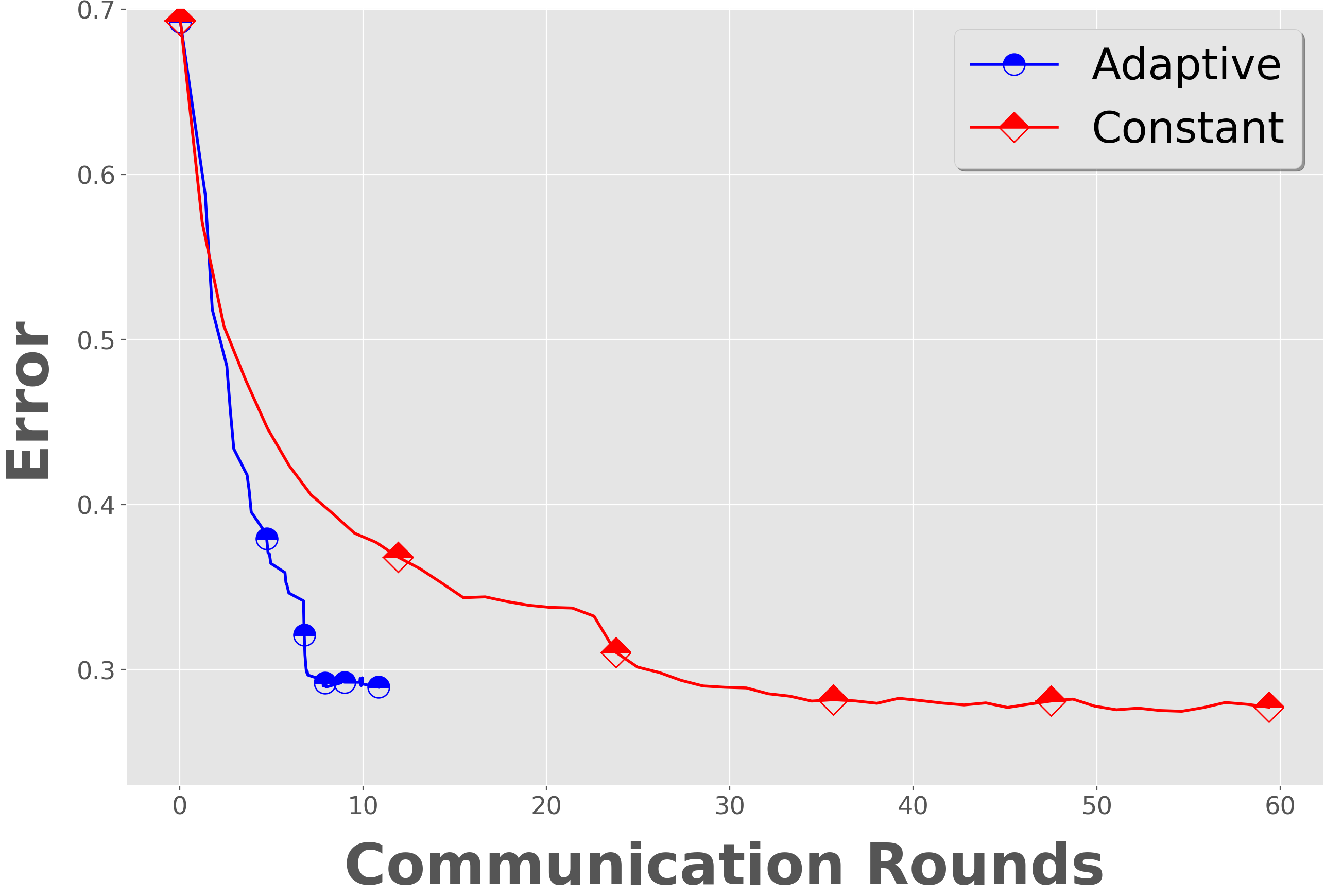}
			\label{fig:adaptive-err-comm}
			\end{subfigure}
    \caption{Comparison of the convergence rate of LUPA-SGD with ADA-LUPA-SGD with $\tau=91$ for LUPA-SGD, and $\tau_0=91$ and $\tau_i=(1+i\alpha)\tau_0$, with $\alpha=1.09$ for ADA-LUPA-SGD to have 10 rounds of communication. The results show that ADA-LUPA-SGD can reach the same level of error rate as LUPA-SGD, with less number of communication.}
    \label{fig:adaptive}
\end{figure}
  
\section{Conclusion and Future Work}
In this paper, we strengthen the theory of local updates with periodic averaging for distributed non-convex optimization. We  improve the previously known bound on the number of local updates, while preserving the linear speed up, and validate our results through experiments. We also presented an adaptive algorithm to decide the number of local updates as algorithm proceeds.

Our work opens few interesting directions as future work. First, it is still unclear if we can preserve linear speed up with larger local updates (e.g., $\tau=O\left({T}/{\log T}\right)$ to require $O\left(\log T\right)$ communications). Recent studies have demonstrated remarkable observations about using large mini-bath sizes from practical standpoint:~\cite{yin2017gradient} demonstrated that the maximum allowable mini-batch size is bounded by \textit{gradient diversity} quantity, and~\cite{yu2019computation} showed  that using larger mini-batch sizes can lead to superior training error convergence. These observations raise an interesting question that is worthy of investigation. In particular, an interesting direction motivated by our work and the contrasting views of these works would be exploring the maximum allowable $\tau$ for which performance does not decay with fixed bound on the mini-batch size. Finally, obtaining lower bounds on the number of local updates for a fixed mini-bath size to achieve linear speedup is an interesting research question.
\section*{Acknowledgement}
This work was partially supported by the NSF CCF 1553248
and NSF CCF 1763657 grants. We would like  to thank Blake Woodworth for pointing a flaw  on the incorrect dependence of our theorem on the strong-convexity/PL parameter.  
\newpage

\bibliographystyle{plain}
\bibliography{sample}

\begin{thebibliography}{10}

\bibitem{abadi2016tensorflow}
Mart{\'\i}n Abadi, Paul Barham, Jianmin Chen, Zhifeng Chen, Andy Davis, Jeffrey
  Dean, Matthieu Devin, Sanjay Ghemawat, Geoffrey Irving, Michael Isard, et~al.
\newblock Tensorflow: A system for large-scale machine learning.
\newblock In {\em 12th $\{$USENIX$\}$ Symposium on Operating Systems Design and
  Implementation ($\{$OSDI$\}$ 16)}, pages 265--283, 2016.

\bibitem{agarwal2011distributed}
Alekh Agarwal and John~C Duchi.
\newblock Distributed delayed stochastic optimization.
\newblock In {\em Advances in Neural Information Processing Systems}, pages
  873--881, 2011.

\bibitem{aji2017sparse}
Alham~Fikri Aji and Kenneth Heafield.
\newblock Sparse communication for distributed gradient descent.
\newblock In {\em Proceedings of the 2017 Conference on Empirical Methods in
  Natural Language Processing}, pages 440--445, 2017.

\bibitem{alistarh2017qsgd}
Dan Alistarh, Demjan Grubic, Jerry Li, Ryota Tomioka, and Milan Vojnovic.
\newblock Qsgd: Communication-efficient sgd via gradient quantization and
  encoding.
\newblock In {\em Advances in Neural Information Processing Systems}, pages
  1709--1720, 2017.

\bibitem{bernstein2018signsgd}
Jeremy Bernstein, Yu-Xiang Wang, Kamyar Azizzadenesheli, and Anima Anandkumar.
\newblock signsgd: Compressed optimisation for non-convex problems.
\newblock {\em arXiv preprint arXiv:1802.04434}, 2018.

\bibitem{bottou2018optimization}
L{\'e}on Bottou, Frank~E Curtis, and Jorge Nocedal.
\newblock Optimization methods for large-scale machine learning.
\newblock {\em Siam Review}, 60(2):223--311, 2018.

\bibitem{chen2016scalable}
Kai Chen and Qiang Huo.
\newblock Scalable training of deep learning machines by incremental block
  training with intra-block parallel optimization and blockwise model-update
  filtering.
\newblock In {\em 2016 ieee international conference on acoustics, speech and
  signal processing (icassp)}, pages 5880--5884. IEEE, 2016.

\bibitem{de2015taming}
Christopher~M De~Sa, Ce~Zhang, Kunle Olukotun, and Christopher R{\'e}.
\newblock Taming the wild: A unified analysis of hogwild-style algorithms.
\newblock In {\em Advances in neural information processing systems}, pages
  2674--2682, 2015.

\bibitem{dryden2016communication}
Nikoli Dryden, Tim Moon, Sam~Ade Jacobs, and Brian Van~Essen.
\newblock Communication quantization for data-parallel training of deep neural
  networks.
\newblock In {\em 2016 2nd Workshop on Machine Learning in HPC Environments
  (MLHPC)}, pages 1--8. IEEE, 2016.

\bibitem{friedlander2011hybrid}
Michael~P Friedlander and Mark Schmidt.
\newblock Hybrid deterministic-stochastic methods for data fitting.
\newblock Technical report, 2011.

\bibitem{godichon2017rates}
Antoine Godichon-Baggioni and Sofiane Saadane.
\newblock On the rates of convergence of parallelized averaged stochastic
  gradient algorithms.
\newblock {\em arXiv preprint arXiv:1710.07926}, 2017.

\bibitem{golmant2018computational}
Noah Golmant, Nikita Vemuri, Zhewei Yao, Vladimir Feinberg, Amir Gholami, Kai
  Rothauge, Michael~W Mahoney, and Joseph Gonzalez.
\newblock On the computational inefficiency of large batch sizes for stochastic
  gradient descent.
\newblock {\em arXiv preprint arXiv:1811.12941}, 2018.

\bibitem{gupta2015deep}
Suyog Gupta, Ankur Agrawal, Kailash Gopalakrishnan, and Pritish Narayanan.
\newblock Deep learning with limited numerical precision.
\newblock In {\em International Conference on Machine Learning}, pages
  1737--1746, 2015.

\bibitem{haddadpour2019trading}
Farzin Haddadpour, Mohammad~Mahdi Kamani, Mehrdad Mahdavi, and Viveck Cadambe.
\newblock Trading redundancy for communication: Speeding up distributed sgd for
  non-convex optimization.
\newblock In {\em International Conference on Machine Learning}, pages
  2545--2554, 2019.

\bibitem{haddadpour2018cross}
Farzin Haddadpour, Yaoqing Yang, Viveck Cadambe, and Pulkit Grover.
\newblock Cross-iteration coded computing.
\newblock In {\em 2018 56th Annual Allerton Conference on Communication,
  Control, and Computing (Allerton)}, pages 196--203. IEEE, 2018.

\bibitem{jain2018parallelizing}
Prateek Jain, Sham~M Kakade, Rahul Kidambi, Praneeth Netrapalli, and Aaron
  Sidford.
\newblock Parallelizing stochastic gradient descent for least squares
  regression: mini-batching, averaging, and model misspecification.
\newblock {\em Journal of Machine Learning Research}, 18(223):1--42, 2018.

\bibitem{kamp2018efficient}
Michael Kamp, Linara Adilova, Joachim Sicking, Fabian H{\"u}ger, Peter
  Schlicht, Tim Wirtz, and Stefan Wrobel.
\newblock Efficient decentralized deep learning by dynamic model averaging.
\newblock In {\em Joint European Conference on Machine Learning and Knowledge
  Discovery in Databases}, pages 393--409. Springer, 2018.

\bibitem{karimi2016linear}
Hamed Karimi, Julie Nutini, and Mark Schmidt.
\newblock Linear convergence of gradient and proximal-gradient methods under
  the polyak-{\l}ojasiewicz condition.
\newblock In {\em Joint European Conference on Machine Learning and Knowledge
  Discovery in Databases}, pages 795--811. Springer, 2016.

\bibitem{lian2015asynchronous}
Xiangru Lian, Yijun Huang, Yuncheng Li, and Ji~Liu.
\newblock Asynchronous parallel stochastic gradient for nonconvex optimization.
\newblock In {\em Advances in Neural Information Processing Systems}, pages
  2737--2745, 2015.

\bibitem{lin2018don}
Tao Lin, Sebastian~U Stich, and Martin Jaggi.
\newblock Don't use large mini-batches, use local sgd.
\newblock {\em arXiv preprint arXiv:1808.07217}, 2018.

\bibitem{lin2017deep}
Yujun Lin, Song Han, Huizi Mao, Yu~Wang, and William~J Dally.
\newblock Deep gradient compression: Reducing the communication bandwidth for
  distributed training.
\newblock {\em arXiv preprint arXiv:1712.01887}, 2017.

\bibitem{ma2017power}
Siyuan Ma, Raef Bassily, and Mikhail Belkin.
\newblock The power of interpolation: Understanding the effectiveness of sgd in
  modern over-parametrized learning.
\newblock {\em arXiv preprint arXiv:1712.06559}, 2017.

\bibitem{mcdonald2010distributed}
Ryan McDonald, Keith Hall, and Gideon Mann.
\newblock Distributed training strategies for the structured perceptron.
\newblock In {\em Human Language Technologies: The 2010 Annual Conference of
  the North American Chapter of the Association for Computational Linguistics},
  pages 456--464. Association for Computational Linguistics, 2010.

\bibitem{mcmahan2017communication}
Brendan McMahan, Eider Moore, Daniel Ramage, Seth Hampson, and Blaise~Aguera
  y~Arcas.
\newblock Communication-efficient learning of deep networks from decentralized
  data.
\newblock In {\em Artificial Intelligence and Statistics}, pages 1273--1282,
  2017.

\bibitem{na2017chip}
Taesik Na, Jong~Hwan Ko, Jaeha Kung, and Saibal Mukhopadhyay.
\newblock On-chip training of recurrent neural networks with limited numerical
  precision.
\newblock In {\em 2017 International Joint Conference on Neural Networks
  (IJCNN)}, pages 3716--3723. IEEE, 2017.

\bibitem{paszke2017automatic}
Adam Paszke, Sam Gross, Soumith Chintala, Gregory Chanan, Edward Yang, Zachary
  DeVito, Zeming Lin, Alban Desmaison, Luca Antiga, and Adam Lerer.
\newblock Automatic differentiation in pytorch.
\newblock 2017.

\bibitem{povey2014parallel}
Daniel Povey, Xiaohui Zhang, and Sanjeev Khudanpur.
\newblock Parallel training of dnns with natural gradient and parameter
  averaging.
\newblock {\em arXiv preprint arXiv:1410.7455}, 2014.

\bibitem{recht2011hogwild}
Benjamin Recht, Christopher Re, Stephen Wright, and Feng Niu.
\newblock Hogwild: A lock-free approach to parallelizing stochastic gradient
  descent.
\newblock In {\em Advances in neural information processing systems}, pages
  693--701, 2011.

\bibitem{Seide2016CNTKMO}
Frank Seide and Amit Agarwal.
\newblock Cntk: Microsoft's open-source deep-learning toolkit.
\newblock In {\em KDD}, 2016.

\bibitem{seide20141}
Frank Seide, Hao Fu, Jasha Droppo, Gang Li, and Dong Yu.
\newblock 1-bit stochastic gradient descent and its application to
  data-parallel distributed training of speech dnns.
\newblock In {\em Fifteenth Annual Conference of the International Speech
  Communication Association}, 2014.

\bibitem{shamir2014distributed}
Ohad Shamir and Nathan Srebro.
\newblock Distributed stochastic optimization and learning.
\newblock In {\em 2014 52nd Annual Allerton Conference on Communication,
  Control, and Computing (Allerton)}, pages 850--857. IEEE, 2014.

\bibitem{stich2018sparsified}
Sebastian~U Stich, Jean-Baptiste Cordonnier, and Martin Jaggi.
\newblock Sparsified sgd with memory.
\newblock In {\em Advances in Neural Information Processing Systems}, pages
  4447--4458, 2018.

\bibitem{stich2019local}
Sebastian~Urban Stich.
\newblock Local sgd converges fast and communicates little.
\newblock In {\em ICLR 2019 ICLR 2019 International Conference on Learning
  Representations}, number CONF, 2019.

\bibitem{strom2015scalable}
Nikko Strom.
\newblock Scalable distributed dnn training using commodity gpu cloud
  computing.
\newblock In {\em Sixteenth Annual Conference of the International Speech
  Communication Association}, 2015.

\bibitem{su2015experiments}
Hang Su and Haoyu Chen.
\newblock Experiments on parallel training of deep neural network using model
  averaging.
\newblock {\em arXiv preprint arXiv:1507.01239}, 2015.

\bibitem{sun2017meprop}
Xu~Sun, Xuancheng Ren, Shuming Ma, and Houfeng Wang.
\newblock meprop: Sparsified back propagation for accelerated deep learning
  with reduced overfitting.
\newblock In {\em Proceedings of the 34th International Conference on Machine
  Learning-Volume 70}, pages 3299--3308. JMLR. org, 2017.

\bibitem{wangadaptive}
Jianyu Wang and Gauri Joshi.
\newblock Adaptive communication strategies to achieve the best error-runtime
  trade-off in local-update sgd.
\newblock {\em arXiv preprint arXiv:1810.08313}, 2018.

\bibitem{wang2018cooperative}
Jianyu Wang and Gauri Joshi.
\newblock Cooperative sgd: A unified framework for the design and analysis of
  communication-efficient sgd algorithms.
\newblock {\em arXiv preprint arXiv:1808.07576}, 2018.

\bibitem{wangni2018gradient}
Jianqiao Wangni, Jialei Wang, Ji~Liu, and Tong Zhang.
\newblock Gradient sparsification for communication-efficient distributed
  optimization.
\newblock In {\em Advances in Neural Information Processing Systems}, pages
  1299--1309, 2018.

\bibitem{wen2017terngrad}
Wei Wen, Cong Xu, Feng Yan, Chunpeng Wu, Yandan Wang, Yiran Chen, and Hai Li.
\newblock Terngrad: Ternary gradients to reduce communication in distributed
  deep learning.
\newblock In {\em Advances in neural information processing systems}, pages
  1509--1519, 2017.

\bibitem{yin2017gradient}
Dong Yin, Ashwin Pananjady, Max Lam, Dimitris Papailiopoulos, Kannan
  Ramchandran, and Peter Bartlett.
\newblock Gradient diversity: a key ingredient for scalable distributed
  learning.
\newblock {\em arXiv preprint arXiv:1706.05699}, 2017.

\bibitem{yu2019computation}
Hao Yu and Rong Jin.
\newblock On the computation and communication complexity of parallel sgd with
  dynamic batch sizes for stochastic non-convex optimization.
\newblock In {\em International Conference on Machine Learning}, pages
  7174--7183, 2019.

\bibitem{yu2019parallel}
Hao Yu, Sen Yang, and Shenghuo Zhu.
\newblock Parallel restarted sgd with faster convergence and less
  communication: Demystifying why model averaging works for deep learning.
\newblock In {\em Proceedings of the AAAI Conference on Artificial
  Intelligence}, volume~33, pages 5693--5700, 2019.

\bibitem{zhang2017zipml}
Hantian Zhang, Jerry Li, Kaan Kara, Dan Alistarh, Ji~Liu, and Ce~Zhang.
\newblock Zipml: Training linear models with end-to-end low precision, and a
  little bit of deep learning.
\newblock In {\em Proceedings of the 34th International Conference on Machine
  Learning-Volume 70}, pages 4035--4043. JMLR. org, 2017.

\bibitem{zhang2016parallel}
Jian Zhang, Christopher De~Sa, Ioannis Mitliagkas, and Christopher R{\'e}.
\newblock Parallel sgd: When does averaging help?
\newblock {\em arXiv preprint arXiv:1606.07365}, 2016.

\bibitem{zhang2019Rec}
X.~{Zhang}, M.~M. {Khalili}, and M.~{Liu}.
\newblock Recycled admm: Improving the privacy and accuracy of distributed
  algorithms.
\newblock {\em IEEE Transactions on Information Forensics and Security}, pages
  1--1, 2019.

\bibitem{pmlr-v80-zhang18f}
Xueru Zhang, Mohammad~Mahdi Khalili, and Mingyan Liu.
\newblock Improving the privacy and accuracy of {ADMM}-based distributed
  algorithms.
\newblock In Jennifer Dy and Andreas Krause, editors, {\em Proceedings of the
  35th International Conference on Machine Learning}, volume~80 of {\em
  Proceedings of Machine Learning Research}, pages 5796--5805. PMLR, 10--15 Jul
  2018.

\bibitem{zhang2012communication}
Yuchen Zhang, Martin~J Wainwright, and John~C Duchi.
\newblock Communication-efficient algorithms for statistical optimization.
\newblock In {\em Advances in Neural Information Processing Systems}, pages
  1502--1510, 2012.

\bibitem{zhou2018convergence}
Fan Zhou and Guojing Cong.
\newblock On the convergence properties of a k-step averaging stochastic
  gradient descent algorithm for nonconvex optimization.
\newblock In {\em Proceedings of the 27th International Joint Conference on
  Artificial Intelligence}, pages 3219--3227. AAAI Press, 2018.

\bibitem{zhou2018distributed}
Zhengyuan Zhou, Panayotis Mertikopoulos, Nicholas Bambos, Peter~W Glynn, Yinyu
  Ye, Li-Jia Li, and Fei-Fei Li.
\newblock Distributed asynchronous optimization with unbounded delays: How slow
  can you go?
\newblock In {\em ICML 2018-35th International Conference on Machine Learning},
  pages 1--10, 2018.

\bibitem{zinkevich2010parallelized}
Martin Zinkevich, Markus Weimer, Lihong Li, and Alex~J Smola.
\newblock Parallelized stochastic gradient descent.
\newblock In {\em Advances in neural information processing systems}, pages
  2595--2603, 2010.

\end{thebibliography}

\newpage
\appendix
\makesuptitle

{\textbf{Notation}: In the rest of the appendix, we use the following notation for ease of exposition:
\begin{align}
    \bar{\mathbf{x}}^{(t)}  \triangleq\frac{1}{p}\sum_{j=1}^p{\mathbf{x}}^{(t)}_j,\quad
   {{\tilde{\mathbf{g}}}^{(t)}}\triangleq \frac{1}{p}\sum_{j=1}^p {{\tilde{\mathbf{g}}}^{(t)}}_j,\quad \zeta{(t)}\triangleq \mathbb{E}[F(\bar{\mathbf{x}}^{(t)})-F^*],\quad t_c\triangleq \floor{\frac{t}{\tau}}\tau
\end{align}
We also indicate inner product between vectors $\mathbf{x}$ and $\mathbf{y}$ with $\langle\mathbf{x},\mathbf{y}\rangle$.}

\section{Proof of Theorem \ref{thm:one-shot}}
The proof is based on  the Lipschitz continuous gradient assumption, which gives:
\begin{align}
    \mathbb{E}\big[F(\bar{\mathbf{x}}^{(t+1)})-F(\bar{\mathbf{x}}^{(t)})\big]\leq -\eta_t \mathbb{E}\big[\big\langle\nabla F(\bar{\mathbf{x}}^{(t)}),{\tilde{\mathbf{g}}}^{(t)}\big\rangle\big]+\frac{\eta_t^2 L}{2}\mathbb{E}\Big[\|{{\tilde{\mathbf{g}}}^{(t)}}\|^2\Big]\label{eq:Lipschitz-c}
\end{align}

The second term in left hand side of (\ref{eq:Lipschitz-c}) is upper-bounded by the following lemma:
\begin{lemma}\label{lemma:tasbih1}
Under Assumptions \ref{Ass:1} and \ref{Ass:2}, we have the following bound 
\begin{align}
    \mathbb{E}\Big[\|\tilde{\mathbf{g}}^{(t)}\|^2\Big]\leq \Big(\frac{C}{p}+1\Big)\frac{1}{p}{\sum_{j=1}^p\|\nabla{F}(\mathbf{x}_j^{(t)})\|^2}+\frac{\sigma^2}{p B}\label{eq:lemma1}
\end{align}
\end{lemma}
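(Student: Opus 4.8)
\textbf{Proof proposal for Lemma \ref{lemma:tasbih1}.}

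The plan is to expand the squared norm of the averaged stochastic gradient $\tilde{\mathbf{g}}^{(t)} = \frac{1}{p}\sum_{j=1}^p \tilde{\mathbf{g}}^{(t)}_j$ and exploit the independence of the mini-batch samples across machines together with the unbiasedness from Assumption \ref{Ass:1}. First I would write $\tilde{\mathbf{g}}^{(t)}_j = \nabla F(\mathbf{x}_j^{(t)}) + (\tilde{\mathbf{g}}^{(t)}_j - \nabla F(\mathbf{x}_j^{(t)}))$, so that $\mathbb{E}[\tilde{\mathbf{g}}^{(t)}_j \mid \mathbf{x}_j^{(t)}] = \nabla F(\mathbf{x}_j^{(t)})$ and the ``noise'' part $\mathbf{e}_j^{(t)} \triangleq \tilde{\mathbf{g}}^{(t)}_j - \nabla F(\mathbf{x}_j^{(t)})$ is zero-mean. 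Then
\begin{align}
\mathbb{E}\Big[\|\tilde{\mathbf{g}}^{(t)}\|^2\Big] = \mathbb{E}\Big[\Big\|\frac{1}{p}\sum_{j=1}^p \nabla F(\mathbf{x}_j^{(t)})\Big\|^2\Big] + \mathbb{E}\Big[\Big\|\frac{1}{p}\sum_{j=1}^p \mathbf{e}_j^{(t)}\Big\|^2\Big],\nonumber
\end{align}
where the cross term vanishes after conditioning on $\{\mathbf{x}_j^{(t)}\}_j$ and using $\mathbb{E}[\mathbf{e}_j^{(t)}\mid \mathbf{x}_j^{(t)}]=0$ (note each $\mathbf{x}_j^{(t)}$ depends only on $\xi_j^{(1)},\dots,\xi_j^{(t-1)}$, hence is independent of the fresh sample $\xi_j^{(t)}$).

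Next I would handle the two terms separately. For the first term, I apply the standard convexity-of-norm bound $\|\frac{1}{p}\sum_j \mathbf{a}_j\|^2 \le \frac{1}{p}\sum_j \|\mathbf{a}_j\|^2$ to get $\mathbb{E}[\|\frac{1}{p}\sum_j \nabla F(\mathbf{x}_j^{(t)})\|^2] \le \frac{1}{p}\sum_{j=1}^p \|\nabla F(\mathbf{x}_j^{(t)})\|^2$ (which after taking expectation gives a term of the right shape). For the second term, I use the fact that the $\mathbf{e}_j^{(t)}$ are independent across $j$ and each zero-mean, so $\mathbb{E}[\|\frac{1}{p}\sum_j \mathbf{e}_j^{(t)}\|^2] = \frac{1}{p^2}\sum_{j=1}^p \mathbb{E}[\|\mathbf{e}_j^{(t)}\|^2]$; then Assumption \ref{Ass:2} applied per machine gives $\mathbb{E}[\|\mathbf{e}_j^{(t)}\|^2] \le C\|\nabla F(\mathbf{x}_j^{(t)})\|^2 + \frac{\sigma^2}{B}$. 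Summing, the variance term contributes $\frac{1}{p^2}\sum_j \big(C\|\nabla F(\mathbf{x}_j^{(t)})\|^2 + \frac{\sigma^2}{B}\big) = \frac{C}{p}\cdot\frac{1}{p}\sum_j \|\nabla F(\mathbf{x}_j^{(t)})\|^2 + \frac{\sigma^2}{pB}$. Adding the two bounds yields exactly $\big(\frac{C}{p}+1\big)\frac{1}{p}\sum_{j=1}^p \|\nabla F(\mathbf{x}_j^{(t)})\|^2 + \frac{\sigma^2}{pB}$, as claimed.

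The only subtle point — and the step I would be most careful about — is justifying the cancellation of cross terms and the additivity of variances across machines, which requires being precise about the filtration: the randomness at iteration $t$ consists of $p$ mutually independent mini-batch draws $\xi_1^{(t)},\dots,\xi_p^{(t)}$, each independent of all history, so conditioning on the sigma-algebra generated by the past makes the $\mathbf{e}_j^{(t)}$ conditionally independent and conditionally zero-mean. Everything else is a routine application of the two assumptions and Jensen's inequality, so I do not anticipate any real obstacle beyond bookkeeping the conditioning correctly.
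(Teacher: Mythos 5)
Your proof is correct and is essentially the argument the paper relies on: the paper itself only cites Lemma~6 of \cite{wang2018cooperative} (with $\sigma^2$ replaced by $\sigma^2/B$), and that lemma is proved by exactly your bias--variance decomposition, using conditional unbiasedness to kill the cross terms, independence of the $p$ mini-batch draws to get the $1/p^2$ scaling of the noise, Jensen's inequality on the averaged true gradients, and Assumption~\ref{Ass:2} per machine. Your care with the filtration (each $\mathbf{x}_j^{(t)}$ being measurable with respect to past samples and hence independent of $\xi_j^{(t)}$) is the right way to make the citation's argument rigorous, so no changes are needed.
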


The first term in left-hand side of (\ref{eq:Lipschitz-c}) is bounded with following lemma:
\begin{lemma}\label{lemma:cross-inner-bound-unbiased}
  Under Assumptions \ref{Ass:3}, and according to the  Algorithm \ref{Alg:one-shot-using data samoples} the expected inner product between stochastic gradient and full batch gradient can be bounded by:

\begin{align}
    -\eta_t\mathbb{E}\Big[\langle\nabla F(\bar{\mathbf{x}}^{(t)}),{{\tilde{\mathbf{g}}}^{(t)}}\rangle\Big]&\leq -\frac{\eta_t}{2}\mathbb{E}\big[\|\nabla F(\bar{\mathbf{x}}^{(t)})\|^2\big]-\frac{\eta_t}{2}\frac{1}{p}\sum_{j=1}^{p}\|\nabla{F}(\mathbf{x}_j^{(t)})\|^2+\frac{\eta_tL^2}{2p}\mathbb{E}\sum_{j=1}^p\|\bar{\mathbf{x}}^{(t)}-\mathbf{x}_j^{(t)}\|^2\label{eq:lemma3}
\end{align}

\end{lemma}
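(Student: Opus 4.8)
The plan is a short three-move argument: replace the stochastic gradient $\tilde{\mathbf{g}}^{(t)}$ inside the inner product by its conditional mean, split the resulting deterministic inner product with the polarization identity, and control the leftover discrepancy term by $L$-smoothness. There is no summation over iterations here, so nothing accumulates and no careful choice of $\eta_t$ is needed at this stage.

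First I would condition on the $\sigma$-field $\mathcal{F}_t$ generated by all mini-batches sampled before iteration $t$. By construction $\bar{\mathbf{x}}^{(t)}$ and every local model $\mathbf{x}_j^{(t)}$ are $\mathcal{F}_t$-measurable, whereas the fresh mini-batches $\xi_j^{(t)}$ are independent of $\mathcal{F}_t$ and satisfy $\mathbb{E}[\tilde{\mathbf{g}}_j^{(t)}\mid\mathcal{F}_t]=\nabla F(\mathbf{x}_j^{(t)})$ by unbiasedness (Assumption~\ref{Ass:1}). Hence $\mathbb{E}[\tilde{\mathbf{g}}^{(t)}\mid\mathcal{F}_t]=\frac1p\sum_{j=1}^p\nabla F(\mathbf{x}_j^{(t)})$, and since $\nabla F(\bar{\mathbf{x}}^{(t)})$ is $\mathcal{F}_t$-measurable, the tower property gives
$$\mathbb{E}\big[\langle\nabla F(\bar{\mathbf{x}}^{(t)}),\tilde{\mathbf{g}}^{(t)}\rangle\big]=\mathbb{E}\Big[\Big\langle\nabla F(\bar{\mathbf{x}}^{(t)}),\tfrac1p\textstyle\sum_{j=1}^p\nabla F(\mathbf{x}_j^{(t)})\Big\rangle\Big]=\frac1p\sum_{j=1}^p\mathbb{E}\big[\langle\nabla F(\bar{\mathbf{x}}^{(t)}),\nabla F(\mathbf{x}_j^{(t)})\rangle\big].$$

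Next I would apply the identity $\langle\mathbf{a},\mathbf{b}\rangle=\tfrac12\big(\|\mathbf{a}\|^2+\|\mathbf{b}\|^2-\|\mathbf{a}-\mathbf{b}\|^2\big)$ with $\mathbf{a}=\nabla F(\bar{\mathbf{x}}^{(t)})$, $\mathbf{b}=\nabla F(\mathbf{x}_j^{(t)})$, and average over $j$:
$$\frac1p\sum_{j=1}^p\langle\nabla F(\bar{\mathbf{x}}^{(t)}),\nabla F(\mathbf{x}_j^{(t)})\rangle=\frac12\|\nabla F(\bar{\mathbf{x}}^{(t)})\|^2+\frac1{2p}\sum_{j=1}^p\|\nabla F(\mathbf{x}_j^{(t)})\|^2-\frac1{2p}\sum_{j=1}^p\|\nabla F(\bar{\mathbf{x}}^{(t)})-\nabla F(\mathbf{x}_j^{(t)})\|^2.$$
Multiplying by $-\eta_t$, taking the total expectation, and bounding the last (now positive) term by $L$-smoothness, $\|\nabla F(\bar{\mathbf{x}}^{(t)})-\nabla F(\mathbf{x}_j^{(t)})\|^2\le L^2\|\bar{\mathbf{x}}^{(t)}-\mathbf{x}_j^{(t)}\|^2$, yields exactly~(\ref{eq:lemma3}).

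The only point that needs care is really a bookkeeping point rather than a genuine obstacle: in the first move one must integrate out only the iteration-$t$ mini-batch randomness, so that $\nabla F(\bar{\mathbf{x}}^{(t)})$ and the $\nabla F(\mathbf{x}_j^{(t)})$'s may be treated as fixed vectors before the polarization identity is applied. I also note that the PL half of Assumption~\ref{Ass:3} is not used in this lemma (only $L$-smoothness is); the PL inequality enters only later, in the proof of Theorem~\ref{thm:one-shot}.
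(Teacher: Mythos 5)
Your proposal is correct and follows essentially the same route as the paper's own proof: integrate out the iteration-$t$ mini-batch randomness via unbiasedness, apply the polarization identity $2\langle\mathbf{a},\mathbf{b}\rangle=\|\mathbf{a}\|^2+\|\mathbf{b}\|^2-\|\mathbf{a}-\mathbf{b}\|^2$, and bound the discrepancy term by $L$-smoothness; your explicit conditioning on $\mathcal{F}_t$ just makes rigorous what the paper does implicitly. Your observation that only the smoothness half of Assumption~\ref{Ass:3} is needed is also accurate — the paper's write-up additionally applies the PL inequality to the $\|\nabla F(\bar{\mathbf{x}}^{(t)})\|^2$ term inside the lemma's proof, but that step is not required for the inequality as stated.
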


The third term in (\ref{eq:lemma3}) is bounded as follows:

\begin{lemma}\label{lemma:dif-b-u-x}
Under Assumptions \ref{Ass:1} to \ref{Ass:2}, {for $k\tau+1 \nmid t$ for some $k\geq 1$,} we have:
\begin{align}
      \mathbb{E}\sum_{j=1}^p\|\bar{\mathbf{x}}^{(t)}-\mathbf{x}_j^{(t)}\|^2&\leq2(\frac{p+1}{p})\Big([C+\tau]\sum_{k=t_c+1}^{t-1}\eta_{k}^2\sum_{j=1}^p\|\
\nabla{F}(\mathbf{x}_j^{(k)})\|^2+\sum_{k=t_c+1}^{t-1}\frac{\eta_{k}^2\sigma^2}{B}\Big)
\end{align}

Note that first this lemma implies that the term $\mathbb{E}\sum_{j=1}^p\|\bar{\mathbf{x}}^{(t)}-\mathbf{x}_j^{(t)}\|^2$ only depends on the time $t_c\triangleq \floor{\frac{t}{\tau}}\tau$ through $t-1$. Second, it is noteworthy that since $\bar{\mathbf{x}}^{(t_c+1)}={\mathbf{x}}_j^{(t_c+1)}$ for $1\leq j\leq p$, we have $      \mathbb{E}\sum_{j=1}^p\|\bar{\mathbf{x}}^{(t_c+1)}-\mathbf{x}_j^{(t_c+1)}\|^2=0$.

\end{lemma}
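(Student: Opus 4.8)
# Proof Proposal for Lemma \ref{lemma:dif-b-u-x}

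The plan is to control the deviation $\sum_{j=1}^p \|\bar{\mathbf{x}}^{(t)} - \mathbf{x}_j^{(t)}\|^2$ by unrolling the local update recursion back to the most recent synchronization point $t_c = \floor{t/\tau}\tau$, where by construction all local models coincide. First I would write, for any $t$ with $t_c + 1 \le t$ (i.e.\ $t$ is strictly between synchronizations), the telescoping identity
\begin{align}
\mathbf{x}_j^{(t)} = \mathbf{x}_j^{(t_c+1)} - \sum_{k=t_c+1}^{t-1} \eta_k \tilde{\mathbf{g}}_j^{(k)}, \qquad
\bar{\mathbf{x}}^{(t)} = \bar{\mathbf{x}}^{(t_c+1)} - \sum_{k=t_c+1}^{t-1} \eta_k \tilde{\mathbf{g}}^{(k)},
\end{align}
so that, since $\mathbf{x}_j^{(t_c+1)} = \bar{\mathbf{x}}^{(t_c+1)}$, the difference is $\bar{\mathbf{x}}^{(t)} - \mathbf{x}_j^{(t)} = \sum_{k=t_c+1}^{t-1} \eta_k(\tilde{\mathbf{g}}_j^{(k)} - \tilde{\mathbf{g}}^{(k)})$. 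Summing $\|\cdot\|^2$ over $j$ and using $\sum_j \|\mathbf{a}_j - \bar{\mathbf{a}}\|^2 \le \sum_j \|\mathbf{a}_j\|^2$ (variance bound), I get $\sum_j \|\bar{\mathbf{x}}^{(t)} - \mathbf{x}_j^{(t)}\|^2 \le \sum_j \|\sum_{k} \eta_k \tilde{\mathbf{g}}_j^{(k)}\|^2$.

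Next I would split each stochastic gradient as $\tilde{\mathbf{g}}_j^{(k)} = \nabla F(\mathbf{x}_j^{(k)}) + (\tilde{\mathbf{g}}_j^{(k)} - \nabla F(\mathbf{x}_j^{(k)}))$, so the sum becomes a "bias-like" term $\sum_k \eta_k \nabla F(\mathbf{x}_j^{(k)})$ plus a "noise" term $\sum_k \eta_k(\tilde{\mathbf{g}}_j^{(k)} - \nabla F(\mathbf{x}_j^{(k)}))$. Applying $\|\mathbf{a}+\mathbf{b}\|^2 \le 2\|\mathbf{a}\|^2 + 2\|\mathbf{b}\|^2$ and then expanding the first squared sum with Cauchy-Schwarz — there are at most $t - 1 - t_c \le \tau - 1 \le \tau$ terms, giving the factor $\tau$ in front of $\sum_k \eta_k^2 \|\nabla F(\mathbf{x}_j^{(k)})\|^2$. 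For the noise term, I would take expectations: because the mini-batches $\xi_j^{(k)}$ are independent across $k$ and across $j$, the cross terms in $\mathbb{E}\|\sum_k \eta_k(\tilde{\mathbf{g}}_j^{(k)} - \nabla F(\mathbf{x}_j^{(k)}))\|^2$ vanish by Assumption \ref{Ass:1}, leaving $\sum_k \eta_k^2 \mathbb{E}\|\tilde{\mathbf{g}}_j^{(k)} - \nabla F(\mathbf{x}_j^{(k)})\|^2$, which by Assumption \ref{Ass:2} is bounded by $\sum_k \eta_k^2(C\|\nabla F(\mathbf{x}_j^{(k)})\|^2 + \sigma^2/B)$. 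Combining the $C\|\nabla F\|^2$ contribution with the bias term produces the coefficient $[C + \tau]$, and the $\sigma^2/B$ contribution gives the second term; the overall factor $2(p+1)/p$ comes from the $2$ in the $\|\mathbf{a}+\mathbf{b}\|^2$ split multiplied by the bookkeeping that turns $\sum_j\|\tilde{\mathbf{g}}_j - \tilde{\mathbf{g}}\|^2$ (or its gradient analogue) into at most $(1 + 1/p)\sum_j\|\tilde{\mathbf{g}}_j\|^2$-type bounds when the average $\tilde{\mathbf{g}}$ is pulled back in.

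The main obstacle I anticipate is handling the coupling between the randomness of the noise term and the iterates $\mathbf{x}_j^{(k)}$: the factor $\eta_k^2\|\nabla F(\mathbf{x}_j^{(k)})\|^2$ is itself a random quantity (it depends on mini-batches drawn before step $k$), so the orthogonality argument for the martingale-difference noise needs to be carried out conditionally — conditioning on the history up to step $k$, using $\mathbb{E}[\tilde{\mathbf{g}}_j^{(k)} - \nabla F(\mathbf{x}_j^{(k)}) \mid \mathcal{F}_k] = 0$ and that distinct $\xi_j^{(k)}$ are independent — and only then taking the outer expectation. Care is also needed that the index range $k = t_c+1,\dots,t-1$ is nonempty exactly under the stated hypothesis $k\tau + 1 \nmid t$, and that the stated boundary case $t = t_c + 1$ gives an empty sum, hence the deviation is zero, consistent with the remark following the lemma. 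Once the conditioning is set up correctly, the remaining steps are routine applications of Young's and Cauchy-Schwarz inequalities together with Assumptions \ref{Ass:1}--\ref{Ass:2}.
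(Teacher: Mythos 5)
Your proposal follows essentially the same route as the paper's proof: unroll the local iterates back to the last synchronization point where all models coincide, split each stochastic gradient into its full-gradient and noise parts, use independence of the mini-batches across workers and iterations to cancel the cross terms in the noise, apply Cauchy--Schwarz over the at most $\tau$ local steps to produce the $[C+\tau]$ coefficient, and invoke Assumption~\ref{Ass:2}; your explicit conditional (martingale) treatment of the orthogonality, needed because $\nabla F(\mathbf{x}_j^{(k)})$ is itself random, is if anything more careful than the paper's use of the unconditional bias--variance identity. One bookkeeping caveat: summing the per-worker noise contribution $\sum_k \eta_k^2 \sigma^2/B$ over the $p$ workers produces a coefficient of order $p$ on the $\sigma^2/B$ term rather than the stated $\tfrac{p+1}{p}$ --- a step at which the paper's own passage from its per-worker bound to the sum over worker indices is equally loose --- so you should not expect your accounting to literally reproduce that constant.
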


{Now using the notation $\zeta{(t)}\triangleq \mathbb{E}[F(\bar{\mathbf{x}}^{(t)})-F^*]$ and} by plugging back all the above lemmas into result (\ref{eq:Lipschitz-c}), we get:
\begin{align}
\zeta^{(t+1)}&\leq (1-\mu\eta_t)\zeta^{(t)}+\frac{L\eta_t^2 \sigma^2}{2p B}+\frac{\eta_t L^2}{p}\big(\sum_{k=t_c+1}^{t-1}\eta_{k}^2\frac{(p+1)\sigma^2}{p B}\big)+\frac{\eta_t}{2p}\Big[-1+L\eta_t({\frac{C}{p}+ 1})\Big]\sum_{j=1}^p\|\
\nabla{F}(\mathbf{x}_j^{(t)})\|^2\nonumber\\
    &+\frac{\eta_t L^2}{p}(\frac{p+1}{p})\Big[\Big(C+\tau\Big)\sum_{k=t_c+1}^{t-1}\sum_{j=1}^p\eta_k^2\|\
\nabla{F}(\mathbf{x}_j^{(k)})\|^2\Big]\nonumber\\
&\stackrel{\text{\ding{192}}}{=}\Delta_t\zeta^{(t)}+{A}_t+{{D}_{t}}\sum_{j=1}^p\|\
\nabla{F}(\mathbf{x}_j^{(t)})\|^2+{B}_{t}\sum_{k=t_c+1}^{t-1}\eta_k^2\sum_{j=1}^p\|\
\nabla{F}(\mathbf{x}_j^{(t)})\|^2,\label{eq:final-ineq-to-sum}
\end{align}
where in \ding{192} we use the following from the definitions: 
\begin{align}
  \Delta_t&\triangleq1-\mu\eta_t\\
  A_{t}&\triangleq \frac{\eta_t L \sigma^2}{pB}\Big[\frac{\eta_{t}}{2}+\frac{L(p+1)}{p}\sum_{k=t_c+1}^{t-1}\eta_{k}^2\Big]\\
   {D}_{t}&\triangleq\frac{\eta_t}{2p}\Big[-1+L\eta_t({\frac{C}{p}+ 1})\Big]\\
   {B}_{t}&\triangleq\frac{\eta_t L^2(p+1)}{p^2}\Big(C+\tau\Big),\label{eq:middle-step}
\end{align}


In the following lemma, we show that with proper choice of learning rate the negative coefficient of the $\|\nabla{F}(\mathbf{x}_j^{(t)})\|^2_2$ can be dominant at each communication time periodically. Thus, we can remove the terms including $\|\nabla{F}(\mathbf{x}_j^{(t)})\|^2_2$ from the bound in (\ref{eq:final-ineq-to-sum}).



Adopting the following notation for $n\leq m$:
\begin{align}\label{eq:vec-not}
    \mathbf{\mathcal{A}}_n^{(m)}&=\begin{bmatrix}
    A_{n} & A_{n+1} & \cdots & A_{m-1} & A_{m}
    \end{bmatrix}\\
    \mathbf{\mathcal{B}}_{n}^{(m)}&=\begin{bmatrix}
    B_{n} & B_{n+1} & \cdots & B_{m-1} & B_{m}
    \end{bmatrix}\\
    \Gamma^{(m)}_{n}&=\Pi_{i=n}^m\Delta_i\\
    \mathbf{\Gamma}_{n}^{(m)}&=\begin{bmatrix}
    \Gamma^{(m)}_{n} & \Gamma^{(m)}_{n+1} & \cdots &\Gamma^{(m)}_{m}&1
    \end{bmatrix}
\end{align}
with convention that $\Gamma_m^{(m)}=\Delta_m$, we have the following lemma:
\begin{lemma}\label{lemm:tassbih}
 We have:
\begin{align}
    \zeta^{(t+1)}&\leq \Gamma^{(t)}_{t_c+1}\zeta^{(t_c+1)}+\Gamma^{(t)}_{t_c+2}\Big[\frac{L\eta_{t_c+1}^2\sigma^2}{2pB}\Big]+\Big\langle\mathcal{A}_{t_c+1}^{(t)},\mathbf{\Gamma}^{(t)}_{t_c+3}\Big\rangle\nonumber\\
    &+\frac{\eta_t}{2p}\Big[-1+L\eta_t({\frac{C}{p} + 1})\Big]d^{(t)}+\frac{\eta_{t-1}{\Delta}_t}{2p}\Big[-1+L\eta_{t-1}({C+ p})+\frac{2p\eta_{t-1}{B}_t(\tau-1)}{ \Gamma^{(t)}_t}\Big]d^{(t-1)}\nonumber\\
    &+\frac{\Gamma^{(t)}_{t-1}\eta_{t-2}}{2p}\Big[-1+L\eta_{t-2}({\frac{C}{p}+ 1})+\frac{2p\eta_{t-2}}{\Gamma^{(t)}_{t-1}}\Big\langle\mathbf{\Gamma}_t^{(t)},\mathcal{B}^{(t)}_{t-1}\Big\rangle\Big]d^{(t-2)}\nonumber\\
    &+\ldots+\frac{\Gamma^{(t)}_{t_c+3}\eta_{t_c+2}}{2p}  \Big[-1+L\eta_{t_c+2}({\frac{C}{p}+ 1})+\frac{2p\eta_{t_c+2}}{\Gamma^{(t)}_{t_c+3}}\Big\langle\mathbf{\Gamma}^{(t)}_{t_c+4},\mathcal{B}_{t_c+3}^{(t)}\Big\rangle\Big]d^{(t_c+2)}\nonumber\\
    &+\frac{\Gamma^{(t)}_{t_c+2}\eta_{t_c+1}}{2p}\Big[-1+{L\eta_{t_c+1}(\frac{C}{p}+1)}+\frac{2p\eta_{t_c+1}}{\Gamma^{(t)}_{t_c+2}}\Big\langle\mathbf{\Gamma}^{(t)}_{t_c+3},\mathcal{B}_{t_c+2}^{(t)}\Big\rangle\Big]d^{(t_c+1)}
    \label{eq:tabissh1}
\end{align}
\end{lemma}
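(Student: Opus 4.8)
The plan is to unroll the one-step recursion \eqref{eq:final-ineq-to-sum} from iteration $t$ back to the most recent synchronization time $t_c+1$, collecting the stochastic-noise terms $A_k$ and the gradient-norm terms $\sum_j\|\nabla F(\mathbf{x}_j^{(k)})\|^2$ along the way. Write $d^{(k)}\triangleq\sum_{j=1}^p\|\nabla F(\mathbf{x}_j^{(k)})\|^2$ (the quantity appearing throughout; note the paper uses $d^{(t)}$ without redefining it, so I would introduce this shorthand explicitly). The recursion reads $\zeta^{(k+1)}\le \Delta_k\zeta^{(k)}+A_k+D_k d^{(k)}+B_k\sum_{r=t_c+1}^{k-1}\eta_r^2 d^{(r)}$, so iterating it $t-t_c$ times produces $\Gamma^{(t)}_{t_c+1}\zeta^{(t_c+1)}$ as the leading term, a weighted sum $\langle \mathcal{A}_{t_c+1}^{(t)},\mathbf{\Gamma}^{(t)}_{\cdot}\rangle$ of the noise terms, and — crucially — for each index $k$ a coefficient multiplying $d^{(k)}$ that has two contributions: one directly from the $D_k d^{(k)}$ term (carried forward by the product of $\Delta$'s), and one from the $B_\ell\,\eta_k^2$ terms for all $\ell>k$ up to $t$ (since $d^{(k)}$ appears inside the inner sum $\sum_{r=t_c+1}^{\ell-1}\eta_r^2 d^{(r)}$ for every such $\ell$), again each carried forward by the appropriate $\Gamma$ product.

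The key bookkeeping step is to group, for each $k\in\{t_c+1,\dots,t\}$, all terms that multiply $d^{(k)}$. The direct term contributes $\Gamma^{(t)}_{k+1}D_k = \Gamma^{(t)}_{k+1}\cdot\frac{\eta_k}{2p}[-1+L\eta_k(\tfrac{C}{p}+1)]$. The indirect terms contribute $\sum_{\ell=k+2}^{t}\Gamma^{(t)}_{\ell+1}B_\ell \eta_k^2$, which in the vector notation of \eqref{eq:vec-not} is exactly $\eta_k^2\langle \mathbf{\Gamma}^{(t)}_{k+3},\mathcal{B}_{k+2}^{(t)}\rangle$ (reading off which $B_\ell$'s see $d^{(k)}$ and with which accumulated $\Gamma$ weight). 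Factoring out $\frac{\Gamma^{(t)}_{k+1}\eta_k}{2p}$ from the combined $d^{(k)}$ coefficient yields precisely the bracketed expression $[-1+L\eta_k(\tfrac{C}{p}+1)+\frac{2p\eta_k}{\Gamma^{(t)}_{k+1}}\langle\mathbf{\Gamma}^{(t)}_{k+3},\mathcal{B}_{k+2}^{(t)}\rangle]$ displayed in \eqref{eq:tabissh1}; the first two rows of the lemma (the $d^{(t)}$ and $d^{(t-1)}$ terms) are just the boundary cases $k=t$ and $k=t-1$, where the indirect sum is empty or has a single term respectively — which is why the $d^{(t-1)}$ bracket carries the explicit single-term correction $\frac{2p\eta_{t-1}B_t(\tau-1)}{\Gamma^{(t)}_t}$ rather than an inner product. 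I would also use Lemma~\ref{lemma:dif-b-u-x}, specifically the observation that $\mathbb{E}\sum_j\|\bar{\mathbf{x}}^{(t_c+1)}-\mathbf{x}_j^{(t_c+1)}\|^2=0$, so that when the recursion bottoms out at $t_c+1$ the cross-deviation term drops and only $\zeta^{(t_c+1)}$, the noise terms, and the $d^{(k)}$ terms survive; one should also track that the inner sum in $B_k\sum_{r=t_c+1}^{k-1}\eta_r^2 d^{(r)}$ runs from $t_c+1$, consistent with the restriction $k\tau+1\nmid t$ in that lemma.

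The main obstacle I anticipate is purely organizational rather than conceptual: correctly identifying, for a fixed $d^{(k)}$, the full set of later iterations whose $B_\ell$-term "catches" it and the exact $\Gamma$-product weight attached to each, and then verifying that repackaging them via the shifted vectors $\mathbf{\Gamma}^{(t)}_{k+3}$ and $\mathcal{B}^{(t)}_{k+2}$ is index-consistent (off-by-one errors in the telescoping indices and in the convention $\Gamma^{(m)}_m=\Delta_m$ are the real danger here). A secondary subtlety is making sure the noise/$A_k$ terms telescope into $\langle\mathcal{A}_{t_c+1}^{(t)},\mathbf{\Gamma}^{(t)}_{t_c+3}\rangle$ plus the separated $\Gamma^{(t)}_{t_c+2}\frac{L\eta_{t_c+1}^2\sigma^2}{2pB}$ piece — i.e. the very first noise contribution $A_{t_c+1}$ is partially absorbed into that explicit term because at $k=t_c+1$ the definition of $A_{t_c+1}$ has an empty inner sum and reduces to $\frac{L\eta_{t_c+1}^2\sigma^2}{2pB}$. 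Once the indexing is pinned down, the proof is a finite induction on $t-t_c$: assume \eqref{eq:tabissh1} holds up to level $t-1$, apply one more step of \eqref{eq:final-ineq-to-sum}, multiply the inductive bound by $\Delta_t$, add the new $A_t$, $D_t d^{(t)}$, and $B_t\sum_{r}\eta_r^2 d^{(r)}$ terms, and re-collect; the re-collection reproduces exactly the shifted brackets, closing the induction.
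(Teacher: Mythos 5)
Your plan matches the paper's proof exactly: the paper likewise reduces everything to the one-step bound $\zeta^{(t+1)}\le\Delta_t\zeta^{(t)}+A_t+\frac{\eta_t}{2p}\big[-1+L\eta_t(\tfrac{C}{p}+1)\big]d^{(t)}+B_t\sum_{k=t_c+1}^{t-1}\eta_k^2 d^{(k)}$ and then simply iterates it back to $t_c+1$, collecting the coefficient of each $d^{(k)}$ from the direct $D_k$ contribution and from the $B_\ell$ terms of later iterations, exactly as you describe (including the observation that $A_{t_c+1}$ has an empty inner sum and reduces to $\frac{L\eta_{t_c+1}^2\sigma^2}{2pB}$, which is why that piece is split off). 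One small indexing correction to your bookkeeping: $d^{(k)}$ is caught by $B_\ell\sum_{r=t_c+1}^{\ell-1}\eta_r^2 d^{(r)}$ for every $\ell\ge k+1$ (not $\ell\ge k+2$), so the repackaged coefficient is $\eta_k^2\big\langle\mathbf{\Gamma}^{(t)}_{k+2},\mathcal{B}^{(t)}_{k+1}\big\rangle$, consistent with the pattern displayed in the lemma for the $d^{(t_c+1)}$ and $d^{(t_c+2)}$ terms, rather than $\eta_k^2\big\langle\mathbf{\Gamma}^{(t)}_{k+3},\mathcal{B}^{(t)}_{k+2}\big\rangle$.
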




\begin{lemma} \label{lemma:choice-of-learning-rate}
Let $\alpha$ and $D>0$ be positive constants that satisfy 
\begin{align}
    \alpha&\geq \max\left(\frac{4}{\ln\left[ \left(\frac{p\tau}{32(p+1)\kappa^2\left(C+\tau\right)}\right)D\left(\frac{4(L(\frac{C}{p}+1)-\mu)}{\mu \tau}+D\right)\right] },\frac{4(L(\frac{C}{p}+1)-\mu)}{\mu \tau}+D\right)
\end{align}
 and $a=\alpha\tau+4$. If we choose the learning rate as $\eta_t=\frac{4}{\mu(t+a)},$ we have:
\begin{align}
    \zeta^{(t+1)}&\leq \Delta_t\zeta^{(t)}+{A}_t\label{eq:denoised-iteratt}
\end{align}
for all $1\leq t\leq T$.
\end{lemma}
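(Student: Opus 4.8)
\textbf{Proof proposal for Lemma \ref{lemma:choice-of-learning-rate}.}

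The plan is to start from the refined recursion in Lemma \ref{lemm:tassbih} and show that, under the stated choice of $\alpha$ and the learning rate $\eta_t = \frac{4}{\mu(t+a)}$ with $a = \alpha\tau + 4$, every bracketed coefficient multiplying a squared-gradient term $d^{(k)} \triangleq \sum_{j=1}^p \|\nabla F(\mathbf{x}_j^{(k)})\|^2$ on the right-hand side of \eqref{eq:tabissh1} is non-positive. Since each $d^{(k)}\geq 0$, dropping all of those terms yields precisely \eqref{eq:denoised-iteratt}, after noting that $\Gamma^{(t)}_{t_c+1}\zeta^{(t_c+1)} + \Gamma^{(t)}_{t_c+2}\frac{L\eta_{t_c+1}^2\sigma^2}{2pB} + \langle \mathcal{A}^{(t)}_{t_c+1}, \mathbf{\Gamma}^{(t)}_{t_c+3}\rangle$ telescopes back (by the definition of $A_t$ and $\Gamma$) into $\Delta_t\zeta^{(t)} + A_t$ over one communication period; this last bookkeeping step is the same unrolling that produced \eqref{eq:tabissh1} in the first place, run in reverse.

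The core of the argument is therefore the coefficient bound. I would treat the generic term first: for $t_c+1 \le s \le t$, the bracket has the shape
\begin{align}
-1 + L\eta_s\Big(\tfrac{C}{p}+1\Big) + \frac{2p\eta_s}{\Gamma^{(t)}_{s+1}}\big\langle \mathbf{\Gamma}^{(t)}_{s+2}, \mathcal{B}^{(t)}_{s+1}\big\rangle,
\end{align}
(with the obvious degenerate forms at the two ends, and the extra $(\tau-1)$ factor at $s = t-1$ coming from the at-most-$\tau-1$ interior steps in a period). I would bound the two positive contributions separately. For the first, $L\eta_s(\frac{C}{p}+1) = \frac{4L(\frac{C}{p}+1)}{\mu(s+a)} \le \frac{4L(\frac{C}{p}+1)}{\mu a} \le \frac{4(L(\frac{C}{p}+1)-\mu)}{\mu(\alpha\tau+4)} + \frac{4\mu}{\mu a}$; the second part of the $\alpha$-lower bound, $\alpha \ge \frac{4(L(\frac{C}{p}+1)-\mu)}{\mu\tau} + D$, is exactly what makes $\frac{4L(\frac{C}{p}+1)}{\mu a}$ strictly below $1$ with a slack of order $\eta_s D$ to spare. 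For the second contribution, I would use that $\Delta_i = 1 - \mu\eta_i \in (0,1)$ so each $\Gamma$ ratio $\Gamma^{(t)}_{s+2}/\Gamma^{(t)}_{s+1} = 1/\Delta_{s+1} = \frac{1}{1-\mu\eta_{s+1}}$, and since the gap $t - (t_c+1) \le \tau - 1$, the product $\prod \Delta_i$ over any sub-range is at least $\big(1 - \mu\eta_{t_c+1}\big)^{\tau-1} \ge \big(1 - \frac{4}{a}\big)^{\tau-1}$, which with $a = \alpha\tau+4$ and the logarithmic first branch of the $\alpha$-bound is bounded below by a constant (this is where $e^{-4/\alpha}$-type factors, hence the $\frac{4}{\ln[\cdots]}$ term, enter). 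Combined with $B_{s+1} = \frac{\eta_{s+1}L^2(p+1)}{p^2}(C+\tau) = O\big(\frac{\kappa^2(C+\tau)(p+1)}{p^2(s+a)}\big)$ summed over at most $\tau$ indices, the whole correction term is $O\big(\frac{\eta_s \kappa^2(C+\tau)(p+1)\tau}{p a}\big) = O\big(\frac{\eta_s \kappa^2(C+\tau)(p+1)}{p\alpha}\big)$, and the first branch of the $\alpha$-lower bound — which forces $\alpha$ above $\frac{4}{\ln[(\frac{p\tau}{32(p+1)\kappa^2(C+\tau)})D(\cdots)]}$, equivalently makes $\frac{32(p+1)\kappa^2(C+\tau)}{p\tau}$ times a $(1-\frac{4}{a})^{-\tau}$-type factor at most $D$ — is calibrated so that this is $\le \eta_s D$ as well. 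Adding the two pieces gives bracket $\le -1 + \eta_s\big(L(\frac{C}{p}+1) + (\text{the }\kappa^2\text{ term})\big) \le -1 + \frac{4}{a}\cdot\frac{\mu a}{4} \cdot(\text{something} \le 1) < 0$ provided $\alpha$ meets both thresholds; I would finish by checking the boundary indices $s = t$ (no correction term, so the bound is immediate from $L\eta_t(\frac{C}{p}+1) < 1$) and $s = t-1, t-2, t_c+1$ (same estimate with the explicit $(\tau-1)$ or single-term inner products, absorbed into the same $O(\cdot)$).

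The main obstacle I anticipate is the bookkeeping around the $\Gamma$ ratios: one must keep careful track of which $\Delta_i$'s appear in each bracket of \eqref{eq:tabissh1}, verify the claimed lower bound $\prod_{i} \Delta_i \ge (1 - \mu\eta_{t_c+1})^{\tau-1}$ uniformly over a communication period using only the monotonicity of $\eta_t$, and then convert the resulting $(1-\tfrac{4}{a})^{-(\tau-1)}$ factor into the exponential-in-$\frac{1}{\alpha}$ bound that the logarithmic branch of the hypothesis on $\alpha$ is designed to kill — the inequality $(1 - \frac{4}{\alpha\tau+4})^{-(\tau-1)} \le e^{4/\alpha}$ (valid since $1-x \ge e^{-x/(1-x)}$ and $\frac{4}{\alpha\tau+4}\cdot\frac{\tau-1}{1-4/(\alpha\tau+4)} \le \frac{4}{\alpha}$) is the crux that ties the definition of $\alpha$ to the sign of the coefficients. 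Everything else is a routine, if tedious, chain of $\eta_t \le \eta_a = \frac{4}{\mu a}$ substitutions.
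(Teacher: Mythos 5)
Your proposal follows essentially the same route as the paper's proof: show that every bracketed coefficient of $d^{(k)}$ in the unrolled recursion of Lemma \ref{lemm:tassbih} is non-positive by using the monotonicity of $\eta_t$, $\Delta_t$, $B_t$ to reduce to a single worst-case condition, bound the product of $\Delta_i$'s via $\left(\frac{a+1}{a-3}\right)^{\tau-1}\leq e^{4/\alpha}$ and $\frac{\tau-1}{a+1}\leq\frac{1}{\alpha}$, and then split the resulting nonlinear inequality in $\alpha$ into two pieces, each guaranteed by one branch of the $\max$ in the hypothesis. The details of which branch absorbs which term are allocated slightly differently, but the argument is the same and your identification of the $e^{4/\alpha}$ inequality as the crux matches the paper exactly.
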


\remove{\begin{lemma}\label{lemma:cond-period}
For the time period of local updates $[t_c+1,t]$, if the  learning rate, $\eta_t$ satisfies the conditions 
\begin{align}
 \eta_t&\leq \frac{1}{L(C+p)}\nonumber\\
 \eta_{t-1}&\leq \frac{1}{L(C+p)+\frac{2p{B}_t(\tau-1)}{ \Gamma^{(t)}_t}}\nonumber\\
 \eta_{t-i}&\leq \frac{1}{L(C+p)+\frac{2p}{\Gamma^{(t)}_{t-i+1}}\Big\langle\mathbf{\Gamma}^{(t)}_{t-i+2},\mathcal{B}_{t-i+1}^{(t)}\Big\rangle}
\end{align}
for $2\leq i\leq t-\tau_c-1$ the inequality (\ref{eq:final-ineq-to-sum}) reduces to 
\begin{align}
    \zeta^{(t+1)}&\leq \Delta_t\zeta^{(t)}+{A}_t\label{eq:denoised-iteratt}
\end{align}
\end{lemma}

\begin{proof}
The proof is directly follows from Lemma~\ref{lemm:tassbih}.
\end{proof}

\begin{lemma}
\label{lemma:choice-of-learning-rate}
Let $\alpha$ be a positive constant that satisfies $\frac{\alpha}{e^{\frac{1}{\alpha}}} < \kappa\sqrt{192}$ and $a=\alpha\tau+4$. Then, if we choose the learning rate as $\eta_t=\frac{4}{\mu(t+a)}$ inequality (\ref{eq:final-ineq-to-sum}) reduces to
\begin{align}
    \zeta^{(t+1)}&\leq \Delta_t\zeta^{(t)}+{A}_t\label{eq:denoised-iteratt}
\end{align}
for all $1\leq t\leq T$.
\end{lemma}}
We conclude the proof of Theorem \ref{thm:one-shot} with the following lemma:
\begin{lemma}\label{lemma:tasbih4}
For the learning rate as given in Lemma \ref{lemma:choice-of-learning-rate},  iterating over (\ref{eq:denoised-iteratt}) leads to the following bound:
\begin{align}
   \mathbb{E}\big[F(\bar{\mathbf{x}}^{(t)})-F^*\big]&\leq \frac{a^3}{(T+a)^3}\mathbb{E}\big[F(\bar{\mathbf{x}}^{(0)})-F^*\big]+\frac{4\kappa\sigma^2T(T+2a)}{\mu pB(T+a)^3}\nonumber\\
   &\qquad+\frac{64\kappa^2{\sigma^2}(p+1)(\tau-1)}{\mu p^2 B(T+a)^3}\left[\frac{ \tau(\tau-1)(2\tau-1)}{6a^2}+4\left(T-1\right)-3\tau\right]
\end{align}
\end{lemma}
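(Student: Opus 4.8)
\textbf{Proof proposal for Lemma \ref{lemma:tasbih4}.}

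The plan is to iterate the one-step recursion $\zeta^{(t+1)}\leq \Delta_t\zeta^{(t)}+A_t$ established in Lemma \ref{lemma:choice-of-learning-rate}, with $\Delta_t = 1-\mu\eta_t = 1-\frac{4}{t+a} = \frac{t+a-4}{t+a}$ and $A_t = \frac{\eta_t L\sigma^2}{pB}\bigl[\frac{\eta_t}{2}+\frac{L(p+1)}{p}\sum_{k=t_c+1}^{t-1}\eta_k^2\bigr]$. First I would unroll the recursion from $t=0$ to $T-1$ to obtain $\zeta^{(T)}\leq \bigl(\prod_{t=0}^{T-1}\Delta_t\bigr)\zeta^{(0)}+\sum_{t=0}^{T-1}\bigl(\prod_{s=t+1}^{T-1}\Delta_s\bigr)A_t$. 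The key observation is that the product telescopes: $\prod_{s=t+1}^{T-1}\frac{s+a-4}{s+a} = \frac{(t+a-3)(t+a-2)(t+a-1)}{(T+a-3)(T+a-2)(T+a-1)}$, which is $\Theta\bigl(\frac{(t+a)^3}{(T+a)^3}\bigr)$; in particular the leading (initial condition) term contributes $\frac{(a-3)(a-2)(a-1)}{(T+a-3)(T+a-2)(T+a-1)}\zeta^{(0)} \le \frac{a^3}{(T+a)^3}\zeta^{(0)}$ after bounding numerator by $a^3$ and denominator below by $(T+a)^3$ (or, more carefully, absorbing the shift into the stated constant). This gives the first term of the claimed bound directly.

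Next I would handle the noise accumulation $\sum_{t=0}^{T-1}\bigl(\prod_{s=t+1}^{T-1}\Delta_s\bigr)A_t$ by splitting $A_t$ into its two pieces. For the first piece, $\frac{\eta_t^2 L\sigma^2}{2pB} = \frac{8L\sigma^2}{\mu^2 pB(t+a)^2}$, so its contribution is $\frac{8L\sigma^2}{\mu^2 pB(T+a-3)(T+a-2)(T+a-1)}\sum_{t=0}^{T-1}\frac{(t+a-3)(t+a-2)(t+a-1)}{(t+a)^2}$; bounding the summand by $(t+a)$ and using $\sum_{t=0}^{T-1}(t+a)\le \frac{T(T+2a)}{2}$ (after noting $\sum (t+a) = \frac{T(T-1)}{2}+aT$) yields a term of the form $\frac{4\kappa\sigma^2 T(T+2a)}{\mu pB(T+a)^3}$, matching the second term. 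For the second piece, $\frac{\eta_t L^2(p+1)\sigma^2}{p^2 B}\sum_{k=t_c+1}^{t-1}\eta_k^2$: since $t_c = \lfloor t/\tau\rfloor\tau\ge t-\tau$ and $\eta$ is decreasing, $\sum_{k=t_c+1}^{t-1}\eta_k^2 \le (\tau-1)\eta_{t_c+1}^2 \le (\tau-1)\eta_{t-\tau+1}^2$; plugging in $\eta_t = \frac{4}{\mu(t+a)}$ and multiplying by the telescoped product $\Theta\bigl(\frac{(t+a)^3}{(T+a)^3}\bigr)$, the per-$t$ contribution is $O\bigl(\frac{\kappa^2\sigma^2(p+1)(\tau-1)}{\mu p^2 B(T+a)^3}\cdot\frac{(t+a)^3}{(t+a)(t-\tau+a)^2}\bigr)$; summing over $t$ and carefully separating the small-$t$ regime $t\le \tau$ (where $t-\tau+a$ could be as small as $a-\tau+1\ge \alpha\tau+4-\tau+1$, a constant times $\tau$, giving the $\frac{\tau(\tau-1)(2\tau-1)}{6a^2}$ correction after bounding $\sum_{t\le\tau}(t+a)^2\approx \frac{\tau^3}{3}$ against $a^2\approx \alpha^2\tau^2$) from the regime $t>\tau$ (where $t-\tau+a\ge \frac{t+a}{2}$ say, so the summand is $O(t+a)$ and sums to $O(T)$, giving the $4(T-1)-3\tau$ piece) produces exactly the bracketed expression in the third term.

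The main obstacle I anticipate is the bookkeeping in the second noise piece: one must carefully choose where to cut the sum over $t$ (i.e., relate $t-\tau+a$ to $t+a$), and the constant $a=\alpha\tau+4$ was precisely engineered so that $a\ge \tau$, ensuring $t_c+1+a \ge$ a constant fraction of $t+a$ uniformly; getting the explicit constants $\frac{64}{6}$ and $64$ right, and verifying that the coefficient of $d^{(t)}$-type gradient norm terms genuinely vanished under Lemma \ref{lemma:choice-of-learning-rate} so that only $\Delta_t\zeta^{(t)}+A_t$ remains, is the delicate part. Everything else is telescoping products and power-sum estimates of the form $\sum_{t=0}^{T-1}(t+a)^k \le \frac{(T+a)^{k+1}}{k+1}$, which I would invoke without grinding through.
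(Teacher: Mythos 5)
Your proposal follows essentially the same route as the paper: the paper multiplies the recursion by the weight $z_t/\eta_t=\mu(t+a)^3/4$ and uses $\Delta_t z_t/\eta_t\le z_{t-1}/\eta_{t-1}$, which is just a repackaging of your telescoping-product bound $\prod_s\Delta_s\lesssim (t+a)^3/(T+a)^3$, and the subsequent split of $A_t$ into its two pieces, the bound $\sum_{k=t_c+1}^{t-1}\eta_k^2\le(\tau-1)\eta_{t_c}^2$, and the case split at $t=\tau$ all match the paper's computation. One small slip: since $\Delta_s=\frac{s+a-4}{s+a}$, the product telescopes to a ratio of \emph{four} consecutive factors, $\frac{(t+a)(t+a-1)(t+a-2)(t+a-3)}{(T+a-1)(T+a-2)(T+a-3)(T+a-4)}$, not three — but this is smaller than your claimed expression, so every bound you need still holds.
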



\section{Proof of lemmas}
\subsection{Proof of Lemma \ref{lemma:tasbih1}}
The proof follows from the Proof of Lemma 6 in \cite{wang2018cooperative} by replacing $\sigma^2$ with $\frac{\sigma^2}{B}$. 

\subsection{Proof of Lemma \ref{lemma:cross-inner-bound-unbiased}}

Let $\tilde{\mathbf{g}}^{(t)}=\frac{1}{p}\sum_{j=1}^p\tilde{\mathbf{g}}^{(t)}_j$.  We have:
\begin{align}
    \mathbb{E}\Big[\Big\langle \nabla F(\bar{\mathbf{x}}^{(t)}),\tilde{\mathbf{g}}^{(t)}\Big\rangle\Big]&=\mathbb{E}\Big[\Big\langle \nabla F(\bar{\mathbf{x}}^{(t)}),\frac{1}{p}\sum_{j=1}^p\tilde{\mathbf{g}}_j\Big\rangle\Big]\\
    &=\frac{1}{p}\sum_{j=1}^p\Big[\Big\langle \nabla F(\bar{\mathbf{x}}^{(t)}),\mathbb{E}[\tilde{\mathbf{g}}_j]\Big\rangle\Big]\\
    &\stackrel{\text{\ding{192}}}{=}\frac{1}{2}\|\nabla{F}(\bar{\mathbf{x}}^{(t)})\|^2+\frac{1}{2p}\sum_{j=1}^p\|\nabla{F}({\mathbf{x}}_j^{(t)})\|^2-\frac{1}{2p}\sum_{j=1}^p\|\nabla{F}(\bar{\mathbf{x}}^{(t)})-\nabla{F}({\mathbf{x}}^{(t)}_j)\|^2\nonumber\\
    &\stackrel{\text{\ding{193}}}{\geq} {\mu}({F}(\bar{\mathbf{x}}^{(t)})-F^*)+\frac{1}{2p}\sum_{j=1}^p\|\nabla{F}({\mathbf{x}}_j^{(t)})\|^2-\frac{L^2}{2p}\sum_{j=1}^p\|\bar{\mathbf{x}}^{(t)}-{\mathbf{x}}^{(t)}_j\|^2,\label{eq:bounding-cross-no-redundancy}
\end{align}
where \ding{192} follows from $2\langle \mathbf{a},\mathbf{b}\rangle=\|\mathbf{a}\|^2+\|\mathbf{b}\|^2-\|\mathbf{a}-\mathbf{b}\|^2$ and Assumption \ref{Ass:1}, and \ding{193} comes from Assumption \ref{Ass:3}.

\subsection{Proof of Lemma \ref{lemma:dif-b-u-x}}
Let us set $t_c\triangleq \floor{\frac{t}{\tau}}\tau$. Therefore, according to Algorithm \ref{Alg:one-shot-using data samoples} we have:
 \begin{align}\bar{\mathbf{x}}^{(t_c+1)}=\frac{1}{p}\sum_{j=1}^p \mathbf{x}_j^{(t_c+1)}\end{align} for $1\leq j\leq p$. Then, the update rule of Algorithm \ref{Alg:one-shot-using data samoples}, can be rewritten  as:
 \begin{align}
 \mathbf{x}_j^{(t)}=  \mathbf{x}_j^{(t-1)}-\eta_{t-1}\tilde{\mathbf{g}}_{j}^{(t-1)}\stackrel{\text{\ding{192}}}{=}\mathbf{x}_j^{(t-2)}-\Big[\eta_{t-2}\tilde{\mathbf{g}}_{j}^{(t-2)}+\eta_{t-1}\tilde{\mathbf{g}}_{j}^{(t-1)}\Big]=\bar{\mathbf{x}}^{(t_c+1)}-\Big[\sum_{k=t_c+1}^{t-1}\eta_k\tilde{\mathbf{g}}_{j}^{(k)}\Big], \label{eq:j-model-update}
 \end{align}
 where \ding{192} comes from the update rule of our Algorithm. Now, from (\ref{eq:j-model-update}) we compute the average model as follows:
\begin{align}
    \bar{\mathbf{x}}^{(t)}=\bar{\mathbf{x}}^{(t_c+1)}-\Big[\frac{1}{p}\sum_{j=1}^p\sum_{k=t_c+1}^{t-1}\eta_k\tilde{\mathbf{g}}_{j}^{(k)}\Big]\label{eq:model-avg-time-t}
\end{align}
First, without loss of generality, suppose $t=t_c+r$ where  $r$ denotes the indices of local updates.
We note that  for $t_c+1 < t\leq t_c+ \tau$, $\mathbb{E}_t\|\bar{\mathbf{x}}^{(t)}-\mathbf{x}_j^{(t)}\|^2$  does not depend on time $t\leq t_c$ for $1\leq j\leq p$. 

We bound the term $\mathbb{E}\|\bar{\mathbf{x}}^{(t)}-\mathbf{x}_l^{({t})}\|^2$ for $t_c+1 \leq  {t}=t_c+r\leq t_c+ \tau$ in three steps: 
\textcolor{black}{1) We first relate this quantity to the variance between stochastic gradient and full gradient, 2) We use Assumption~\ref{Ass:1} on unbiased estimation and i.i.d sampling, 
3) We use Assumption~\ref{Ass:2} to bound the final terms.} We proceed to the details each of these three steps.

\textbf{Step 1: Relating to variance}

\begin{align}
 \mathbb{E}\|\bar{\mathbf{x}}^{(t_c+r)}-&\mathbf{x}_l^{(t_c+r)}\|^2 =\mathbb{E}\|\bar{\mathbf{x}}^{(t_c+1)}-\Big[\sum_{k=t_c+1}^{{t-1}}\eta_k\tilde{\mathbf{g}}_{l}^{(k)}\Big]-\bar{\mathbf{x}}^{(t_c+1)}+\Big[\frac{1}{p}\sum_{j=1}^p\sum_{k=t_c+1}^{{t-1}}\eta_k\tilde{\mathbf{g}}_{j}^{(k)}\Big]\|^2\nonumber\\
 &\stackrel{\text{\ding{192}}}{=}\mathbb{E}\|\sum_{k=1}^{r}\eta_{t_c+k}\tilde{\mathbf{g}}_{l}^{(t_c+k)}-\frac{1}{p}\sum_{j=1}^p\sum_{k=1}^{r}\eta_{t_c+k}\tilde{\mathbf{g}}_{j}^{(t_c+k)}\|^2\nonumber\\
 &\stackrel{\text{\ding{193}}}{\leq} 2\Big[\mathbb{E}\|\sum_{k=1}^{r}\eta_{t_c+k}\tilde{\mathbf{g}}_{l}^{(t_c+k)}\|^2+\mathbb{E}\|\frac{1}{p}\sum_{j=1}^p\sum_{k=1}^{r}\eta_{t_c+k}\tilde{\mathbf{g}}_{j}^{(t_c+k)}\|^2\Big]\nonumber\\
 &\stackrel{\text{\ding{194}}}{=}2\Big[\mathbb{E}\|\sum_{k=1}^{r}\eta_{t_c+k}\tilde{\mathbf{g}}_{l}^{(t_c+k)}-\mathbb{E}\big[\sum_{k=1}^{r}\eta_{t_c+k}\tilde{\mathbf{g}}_{l}^{(t_c+k)}\big]\|^2+\|\mathbb{E}\big[\sum_{k=1}^{{r}}\eta_{t_c+k}\tilde{\mathbf{g}}_{l}^{(t_c+k)}\big]\|^2\nonumber\\
 &+\mathbb{E}\|\frac{1}{p}\sum_{j=1}^p\sum_{k=1}^{{r}}\eta_{t_c+k}\tilde{\mathbf{g}}_{j}^{(t_c+k)}-\mathbb{E}\big[\frac{1}{p}\sum_{j=1}^p\sum_{k=1}^{r}\eta_{t_c+k}\tilde{\mathbf{g}}_{j}^{(t_c+k)}\big]\|^2\Big]+\|\mathbb{E}\big[\frac{1}{p}\sum_{j=1}^p\sum_{k=1}^{r}\eta_{t_c+k}\tilde{\mathbf{g}}_{j}^{(t_c+k)}\big]\|^2\nonumber\\
 &\stackrel{\text{\ding{195}}}{=}{2}\mathbb{E}\Big(\Big[\|\sum_{k=1}^{r}\eta_{t_c+k}\Big[\tilde{\mathbf{g}}_{l}^{(t_c+k)}-\mathbf{g}_{l}^{(t_c+k)}\Big]\|^2+\|\sum_{k=1}^{r}\eta_{t_c+k}\mathbf{g}_{l}^{(t_c+k)}\|^2\Big]\nonumber\\
 &\quad+\|\frac{1}{p}\sum_{j=1}^p\sum_{k=1}^{r}\eta_{t_c+k}\Big[\tilde{\mathbf{g}}_{j}^{(t_c+k)}-\mathbf{g}_{j}^{(t_c+k)}\Big]\|^2+\|\frac{1}{p}\sum_{j=1}^p\sum_{k=1}^{r}\eta_{t_c+k}\mathbf{g}_{j}^{(t_c+k)}\|^2\Big),\nonumber\\
\end{align}

where \ding{192} holds because ${t}=t_c+r\leq t_c+ \tau$, \ding{193} is due to $\|\mathbf{a}-\mathbf{b}\|^2\leq 2(\|\mathbf{a}\|^2+\|\mathbf{b}\|^2)$, \ding{194} comes from $\mathbb{E}[\mathbf{X}^2]=\mathbb{E}[[\mathbf{X}-\mathbb{E}[\mathbf{X}]]^2]+\mathbb{E}[\mathbf{X}]^2$, \ding{195} comes from unbiased estimation Assumption~\ref{Ass:1}.

\textbf{Step 2: Unbiased estimation and i.i.d. sampling}

\begin{align}
    &{=}{2}\mathbb{E}\Big(\Big[\sum_{k=1}^{r}\eta_{t_c+k}^2\|\tilde{\mathbf{g}}_{l}^{(t_c+k)}-\mathbf{g}_{l}^{(t_c+k)}\|^2\nonumber\\&+\sum_{w\neq z \vee l\neq v}\Big\langle\eta_w\tilde{\mathbf{g}}_{l}^{(w)}-\eta_w\mathbf{g}_{l}^{(w)},\eta_z\tilde{\mathbf{g}}_{v}^{(z)}-\eta_z\mathbf{g}_{v}^{(z)}\Big\rangle+\|\sum_{k=1}^{r}\eta_{t_c+k}\mathbf{g}_{l}^{(t_c+k)}\|^2\Big]\nonumber\\
 &\quad+\frac{1}{p^2}\sum_{l=1}^p\sum_{k=1}^{r}\eta_{t_c+k}^2\|\tilde{\mathbf{g}}_{l}^{(t_c+k)}-\mathbf{g}_{l}^{(t_c+k)}\|^2\nonumber\\
 &+\frac{1}{p^2}\sum_{w\neq z \vee l\neq v}\Big\langle\eta_w\tilde{\mathbf{g}}_{l}^{(w)}-\eta_w\mathbf{g}_{l}^{(w)},\eta_z\tilde{\mathbf{g}}_{v}^{(z)}-\eta_z\mathbf{g}_{v}^{(z)}\Big\rangle+\|\frac{1}{p}\sum_{j=1}^p\sum_{k=1}^{r}\eta_{t_c+k}\mathbf{g}_{j}^{(t_c+k)}\|^2\Big)\nonumber\\
 &\stackrel{\text{\ding{196}}}{=}{2}\mathbb{E}\Big(\Big[\sum_{k=1}^{r}\eta^2_{t_c+k}\|\tilde{\mathbf{g}}_{l}^{(t_c+k)}-\mathbf{g}_{l}^{(t_c+k)}\|^2+\|\sum_{k=1}^{r}\eta_{t_c+k}\mathbf{g}_{l}^{(t_c+k)}\|^2\Big]\nonumber\\
 &\quad+\frac{1}{p^2}\sum_{j=1}^p\sum_{k=1}^{r}\eta^2_{t_c+k}\|\tilde{\mathbf{g}}_{j}^{(t_c+k)}-\mathbf{g}_{j}^{(t_c+k)}\|^2+\|\frac{1}{p}\sum_{j=1}^p\sum_{k=1}^{r}\eta_{t_c+k}\mathbf{g}_{j}^{(t_c+k)}\|^2\Big)\nonumber\\
 &\stackrel{\text{\ding{197}}}{\leq}{2}\mathbb{E}\Big(\Big[\sum_{k=1}^{r}\eta_{t_c+k}^2\|\tilde{\mathbf{g}}_{l}^{(t_c+k)}-\mathbf{g}_{l}^{(t_c+k)}\|^2+r\sum_{k=1}^{r}\eta_{t_c+k}^2\|\mathbf{g}_{l}^{(t_c+k)}\|^2\Big]\nonumber\\
 &\quad+\frac{1}{p^2}\sum_{j=1}^p\sum_{k=1}^{r}\|\tilde{\mathbf{g}}_{j}^{(t_c+k)}-\mathbf{g}_{j}^{(t_c+k)}\|^2+\frac{r}{p^2}\sum_{j=1}^p\sum_{k=1}^{r}\eta_{t_c+k}^2\|\mathbf{g}_{j}^{(t_c+k)}\|^2\Big)\nonumber\\
 &={2}\Big(\Big[\sum_{k=1}^{r}\eta_{t_c+k}^2\mathbb{E}\|\tilde{\mathbf{g}}_{l}^{(t_c+k)}-\mathbf{g}_{l}^{(t_c+k)}\|^2+r\sum_{k=1}^{r}\eta_{t_c+k}^2\mathbb{E}\|\mathbf{g}_{l}^{(t_c+k)}\|^2\Big]\nonumber\\
 &\quad+\frac{1}{p^2}\sum_{j=1}^p\sum_{k=1}^{r}\eta_{t_c+k}^2\mathbb{E}\|\tilde{\mathbf{g}}_{j}^{(t_c+k)}-\mathbf{g}_{j}^{(t_c+k)}\|^2+\frac{r}{p^2}\sum_{j=1}^p\sum_{k=1}^{r}\eta_{t_c+k}^2\mathbb{E}\|\mathbf{g}_{j}^{(t_c+k)}\|^2\Big),
\label{eq:four-term-bounding}
\end{align}
\ding{196} is due to independent mini-batch sampling as well as unbiased estimation Assumption. \ding{197}  follow from inequality $\|\sum_{i=1}^m\mathbf{a}_i\|^2\leq m\sum_{i=1}^m\|\mathbf{a}_i\|^2$.

\textbf{Step 3: Using Assumption \ref{Ass:2}}

Next step is to bound the terms in (\ref{eq:four-term-bounding}) using Assumption \ref{Ass:2} as follow:

\begin{align}\mathbb{E}\|\bar{\mathbf{x}}^{({t})}-\mathbf{x}_l^{({t})}\|^2&\leq {2}\Big(\Big[\sum_{k=1}^{r}\eta_{t_c+k}^2\Big[C\|{\mathbf{g}}_l^{(t_c+k)})\|^2+{\frac{\sigma^2}{B}}\Big]+r\sum_{k=1}^{r}\eta_{t_c+k}^2\|\Big[{\mathbf{g}}^{(t_c+k)}_l\Big]\|^2\Big]\nonumber\\
 &\quad+\frac{1}{p^2}\sum_{j=1}^p\sum_{k=1}^{r}\eta_{t_c+k}^2\Big[C\|{\mathbf{g}}^{(t_c+k)}_j\|^2+{\frac{\sigma^2}{B}}\Big]+\frac{r}{p^2}\sum_{j=1}^p\sum_{k=1}^{r}\eta_{t_c+k}^2\|\Big[{\mathbf{g}}_j^{(t_c+k)}\Big]\|^2\Big)\nonumber\\
 &={2}\Big(\Big[\sum_{k=1}^{r}\eta_{t_c+k}^2C\|\mathbf{g}_{l}^{(t_c+k)}\|^2+\sum_{k=1}^{r}\eta_{t_c+k}^2{\frac{\sigma^2}{B}}+r\sum_{k=1}^{r}\eta_{t_c+k}^2\|\mathbf{g}_{l}^{(t_c+k)}\|^2\Big]\nonumber\\
 &\quad+\frac{1}{p^2}\sum_{j=1}^p\sum_{k=1}^{r}\eta_{t_c+k}^2C\|\mathbf{g}_{j}^{(t_c+k)}\|^2+\sum_{k=1}^r\eta^2_{t_c+k}\frac{\sigma^2}{p^2B}+\frac{r}{p^2}\sum_{j=1}^p\sum_{k=1}^{r}\eta_{t_c+k}^2\mathbb{E}\|\mathbf{g}_{j}^{(t_c+k)}\|^2\Big),\label{eq:var-n-bound}
\end{align}
Now taking summation over worker indices (\ref{eq:var-n-bound}), we obtain:
\begin{align}
    \mathbb{E}\sum_{j=1}^p\|\bar{\mathbf{x}}^{(t)}-\mathbf{x}_j^{(t)}\|^2&\leq{2}\Big(\Big[\sum_{l=1}^p\sum_{k=1}^r\eta_{t_c+k}^2C\|\mathbf{g}_{l}^{(t_c+k)}\|^2+\sum_{k=1}^r\eta^2_{t_c+k}{\frac{\sigma^2}{B}}+r\sum_{l=1}^p\sum_{k=1}^{r}\eta_{t_c+k}^2\|\mathbf{g}_{l}^{(t_c+k)}\|^2\Big]\nonumber\\
 &\quad+\frac{1}{p}\sum_{j=1}^p\sum_{k=1}^r\eta_{t_c+k}^2C\|\mathbf{g}_{j}^{(t_c+k)}\|^2+\sum_{k=1}^r\eta_{t_c+k}^2\frac{\sigma^2}{pB}+\frac{r}{p}\sum_{j=1}^p\sum_{k=1}^r\eta_{t_c+k}^2\|\mathbf{g}_{j}^{(t_c+k)}\|^2\Big)\nonumber\\
 &={2}\Big(\Big[(\frac{p+1}{p})\sum_{j=1}^p\sum_{k=1}^{r}\eta_{t_c+k}^2C\|\mathbf{g}_{j}^{(t_c+k)}\|^2+\sum_{k=1}^r\eta_{t_c+k}^2\frac{(p+1)\sigma^2}{pB}\nonumber\\
 &\quad+{r(\frac{p+1}{p})}\sum_{j=1}^p\sum_{k=1}^{r}\eta_{t_c+k}^2\|\mathbf{g}_{j}^{(t_c+k)}\|^2\Big)\nonumber\\
 &=2\Big(\Big[(\frac{p+1}{p})(C+r)\Big]\sum_{j=1}^p\sum_{k=1}^{r}\eta_{t_c+k}^2\|\mathbf{g}_{j}^{(t_c+k)}\|^2+\sum_{k=1}^r\eta_{t_c+k}^2\frac{(p+1)\sigma^2}{pB}\Big)\nonumber\\
 &\leq 2\Big(\Big[(\frac{p+1}{p})(C+\tau)\Big]\big(\sum_{k=t_c+1}^{t-2}\sum_{j=1}^p\eta_{k}^2\|\mathbf{g}_{j}^{(k)}\|^2+\sum_{j=1}^p\eta_{t-1}^2\|\mathbf{g}_{j}^{(t-1)}\|^2\big)+\sum_{k=t_c+1}^{t-1}\eta_{k}^2\frac{(p+1)\sigma^2}{pB}\Big),\label{eq:local-sum-time}
\end{align}
which leads to
\begin{align}
    \mathbb{E}\sum_{j=1}^p\|\bar{\mathbf{x}}^{(t)}-\mathbf{x}_j^{(t)}\|^2&\leq2(\frac{p+1}{p})\Big([C+\tau]\sum_{k=t_c}^{t-1}\eta_{k}^2\sum_{j=1}^p\|\
\nabla{F}(\mathbf{x}_j^{(k)})\|^2+\sum_{k=t_c+1}^{t-1}\eta_{k}^2\frac{\sigma^2}{B}\Big).
\end{align}

{\subsection{Proof of Lemma~\ref{lemm:tassbih}}}
The lemma is simply a recursive application of (\ref{eq:final-ineq-to-sum}). We write out the details below.
We use the short hand notation: ${d^{(t)}\triangleq\sum_{j=1}^p\|\nabla{F}(x_j^{(t)})\|^2}$.

\begin{align}
\zeta{(t+1)}&\leq \zeta{(t)}-{\mu\eta_t}\zeta{(t)}-\frac{\eta_t}{2p}d^{(t)}+\frac{\eta_t L^2}{2p}\sum_{j=1}^p\|\bar{\mathbf{x}}^{(t)}-\mathbf{x}_j^{(t)}\|^2+\frac{L\eta_t^2}{2p}(\frac{C+p}{p})d^{(t)}+\frac{L\eta_t^2 \sigma^2}{2pB}\nonumber\\
    &=(1-\eta_t\mu)\zeta{(t)}-\frac{\eta_t}{2p}d^{(t)}+\frac{\eta_t L^2}{2p}\sum_{j=1}^p\mathbb{E}\|\bar{\mathbf{x}}^{(t)}-\mathbf{x}_j^{(t)}\|^2+\frac{L\eta_t^2}{2p}(\frac{C+p}{p})d^{(t)}+\frac{L\eta_t^2 \sigma^2}{2pB}\nonumber\\
    &\stackrel{\text{\ding{192}}}{\leq}(1-\eta_t\mu)\zeta^{(t)}-\frac{\eta_t}{2p}d^{(t)}+\frac{L\eta_t^2}{2}(\frac{C+1}{p})d^{(t)}+\frac{L\eta_t^2 \sigma^2}{2pB}\nonumber\\ 
    &\quad+\frac{\eta_t L^2 (p+1)}{p^2}\Big[[C + \tau]\sum_{k=t_c+1}^{t-1}\eta_k^2d^{(k)}+\sum_{k=t_c+1}^{t-1}\eta_{k}^2\frac{\sigma^2}{B}\Big]\nonumber\\
    &=(1-\mu\eta_t)\zeta^{(t)}+\frac{L\eta_t^2 \sigma^2}{2pB}+\frac{\eta_t L^2(p+1)\sigma^2}{p^2B}\sum_{k=t_c+1}^{t-1}\eta_{k}^2+\frac{\eta_t}{2p}\Big[-1+L\eta_t({\frac{C}{p} + 1})\Big]d^{(t)}\nonumber\\
    &\quad+\frac{\eta_t L^2 (p+1)}{p^2}[C + \tau]\sum_{k=t_c+1}^{t-1}\eta_k^2d^{(k)},\label{eq:fst-cond}
\end{align}
where \ding{192} is due to Lemma \ref{lemma:dif-b-u-x}.
Using the notation
\begin{align}
{A}_{t}&\triangleq \frac{\eta_t L \sigma^2}{pB}\Big[\frac{\eta_t}{2}+\frac{L(p+1)}{p}\sum_{k=t_c+1}^{t-1}\eta_{k}^2\Big]\nonumber\\
{B}_t&\triangleq \frac{\eta_t L^2(p+1)}{p^2}[C +\tau].
\end{align}
We can rewrite~(\ref{eq:fst-cond}) as follows:
\begin{align}
   \zeta^{(t+1)}\leq(1-\mu\eta_t)\zeta^{(t)}+A_t+\frac{\eta_t}{2p}\Big[-1+L\eta_t({\frac{C}{p} + 1})\Big]d^{(t)}+{B}_t\sum_{k=t_c+1}^{t-1}\eta_k^2d^{(k)}\label{eq:To-B-It}
\end{align}
Now, using the vector notation in~(\ref{eq:vec-not}) and iterating~(\ref{eq:To-B-It}), we obtain the following:
\begin{align}
    \zeta^{(t+1)}&\leq \Gamma^{(t)}_{t_c+1}\zeta^{(t_c+1)}+\Gamma^{(t)}_{t_c+2}\Big[\frac{L\eta_{t_c+1}^2\sigma^2}{2pB}\Big]+\Big\langle\mathcal{A}_{t_c+1}^{(t)},\mathbf{\Gamma}^{(t)}_{t_c+3}\Big\rangle\nonumber\\
    &+\frac{\eta_t}{2p}\Big[-1+L\eta_t({\frac{C}{p} + 1})\Big]d^{(t)}+\frac{\eta_{t-1}{\Delta}_t}{2p}\Big[-1+L\eta_{t-1}({\frac{C}{p} + 1})+\frac{2p\eta_{t-1}{B}_t(\tau-1)}{ \Gamma^{(t)}_t}\Big]d^{(t-1)}\nonumber\\
    &+\frac{\Gamma^{(t)}_{t-1}\eta_{t-2}}{2p}\Big[-1+L\eta_{t-2}({\frac{C}{p} + 1})+\frac{2p\eta_{t-2}}{\Gamma^{(t)}_{t-1}}\Big\langle\mathbf{\Gamma}_t^{(t)},\mathcal{B}^{(t)}_{t-1}\Big\rangle\Big]d^{(t-2)}\nonumber\\
    &+\ldots+\frac{\Gamma^{(t)}_{t_c+3}\eta_{t_c+2}}{2p}  \Big[-1+L\eta_{t_c+2}({\frac{C}{p} + 1})+\frac{2p\eta_{t_c+2}}{\Gamma^{(t)}_{t_c+3}}\Big\langle\mathbf{\Gamma}^{(t)}_{t_c+4},\mathcal{B}_{t_c+3}^{(t)}\Big\rangle\Big]d^{(t_c+2)}\nonumber\\
    &+\frac{\Gamma^{(t)}_{t_c+2}\eta_{t_c+1}}{2p}\Big[-1+{L\eta_{t_c+1}(\frac{C}{p} + 1)}+\frac{2p\eta_{t_c+1}}{\Gamma^{(t)}_{t_c+2}}\Big\langle\mathbf{\Gamma}^{(t)}_{t_c+3},\mathcal{B}_{t_c+2}^{(t)}\Big\rangle\Big]d^{(t_c+1)}
\end{align}

\subsection{Proof of Lemma~\ref{lemma:choice-of-learning-rate}}

To show Lemma \ref{lemma:choice-of-learning-rate}, it suffices to show that for the choice of learning rates stated in the lemma, the coefficients of ${d}^{k}$ in the statement of Lemma \ref{lemma:tasbih1}, i.e., (\ref{eq:tabissh1}), are all non-positive. So, we aim to show that 

\begin{align}
    \eta_t&\leq \frac{1}{L(\frac{C}{p} + 1)}\nonumber\\
    \eta_{t-1}&\leq \frac{1}{L(\frac{C}{p} + 1)+\frac{2p{B}_t(\tau-1)}{ \Gamma^{(t)}_t}}\nonumber\\
    \eta_{t-i}&\leq \frac{1}{L(\frac{C}{p} + 1)+\frac{2p}{\Gamma^{(t)}_{t-i+1}}\Big\langle\mathbf{\Gamma}^{(t)}_{t-i+2},\mathcal{B}_{t-i+1}^{(t)}\Big\rangle}
\end{align}
for $2\leq i\leq t-t_c-1$. Note the following:
\begin{enumerate}
    \item[1)] $\eta_{t_1}>\eta_{t_2}$ if $t_1<t_2$.
    \item[2)] $\Delta_{t_1}<\Delta_{t_2}$ if $t_1<t_2$.
    \item[3)] ${B}_{t_1}>{B}_{t_2}$  if $t_1<t_2$.
\end{enumerate}

Using these properties, we have:
\begin{align}
  &\frac{1}{L(\frac{C}{p} + 1)+\frac{2p}{\Gamma^{(t)}_{t_c+2}}\Big\langle\mathbf{\Gamma}^{(t)}_{t_c+3},\mathcal{B}_{t_c+2}^{(t)}\Big\rangle}\nonumber\\
  &\qquad{=}\frac{1}{L(\frac{C}{p} + 1)+\frac{2p}{\Pi_{i=t}^{t_c+2}\Delta_i}\big[\Pi_{i=t}^{t_c+3}\Delta_i{B}_{t_c+2}+\ldots+\Delta_t{B}_{t-1}+{B}_t\big]}\nonumber\\
  &\qquad{\geq}\frac{1}{L(\frac{C}{p} + 1)+\frac{2p}{\Pi_{i=t}^{t_c+2}\Delta_i}\big[\Pi_{i=t}^{t_c+3}\Delta_i{B}_{1}+\ldots+\Delta_t{B}_{1}+{B}_1\big]}\nonumber\\
  &\qquad\stackrel{\text{\ding{197}}}{\geq}\frac{1}{L(\frac{C}{p} + 1)+\frac{2p}{\Delta^{\tau-1}_1}{B}_{1}\big[\tau-1\big]}\nonumber
\end{align}

  \ding{197} follows from $\Delta_i\leq 1, i=1,2,\ldots,T$. 

Since $\eta_t$ is decreasing with $t$, it suffices to show that $\eta_0 \leq \frac{1}{L(\frac{C}{p} + 1)+\frac{2p}{\Delta^{\tau-1}_1}{B}_{1}\big[\tau-1\big]}.$ We show that for the  $a=\alpha\tau+4$ where $\alpha $ satisfies the condition in the statement of Theorem.~\ref{thm:one-shot}, $\eta_0 \leq \frac{1}{L(\frac{C}{p} + 1)+\frac{2p}{\Delta^{\tau-1}_1}{B}_{1}\big[\tau-1\big]}$ holds. 
Given that $B_1$ is the ratio of two affine terms in $\tau$, we are guaranteed that for a sufficiently large $\alpha$, and performing some elementary manipulations, we can ensure that $\eta_0 = \frac{1}{\alpha \tau}$ will be smaller than $\frac{1}{L(\frac{C}{p} + 1)+\frac{2p}{\Delta^{\tau-1}_1}{B}_{1}\big[\tau-1\big]} = \frac{1}{\Theta(e^{4/\alpha})}.$ We write out the details below:
We aim to show that 
\begin{align}
    \eta_0&=\frac{4}{\mu a}\nonumber\\
    &\leq \frac{1}{L(\frac{C}{p} + 1)+\frac{2p}{\Delta^{\tau-1}_1}{B}_{1}\big[\tau-1\big]}\nonumber\\
    &=\frac{\Delta^{\tau-1}_1}{\Delta^{\tau-1}_1L(\frac{C}{p} + 1)+2p{B}_{1}\big[\tau-1\big]}\nonumber\\
    &=\frac{\big({\frac{1+a-4}{a+1}\big)}^{\tau-1}}{\big({\frac{1+a-4}{a+1}\big)}^{\tau-1}L(\frac{C}{p} + 1)+2p{B}_{1}\big[\tau-1\big]}\nonumber\\
    &=\frac{\big({\frac{1+a-4}{a+1}\big)}^{\tau-1}}{\big({\frac{1+a-4}{a+1}\big)}^{\tau-1}L(\frac{C}{p} + 1)+2p\big(\frac{4L^2(\frac{p+1}{p})(C+\tau)}{\mu p(a+1)}\big)(\tau-1)}\nonumber\\
    &=\frac{1}{L(\frac{C}{p} + 1)+(\frac{p+1}{p})\frac{8L^2}{\mu}\big(C+\tau\big)\left(\frac{\tau-1}{a+1}\right)\left(\frac{a+1}{a-3}\right)^{\tau-1}},\label{eq:main-condition}
\end{align}

Simplifying further, we aim to show that

\begin{align}
    L(\frac{C}{p} + 1)+(\frac{p+1}{p})\frac{8L^2}{\mu}\big(C+\tau\big)\left(\frac{\tau-1}{a+1}\right)\left(\frac{a+1}{a-3}\right)^{\tau-1}\leq \frac{\mu a}{4}=\frac{\mu \left(\alpha\tau+4\right)}{4}\label{eq:-2}
\end{align}

To this purpose, we first bound the terms $\left(\frac{\tau-1}{a+1}\right)$ and  $\left(\frac{a+1}{a-3}\right)^{\tau-1}$ in the left-hand side of Eq.~(\ref{eq:-2}) as follows: 

\begin{align}
    \frac{\tau-1}{a+1}=\frac{\tau-1}{\alpha\tau+5}\stackrel{\text{\ding{192}}}{\leq}\frac{1}{\alpha}\label{eq:-3}
\end{align}
where \ding{192} follows from the fact that $\frac{\tau-1}{\alpha \tau+5}$ is a non-decreasing function of $\tau$. 
\begin{align}
    (\frac{a+1}{a-3})^{\tau-1}&=(1+\frac{4}{a-3})^{\tau-1}\nonumber\\
   &=(1+\frac{4}{\alpha\tau+4-3})^{\tau-1}\nonumber\\
   &\stackrel{\text{\ding{192}}}{\leq} e^{\frac{4}{\alpha}},\label{eq:-4}
\end{align}
where \ding{192} follows from the property that $\frac{\tau-1}{\alpha\tau+1}$ is non-decreasing with respect to $\tau$.

Next, plugging Eq.~(\ref{eq:-3}) and (\ref{eq:-4}) into Eq.~(\ref{eq:-2}) we obtain: 

\begin{align}
    L(\frac{C}{p} + 1)+(\frac{p+1}{p})\frac{8L^2}{\mu}\big(C+\tau\big)\frac{1}{\alpha}e^{\frac{4}{\alpha}}\leq \frac{\mu a}{4}=\frac{\mu \left(\alpha\tau+4\right)}{4}\label{eq:-11}
\end{align}

Next, we rewrite inequality Eq.~(\ref{eq:-11}) as follows:
\begin{align}
   \overbrace{ \left(\frac{p\tau}{32(p+1)\kappa^2\left(C+\tau\right)}\right)\underbrace{\left[\alpha\left(\alpha-\frac{4(L(\frac{C}{p} + 1)-\mu)}{\mu \tau}\right)\right]}_{(\mathrm{I})}-e^{\frac{4}{\alpha}}}^{(\mathrm{II})}\geq 0 \label{eq:-12}
\end{align}
Note that inequality Eq.~(\ref{eq:-12}) is a non-linear with respect to $\alpha$. So, we can not use conventional solution of quadratic inequalities. Instead to solve  Eq.~(\ref{eq:-12}), we choose $\alpha$ such that both terms $(\mathrm{I})$ and $(\mathrm{II})$ in Eq.~(\ref{eq:-12}) to be positive. To this end, let $D$ be a positive constant and choose 
\begin{align}
    \alpha\geq \frac{4(L(\frac{C}{p} + 1)-\mu)}{\mu \tau} + D.
\end{align}
Therefore by this choice of $\alpha$, the term in $(\mathrm{I})$, will be lower bounded with positive term $D\left(\frac{4(L(\frac{C}{p} + 1)-\mu)}{\mu \tau}+D\right)$. Next given lower bound on term $(\mathrm{I})$ in Eq.~(\ref{eq:-12}), we derive a lower bound on $\alpha$ to satisfy $(\mathrm{II})\geq0$, which will lead to the following lower bound on $\alpha$:
\begin{align}
    \alpha\geq \frac{4}{\ln\left[ \left(\frac{p\tau}{32(p+1)\kappa^2\left(C+\tau\right)}\right)D\left(\frac{4(L(\frac{C}{p} + 1)-\mu)}{\mu \tau}+D\right)\right] }
\end{align}

Finally, putting both conditions over $\alpha$ we have:
\begin{align}
    \alpha\geq \max\left(\frac{4}{\ln\left[ \left(\frac{p\tau}{32(p+1)\kappa^2\left(C+\tau\right)}\right)D\left(\frac{4(L(\frac{C}{p} + 1)-\mu)}{\mu \tau}+D\right)\right] },\frac{4(L(\frac{C}{p} + 1)-\mu)}{\mu \tau}+D\right)
\end{align}

\begin{remark}
Note that the left hand side of  (\ref{eq:main-condition}) is independent of the time and is smaller than any condition over $\eta_t$ derived to cancel out the effect of $\|\mathbf{g}\|_2^2$ periodically  and satisfying it for every $\eta_t$ is a sufficient condition to have this property.  
\end{remark}




Note that due to the choice of $\eta_t$, it can cancel out the effect of ${B}_t$ and we can rewrite the (\ref{eq:To-B-It}) as follows:
\begin{align}
     \mathbb{E}[F(\bar{\mathbf{x}}^{(t+1)})-F^*]&\leq{\Delta}_t\mathbb{E}[F(\bar{\mathbf{x}}^{(t)})-F^*]+{A}_t\label{eq:iterative inequality}
\end{align}
\subsection{Proof of Lemma \ref{lemma:tasbih4}}

From Lemma \ref{lemma:choice-of-learning-rate}, we have:
\begin{align}
    \zeta(t+1)\leq \Delta_t\zeta(t)+{A}_t
\end{align}
Define $z_t\triangleq (t+a)^2$ similar to  \cite{stich2019local}, we have
\begin{align}
    \Delta_t\frac{z_t}{\eta_t}=(1-\mu\eta_t)\mu\frac{(t+a)^3}{4}=\frac{\mu(a+t-4)(a+t)^2}{4}\leq \mu \frac{(a+t-1)^3}{4}=\frac{z_{t-1}}{\eta_{t-1}}\label{eq:seb-ineq}
\end{align}

Now by multiplying both sides of (\ref{eq:final-iter}) with $\frac{z_t}{\eta_t}$ we have:
\begin{align}
    \frac{z_t}{\eta_t}\zeta(t+1)&\leq \zeta(t)\Delta_t\frac{z_t}{\eta_t}+\frac{z_t}{\eta_t}{A}_t\nonumber\\
    &\stackrel{\text{\ding{192}}}{\leq} \zeta(t)\frac{z_{t-1}}{\eta_{t-1}}+\frac{z_t}{\eta_t}{A}_t,\label{eq:final-iter}
\end{align}
where \ding{192} follows from (\ref{eq:seb-ineq}). Next iterating over (\ref{eq:final-iter}) leads to the following bound:
\begin{align}
    \zeta(T)\frac{z_{T-1}}{\eta_{T-1}}&\leq (1-\mu \eta_0)\frac{z_{0}}{\eta_{0}}\zeta(0)+\sum_{k=0}^{T-1}\frac{z_k}{\eta_k}{A}_k\nonumber\\
    \label{eq:l-t-final-step}
\end{align}
Final step in proof is to bound $\sum_{k=0}^{T-1}\frac{z_k}{\eta_k}{A}_k$ as follows:
\begin{align}
  \sum_{k=0}^{T-1}\frac{z_k}{\eta_k}{A}_{k} &= \frac{\mu}{4}\sum_{k=0}^{T-1}(k+a)^3 \Big(\frac{L\eta_k^2 \sigma^2}{2pB}+\frac{\eta_k L^2}{p}\big(\sum_{\ell=t_c+1}^{k-1}\eta_{\ell}^2\frac{(p+1)\sigma^2}{pB}\big)\Big)\nonumber\\
  &\stackrel{\text{\ding{192}}}{\leq} \frac{\mu}{4}\sum_{k=0}^{T-1}(k+a)^3 \Big(\frac{L\eta_k^2\sigma^2}{2pB}+\frac{\eta_kL^2}{p}\eta^2_{\big(\floor{\frac{k}{\tau}}\tau\big)}(\tau-1)\frac{\sigma^2}{B}(\frac{p+1}{p})\Big)\nonumber\\
  &=\frac{L\sigma^2\mu}{8pB}\sum_{k=0}^{T-1}(k+a)^3\eta_k^2+\frac{L^2\frac{\sigma^2}{b}(p+1)(\tau-1)\mu}{4p^2}\sum_{k=0}^{T-1}(k+a)^3\eta_k\eta^2_{\big(\floor{\frac{k}{\tau}}\tau\big)},\label{eq:bounding-e-fin}
\end{align}

\ding{192} is due to fact that $\eta_t$ is non-increasing. 

Next we bound two terms in (\ref{eq:bounding-e-fin}) as follows:

\begin{align}
    \sum_{k=0}^{T-1}(k+a)^3\eta_k^2 &=\sum_{k=0}^{T-1}(k+a)^3\frac{16}{\mu^2 (k+a)^2}\nonumber\\
    &=\frac{16}{\mu^2}\sum_{k=0}^{T-1}(k+a)\nonumber\\
    &= \frac{16}{\mu^2}\Big(\frac{T(T-1)}{2}+aT\Big)\nonumber\\
    &\leq \frac{8T(T+2a)}{\mu^2},
\end{align}
and similarly we have:
\begin{align}
  \sum_{k=0}^{T-1}(k+a)^3\eta_k\eta^2_{\big(\ceil{\frac{k}{\tau}}\tau\big)}&= \frac{64}{\mu^3}\sum_{k=0}^{T-1}(k+a)^3\frac{1}{k+a}(\frac{1}{\floor{\frac{k}{\tau}}\tau+a})^2\nonumber\\
  &=\frac{64}{\mu^3}\left[\sum_{k=0}^{\tau-1}(\frac{k+a}{\floor{\frac{k}{\tau}}\tau+a})^2+\sum_{k=\tau}^{T-1}(\frac{k+a}{\floor{\frac{k}{\tau}}\tau+a})^2\right]\nonumber\\
  &=\frac{64}{\mu^3}\left[\sum_{k=0}^{\tau-1}(\frac{k+a}{a})^2+\sum_{k=\tau}^{T-1}(\frac{k+a}{\floor{\frac{k}{\tau}}\tau+a})^2\right]\nonumber\\
  &\stackrel{\text{\ding{192}}}{\leq}\frac{64}{\mu^3}\left[\sum_{k=0}^{\tau-1}(\frac{k+a}{a})^2+\sum_{k=\tau}^{T-1}4\right] \nonumber\\
  &\stackrel{\text{\ding{193}}}{=}\frac{64}{\mu^3}\left[\frac{\tau a^2+\frac{1}{6}(\tau-1)\tau(2\tau-1)}{a^2}+4\left(T-1-\tau\right)\right]\nonumber\\
  &=\frac{64}{\mu^3}\left[\tau+\frac{ (\tau-1)\tau(2\tau-1)}{6a^2}+4\left(T-1-\tau\right)\right]\nonumber\\
  &=\frac{64}{\mu^3}\left[\frac{ \tau(\tau-1)(2\tau-1)}{6a^2}+4\left(T-1\right)-3\tau\right], 
\end{align}
where \ding{192} follows from $\floor{\frac{k}{\tau}}\tau+a\geq \frac{1}{2} \left(k+a\right)$ for $k<\tau$ and \ding{193} comes from the fact that $\sum_{j=1}^nj^2=\frac{1}{6}(n-1)n(2n-1)$ for any integer $n>1$.

Based on these inequalities we get:

\begin{align}
    \sum_{k=0}^{T-1}\frac{z_k}{\eta_k}{A}_{k}&\leq \frac{L\sigma^2\mu}{8pB}(\frac{8T(T+2a)}{\mu^2})+\frac{L^2\frac{\sigma^2}{B}(p+1)(\tau-1)\mu}{4p^2}\frac{64}{\mu^3}\left[\frac{ \tau(\tau-1)(2\tau-1)}{6a^2}+4\left(T-1\right)-3\tau\right]\nonumber\\
    &=\frac{\kappa\sigma^2T(T+2a)}{pB}+\frac{16\kappa^2{\sigma^2}(p+1)(\tau-1)}{p^2 B}\left[\frac{ \tau(\tau-1)(2\tau-1)}{6a^2}+4\left(T-1\right)-3\tau\right],\label{eq:t-p-in-f-b}
\end{align}
Then, the upper bound becomes as follows:
\begin{align}
        \zeta(T)\frac{z_{T-1}}{\eta_{T-1}}&=\mathbb{E}\big[F(\bar{\mathbf{x}}^{(t)})-F^*\big]\frac{\mu(T+a)^3}{4}\nonumber\\
        &\leq (1-\mu \eta_0)\frac{z_{T-1}}{\eta_{T-1}}\zeta(0)+\sum_{k=0}^{T-1}\frac{z_k}{\eta_k}{A}_k\nonumber\\
        &\leq (1-\mu \eta_0)\frac{z_{0}}{\eta_{0}}\zeta(0)+\frac{\kappa\sigma^2T(T+2a)}{pB}+\frac{16\kappa^2{\sigma^2}(p+1)(\tau-1)}{p^2 B}\left[\frac{ \tau(\tau-1)(2\tau-1)}{6a^2}+4\left(T-1\right)-3\tau\right]\nonumber\\
        &\leq \frac{\mu a^3}{4}\mathbb{E}\big[F(\bar{\mathbf{x}}^{(0)})-F^*\big]+\frac{\kappa\sigma^2T(T+2a)}{pB}+\frac{16\kappa^2{\sigma^2}(p+1)(\tau-1)}{p^2 B}\left[\frac{ \tau(\tau-1)(2\tau-1)}{6a^2}+4\left(T-1\right)-3\tau\right], \label{eq:v-final}
\end{align}
Finally, from (\ref{eq:v-final}) we conclude:
\begin{align}
   \mathbb{E}\big[F(\bar{\mathbf{x}}^{(t)})-F^*\big]&\leq \frac{a^3}{(T+a)^3}\mathbb{E}\big[F(\bar{\mathbf{x}}^{(0)})-F^*\big]+\frac{4\kappa\sigma^2T(T+2a)}{\mu pB(T+a)^3}\nonumber\\
   &\qquad+\frac{64\kappa^2{\sigma^2}(p+1)(\tau-1)}{\mu p^2 B(T+a)^3}\left[\frac{ \tau(\tau-1)(2\tau-1)}{6a^2}+4\left(T-1\right)-3\tau\right],
\end{align}

\section{Proof of Theorem \ref{thm:ADALUPA}}
Theorem \ref{thm:ADALUPA} can be seen as an extension of Theorem \ref{thm:one-shot}, and for the purpose of the proof and letting $t_c=\floor{\frac{t}{\tau_i}}\tau_i$ where $T=\sum_{i=1}^E\tau_i$, we only need following Lemmas: \begin{lemma}\label{lemma:dif-b-u-x-tau}
Under Assumptions \ref{Ass:1} to \ref{Ass:3} we have:
\begin{align}
      \mathbb{E}\sum_{j=1}^p\|\bar{\mathbf{x}}^{(t)}-\mathbf{x}_j^{(t)}\|^2&\leq2(\frac{p+1}{p})\Big([C+\tau_i]\sum_{k=t_c}^{t-1}\eta_{k}^2\sum_{j=1}^p\|\
\nabla{F}(\mathbf{x}_j^{(k)})\|^2+\sum_{k=t_c+1}^{t-1}\eta_{k}^2\frac{\sigma^2}{B}\Big),
\end{align}
\end{lemma}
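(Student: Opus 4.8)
The plan is to mirror, essentially verbatim, the three-step argument used to prove Lemma~\ref{lemma:dif-b-u-x}, with the fixed period $\tau$ replaced throughout by the period length $\tau_i$ active in the block that contains iteration $t$, so that $t_c=\floor{\frac{t}{\tau_i}}\tau_i$. First I would observe that all local models coincide right after the last synchronization, $\bar{\mathbf{x}}^{(t_c+1)}=\mathbf{x}_j^{(t_c+1)}$ for every $j$, and then unroll the local update rule~(\ref{eq:risgd-up}) over the current block: writing $t=t_c+r$ with $1\le r\le\tau_i$, one gets $\mathbf{x}_l^{(t)}=\bar{\mathbf{x}}^{(t_c+1)}-\sum_{k=t_c+1}^{t-1}\eta_k\tilde{\mathbf{g}}_l^{(k)}$ and, averaging over $j$, $\bar{\mathbf{x}}^{(t)}=\bar{\mathbf{x}}^{(t_c+1)}-\frac1p\sum_{j=1}^p\sum_{k=t_c+1}^{t-1}\eta_k\tilde{\mathbf{g}}_j^{(k)}$. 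Subtracting and applying $\|\mathbf{a}-\mathbf{b}\|^2\le2(\|\mathbf{a}\|^2+\|\mathbf{b}\|^2)$ reduces the task to bounding $\mathbb{E}\|\sum_k\eta_k\tilde{\mathbf{g}}_l^{(k)}\|^2$ and $\mathbb{E}\|\frac1p\sum_{j,k}\eta_k\tilde{\mathbf{g}}_j^{(k)}\|^2$, where every sum ranges only over the current block.

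Second, I would split each of those squared norms via $\mathbb{E}\|X\|^2=\mathbb{E}\|X-\mathbb{E}X\|^2+\|\mathbb{E}X\|^2$ and use Assumption~\ref{Ass:1} to identify $\mathbb{E}\tilde{\mathbf{g}}_j^{(k)}=\mathbf{g}_j^{(k)}$. Since the mini-batches $\xi_j^{(k)}$ are drawn independently across both the machine index $j$ and the time $k$, all cross terms $\langle\eta_w(\tilde{\mathbf{g}}_l^{(w)}-\mathbf{g}_l^{(w)}),\eta_z(\tilde{\mathbf{g}}_v^{(z)}-\mathbf{g}_v^{(z)})\rangle$ with $(w,l)\neq(z,v)$ vanish in expectation, leaving only diagonal variance terms $\sum_k\eta_k^2\mathbb{E}\|\tilde{\mathbf{g}}_j^{(k)}-\mathbf{g}_j^{(k)}\|^2$; for the ``mean'' pieces I would apply $\|\sum_{k=1}^r\mathbf{a}_k\|^2\le r\sum_{k=1}^r\|\mathbf{a}_k\|^2$ with $r\le\tau_i$, which is exactly where the factor $\tau_i$ (rather than $\tau$) enters. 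Third, I would invoke Assumption~\ref{Ass:2} to replace each $\mathbb{E}\|\tilde{\mathbf{g}}_j^{(k)}-\mathbf{g}_j^{(k)}\|^2$ by $C\|\mathbf{g}_j^{(k)}\|^2+\sigma^2/B$, collect the $C$-weighted and $\tau_i$-weighted gradient-norm contributions into a single factor $(\frac{p+1}{p})(C+\tau_i)$ after summing over $l$, collect the noise contributions into $(\frac{p+1}{p})\sigma^2/B$, and use $\|\mathbf{g}_j^{(k)}\|=\|\nabla F(\mathbf{x}_j^{(k)})\|$.

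The main obstacle is not any single inequality — each is identical to one already used for Lemma~\ref{lemma:dif-b-u-x} — but the index bookkeeping: one must state the bound uniformly in the block index $i$ (reading $\tau_i$ as the period length active at time $t$, i.e. $\tau_i=\tau_{\floor{t/\tau_i}+1}$), handle the trivial boundary case $r=1$ where the left-hand side is zero, and correctly absorb the boundary summand $\eta_{t-1}^2$ as in~(\ref{eq:local-sum-time}). Here the monotonicity of the step size is essential: because $\eta_k$ is non-increasing one may bound $\eta_{t_c+k}\le\eta_{t_c+1}$ in the noise term and re-index so that the gradient-norm sum runs over $k=t_c,\dots,t-1$ and the noise sum over $k=t_c+1,\dots,t-1$, yielding precisely the claimed inequality. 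Once the ranges are pinned down this way, the remaining steps go through unchanged, and Theorem~\ref{thm:ADALUPA} then follows by feeding this block-wise bound into the same telescoping recursion as in the proof of Theorem~\ref{thm:one-shot}.
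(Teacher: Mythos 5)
Your proposal is correct and follows essentially the same route as the paper, which does not write out a separate argument for this lemma but simply reuses the three-step proof of Lemma~\ref{lemma:dif-b-u-x} with $\tau$ replaced by the active period $\tau_i$ and $t_c=\floor{t/\tau_i}\tau_i$. If anything, you handle the block-indexing and boundary cases more explicitly than the paper does.
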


\begin{lemma}
\label{lemma:choice-of-learning-rate-adaptive}
Under assumptions \ref{Ass:1} to \ref{Ass:3}, if we choose the learning rate as $\eta_t=\frac{4}{\mu(t+c)}$ inequality (\ref{eq:final-ineq-to-sum}) reduces to
\begin{align}
    \mathbb{E}[F(\bar{\mathbf{x}}^{(t+1)})]-F^*&\leq \Delta_t\mathbb{E}[F(\bar{\mathbf{x}}^{(t)})-F^*]+{A}_{t},\label{eq:denoised-iterat}
\end{align}
for all iterations and $c=\alpha\max_i\tau_i+4$ and \begin{align}
      \alpha&\geq \max_{\tau_i}\left[ \max\left(\frac{4}{\ln\left[ \left(\frac{p\tau_i}{32(p+1)\kappa^2\left(C+\tau_i\right)}\right)D\left(\frac{4(L(\frac{C}{p} + 1)-\mu)}{\mu \tau}+D\right)\right] },\frac{4(L(\frac{C}{p} + 1)-\mu)}{\mu \tau}+D\right)\right]\nonumber,
\end{align}
\end{lemma}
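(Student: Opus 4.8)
The plan is to mirror the proof of Lemma~\ref{lemma:choice-of-learning-rate}, but to run the argument separately within each communication period. Fix a period index $i$ and let $t_c=\floor{t/\tau_i}\tau_i$ be its starting time. On the window $[t_c+1,t_c+\tau_i]$ the per-step recursion (\ref{eq:final-ineq-to-sum}) and its recursive unrolling from Lemma~\ref{lemm:tassbih} hold verbatim, the only change being that the displacement bound now comes from Lemma~\ref{lemma:dif-b-u-x-tau}, so every occurrence of the fixed period length $\tau$ is replaced by the current $\tau_i$; in particular $B_k=\frac{\eta_k L^2(p+1)}{p^2}(C+\tau_i)$ on this window. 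Exactly as in the fixed-$\tau$ case, reducing (\ref{eq:final-ineq-to-sum}) to the denoised form (\ref{eq:denoised-iterat}) is equivalent to showing that the coefficients of all the gradient-norm terms $d^{(k)}$ appearing in the unrolled bound (\ref{eq:tabissh1}) are non-positive.

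First I would invoke the three monotonicity facts used for Lemma~\ref{lemma:choice-of-learning-rate} ($\eta_t$ decreasing, $\Delta_t$ increasing, and $B_t$ decreasing in $t$). These collapse the entire family of coefficient constraints on the window into a single binding inequality, namely that $\eta_0$ be at most $\big(L(\frac{C}{p}+1)+\frac{2p}{\Delta_1^{\tau_i-1}}B_1(\tau_i-1)\big)^{-1}$. Substituting $\eta_0=\frac{4}{\mu c}$ together with the explicit forms of $\Delta_1$ and $B_1$ (now carrying $C+\tau_i$) turns this into the per-period requirement
\begin{equation*}
L\Big(\tfrac{C}{p}+1\Big)+\Big(\tfrac{p+1}{p}\Big)\tfrac{8L^2}{\mu}(C+\tau_i)\Big(\tfrac{\tau_i-1}{c+1}\Big)\Big(\tfrac{c+1}{c-3}\Big)^{\tau_i-1}\ \le\ \tfrac{\mu c}{4},
\end{equation*}
which is the adaptive analogue of (\ref{eq:-2}) with the scalar $a$ replaced by the \emph{global} anchor $c=\alpha\max_{1\le j\le E}\tau_j+4$.

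The one genuinely new step, and the main obstacle, is precisely that the learning-rate anchor $c$ is global and uses $\max_j\tau_j$, whereas the period length appearing in the inequality is the possibly smaller $\tau_i$; in the fixed-$\tau$ proof the anchor $a=\alpha\tau+4$ was perfectly matched to the single period length. The key observation that closes this gap is that $c\ge\alpha\tau_i+4$, so enlarging the anchor only shrinks every $\eta_t$ (making the left-hand side smaller) while enlarging the right-hand side $\frac{\mu c}{4}$. Concretely I would re-derive the two bounds (\ref{eq:-3}) and (\ref{eq:-4}) with $c$ in place of $a$: using $\max_j\tau_j\ge\tau_i$ gives $\frac{\tau_i-1}{c+1}=\frac{\tau_i-1}{\alpha\max_j\tau_j+5}\le\frac{\tau_i-1}{\alpha\tau_i+5}\le\frac1\alpha$, and likewise $\big(\frac{c+1}{c-3}\big)^{\tau_i-1}=\big(1+\frac{4}{\alpha\max_j\tau_j+1}\big)^{\tau_i-1}\le\big(1+\frac{4}{\alpha\tau_i+1}\big)^{\tau_i-1}\le e^{4/\alpha}$, exactly as in the fixed-$\tau$ argument.

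Finally, plugging these two estimates into the per-period inequality reduces it to the same nonlinear condition on $\alpha$ as (\ref{eq:-12}), now with $\tau_i$ playing the role of $\tau$; the stated hypothesis that $\alpha$ dominates the maximum over all $\tau_i$ of the single-period bound guarantees this condition holds for \emph{every} period simultaneously, yielding LHS $\le\frac{\mu(\alpha\tau_i+4)}{4}\le\frac{\mu c}{4}$. Since each period therefore satisfies all of its coefficient-cancellation constraints, every $d^{(k)}$ term drops out of (\ref{eq:tabissh1}) on every window, and (\ref{eq:final-ineq-to-sum}) collapses to (\ref{eq:denoised-iterat}) for all $t$, which completes the proof.
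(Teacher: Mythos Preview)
Your proposal is correct and follows exactly the approach the paper intends: the paper merely states Lemma~\ref{lemma:choice-of-learning-rate-adaptive} as the adaptive analogue of Lemma~\ref{lemma:choice-of-learning-rate} without supplying a separate proof, so the argument is implicitly the one you wrote out. You correctly identify and handle the only nontrivial adaptation, namely that the global anchor $c=\alpha\max_j\tau_j+4$ may exceed the period-specific $\alpha\tau_i+4$, and your observation that this only relaxes both sides of the binding inequality is precisely what is needed.
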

Finally, for the rest of the proof we only need to reconsider the last term as follows:

\begin{align}
    \sum_{k=0}^{T-1}(k+c)^3\eta_k\eta^2_{(t_c)}(\tau_{t_c}-1)&=\sum_{i=1}^E(\tau_i-1)\sum_{k=1}^{\tau_i-1}(k+c)^3\frac{4}{\mu(k+c)}\Big(\frac{4}{\mu(\floor{\frac{k}{\tau_i}}\tau_i+c)}\Big)^2\nonumber\\
    &\leq \frac{64}{\mu^3}\sum_{i=1}^E(\tau_i-1)\left[\frac{ \tau_i(\tau_i-1)(2\tau_i-1)}{6c^2}+\tau_i\right],
\end{align}
The rest of the proof is similar to the proof of Theorem \ref{thm:one-shot}.

\section{Convergence analysis for full-batch GD}\label{appendix:d}
Here we show that the $O(1/T^3)$ rate is indeed achievable for GD using the similar learning rate utilized in convergence analysis of LUPA. To this end,  consider the centralized GD algorithm ($p=1$) with learning rate $\eta_t=\frac{a}{\mu(t+b)}$. From smoothness assumption we have:
\begin{align}
    f(\mathbf{x}^{(t+1)})-f(\mathbf{x}^{(t)})&\leq \left\langle \mathbf{g}^{(t)},-\eta_t\mathbf{g}^{(t)}\right\rangle+\frac{L^2}{2}\left\|\eta_t\mathbf{g}^{(t)}\right\|^2\nonumber\\
    &= -\eta_t\left\| \mathbf{g}^{(t)}\right\|^2+\frac{\eta^2_tL^2}{2}\left\|\mathbf{g}^{(t)}\right\|^2\nonumber\\
    &=-\left(\eta_t-\frac{\eta^2_tL^2}{2}\right)\left\|\mathbf{g}^{(t)}\right\|^2\nonumber\\
    &\stackrel{\ding{192}}{\leq}-2\mu\left(\eta_t-\frac{\eta^2_tL^2}{2}\right)\left(f(\mathbf{x}^{(t)}-f(\mathbf{x}^{(*)})\right)
\end{align}
where \ding{192}  follows from the PL condition. 

The above inequality immediately  leads to the following:
\begin{align}
        f(\mathbf{x}^{(t+1)})-f(\mathbf{x}^{(*)})&\leq \left(1- 2\mu\left(\eta_t-\frac{\eta^2_tL^2}{2}\right)\right)\left(f(\mathbf{x}^{(t)})-f(\mathbf{x}^{(*)})\right)\nonumber\\
        &= \left(1- 2\mu\eta_t+{\mu\eta^2_tL^2}\right)\left(f(\mathbf{x}^{(t)})-f(\mathbf{x}^{(*)})\right)\nonumber\\
        &=\left(1- \frac{2a}{t+b}+\frac{a^2L^2}{\mu(t+b)^2}\right)\left(f(\mathbf{x}^{(t)})-f(\mathbf{x}^{(*)})\right)\nonumber\\
        &=\left(\frac{(t+b)^2-2 a(t+b)+a^2\frac{L^2}{\mu}}{(t+b)^2}\right)\left(f(\mathbf{x}^{(t)})-f(\mathbf{x}^{(*)})\right)\label{eq:1}
        \end{align}
Under conditions $2a> 3$, $b> \max\left(\frac{-\left(3-\frac{a^2L^2}{\mu}\right)+\left|3-\frac{a^2L^2}{\mu}\right|}{2(2a-3)}+\frac{1}{\sqrt{2a-3}},1\right)$ and $z_t\triangleq (t+b)^2$ we have:
\begin{align}
    \left(\frac{(t+b)^2-2 a(t+b)+a^2\frac{L^2}{\mu}}{(t+b)^2}\right)\frac{z_t}{\eta_t} &=\left({(t+b)^2-2 a(t+b)+a^2\frac{L^2}{\mu}}\right)\frac{\mu(t+b)}{a}\nonumber\\ 
    &\leq \frac{\mu(t+b-1)^3}{a}\nonumber\\
    &=\frac{z_{t-1}}{\eta_{t-1}}
\end{align}

Next, multiplying both side of Eq.~(\ref{eq:1}) with $\frac{(t+b)^2}{\eta_t}$ we obtain:

\begin{align}
        \frac{(t+b)^2}{\eta_t}[f(\mathbf{x}^{(t+1)})-f(\mathbf{x}^{(*)})]&\leq \left(\frac{(t+b)^2-2 a(t+b)+a^2\frac{L^2}{\mu}}{(t+b)^2}\right)\frac{(t+b)^2}{\eta_t}\left(f(\mathbf{x}^{(t)})-f(\mathbf{x}^{(*)})\right)\nonumber\\
        &\leq \frac{z_{t-1}}{\eta_{t-1}}\left(f(\mathbf{x}^{(t)})-f(\mathbf{x}^{(*)})\right)\nonumber\\
        &\leq \frac{z_{0}}{\eta_{0}}\left(f(\mathbf{x}^{(1)})-f(\mathbf{x}^{(*)})\right)\nonumber\\
        &\leq  \frac{z_{0}}{\eta_{0}}\left(\frac{b^2-2ab+\frac{a^2L^2}{\mu}}{b^2}\right)\left(f(\mathbf{x}^{(0)})-f(\mathbf{x}^{(*)})\right)\nonumber\\
        &\stackrel{\text{\ding{192}}}{\leq} \frac{z_{-1}}{\eta_{-1}}\left(f(\mathbf{x}^{(0)})-f(\mathbf{x}^{(*)})\right) \label{eq:-1}
        \end{align}

where \ding{192} holds due to choice of $2a\geq 3$, $b> \max\left(\frac{-\left(3-\frac{a^2L^2}{\mu}\right)+\left|3-\frac{a^2L^2}{\mu}\right|}{2(2a-3)}+\frac{1}{\sqrt{2a-3}},1\right)$.

Finally, Eq.~(\ref{eq:-1}) leads to 
\begin{align}
f(\mathbf{x}^{(T)})-f(\mathbf{x}^{(*)})&\leq \frac{\left({b-1}\right)^3}{ T^3}\left(f(\mathbf{x}^{(0)})-f(\mathbf{x}^{(*)})\right)
\end{align}

\end{document}